\def\eqref#1{equation~\ref{#1}}
\def\1{\bm{1}}
\def\vb{{\bm{b}}}
\def\vv{{\bm{v}}}
\def\vx{{\bm{x}}}
\def\mA{{\bm{A}}}
\def\mB{{\bm{B}}}
\def\mD{{\bm{D}}}
\def\mI{{\bm{I}}}
\def\mL{{\bm{L}}}
\def\mP{{\bm{P}}}
\def\mQ{{\bm{Q}}}
\def\mV{{\bm{V}}}
\def\mW{{\bm{W}}}
\def\mX{{\bm{X}}}
\def\mZ{{\bm{Z}}}
\DeclareMathAlphabet{\mathsfit}{\encodingdefault}{\sfdefault}{m}{sl}
\SetMathAlphabet{\mathsfit}{bold}{\encodingdefault}{\sfdefault}{bx}{n}
\newcommand{\defeq}{\vcentcolon=}
\newcommand{\norm}[1]{\left\lVert #1 \right\rVert}
\newcommand{\frnorm}[1]{\left\lVert #1 \right\rVert_\mathsf{F}}   
\DeclareMathOperator*{\diag}{diag}
\DeclareMathOperator*{\argmin}{arg\,min}
\newtheorem{definition}{Definition}[section]
\newtheorem{remark}{Remark}[section]
\newtheorem{lemma}{Lemma}[section]
\newtheorem{proposition}{Proposition}[section]
\newtheorem{assumption}{Assumption}[section]
\newlist{todolist}{itemize}{2}
\setlist[todolist]{label=$\square$}
\newcommand{\vbf}[1]{\bm{#1}}
\newcommand{\pan}[1]{{\textcolor{red}{P: #1}}}
\newcommand{\josh}[1]{\textcolor{purple}{\textbf{JR:}{ #1}}}
\newcommand{\holder}{H$\ddot{\text{o}}$lder }
\definecolor{darkblue}{rgb}{0.0,0.0,0.65}
\definecolor{darkred}{rgb}{0.68,0.05,0.0}
\definecolor{darkgreen}{rgb}{0.0,0.29,0.29}
\definecolor{darkpurple}{rgb}{0.47,0.09,0.29}
\definecolor{keywordcolor}{RGB}{0,0,128}
\definecolor{commentcolor}{RGB}{0,128,0}
\definecolor{stringcolor}{RGB}{163,21,21}
\title{On the Stability of Expressive Positional Encodings for Graphs}
\author{Yinan Huang\thanks{equal contribution}$^{*, 1}$, William Lu$^{*, 2}$, Joshua Robinson$^{3}$, Yu Yang$^{4}$, Muhan Zhang$^5$, \\ \textbf{Stefanie Jegelka$^6$, Pan Li$^1$}\\ $^1$Georgia Institute of Technology, $^2$Purdue University, $^3$Stanford University, $^4$Tongji University, \\ $^5$Peking University, $^6$ MIT CSAIL\\
{\fontfamily{lmr}\selectfont \{yhuang903, panli\}@gatech.edu, lu909@purdue.edu,}\\{ \fontfamily{lmr}\selectfont joshrob@cs.stanford.edu, yangyu0879@tongji.edu.cn, muhan@pku.edu.cn, stefje@mit.edu}
}
\begin{document}

\maketitle


\vspace{-1em}
\begin{abstract}
Designing effective positional encodings for graphs is key to building powerful graph transformers and enhancing message-passing graph neural networks. Although widespread, using Laplacian eigenvectors as positional encodings faces two fundamental challenges: (1) \emph{Non-uniqueness}: there are many different eigendecompositions of the same Laplacian, and (2) \emph{Instability}: small perturbations to the Laplacian could result in completely different eigenspaces, leading to unpredictable changes in positional encoding. 
 Despite many attempts to address non-uniqueness, most methods overlook stability, leading to poor generalization on unseen graph structures. We identify the cause of instability to be a ``hard partition'' of eigenspaces. Hence, we introduce Stable and Expressive Positional Encodings (SPE), an architecture for processing eigenvectors that uses eigenvalues to ``softly partition'' eigenspaces. SPE is the first architecture that is (1) provably stable, and (2) universally expressive for basis invariant functions whilst respecting all symmetries of eigenvectors. Besides guaranteed stability, we prove that SPE is at least as expressive as existing methods, and highly capable of counting graph structures. Finally, we evaluate the effectiveness of our method on molecular property prediction, and out-of-distribution generalization tasks, finding improved generalization compared to existing positional encoding methods. Our code is available at \url{https://github.com/Graph-COM/SPE}. 

\end{abstract}


\section{Introduction}

Deep learning models for graph-structured data such as Graph Neural Networks (GNNs) and Graph Transformers have been arguably one of the most popular machine learning models on graphs, and have achieved remarkable results for numerous applications in drug discovery, computational chemistry, and social network analysis, etc.\ 
\citep{kipf2016semi, bronstein2017geometric, duvenaud2015convolutional, stokes2020deep, zhang2018link,ying2021transformers,rampavsek2022recipe}. However, there is a common concern about these models: the limited \emph{expressive power}. For example, it is known that message-passing GNNs are at most expressive as the Weisfeiler-Leman test~\citep{xu2018powerful, morris2019weisfeiler} in distinguishing non-isomorphic graphs, and in general cannot even approximate common functions such as the number of certain subgraph patterns~\citep{chen2020can, arvind2020weisfeiler, tahmasebi2020counting, huang2022boosting}. 
These limitations could significantly restrict model performance, e.g., since graph substructures can be closely related to the target function in chemistry, biology and social network analysis~\citep{girvan2002community, granovetter1983strength, koyuturk2004efficient, jiang2010finding, bouritsas2022improving}.

To alleviate expressivity limitations, there has been considerable interest in designing effective positional encodings for graphs~\citep{you2019position, dwivedi2021generalization, peg}. Generalized from the positional encodings of 1-D sequences for Transformers~\citep{vaswani2017attention}, the idea is to endow nodes with information about their relative position within the graph and thus make them more distinguishable. Many promising graph positional encodings use the eigenvalue decomposition of the graph Laplacian \citep{dwivedi2020benchmarking,kreuzer2021rethinking}. The eigenvalue decomposition is a strong candidate because the Laplacian fully describes the adjacency structure of a graph, and there is a deep understanding of how these eigenvectors and eigenvalues inherit this information  \citep{chung1997spectral}. However, eigenvectors have special structures that must be taken into consideration when designing architectures that process eigenvectors.


Firstly, eigenvectors are \emph{not} unique: if $\vv$ is an eigenvector, then so is $-\vv$. Furthermore, when there are multiplicities of eigenvalues then there are many more symmetries, since any orthogonal change of basis of the corresponding eigenvectors yields the same Laplacian. Because of this basis ambiguity, neural networks that process eigenvectors should be \emph{basis invariant}: applying basis transformations to input eigenvectors should not change the output of the neural network. This avoids the pathological scenario where different eigendecompositions of the same Laplacian produce different model predictions. Several prior works have explored sign and basis symmetries of eigenvectors. For example, \citet{dwivedi2021generalization, kreuzer2021rethinking} randomly flip the sign of eigenvectors during training so that the resulting model is robust to sign transformation. \citet{signnet} instead design new neural architectures that are invariant to sign flipping (SignNet) or basis transformation (BasisNet). Although these basis invariant methods have the right symmetries, they do not yet account for the fact that two Laplacians that are similar but distinct may produce completely different eigenspaces.

This brings us to another important consideration, that of \emph{stability}. Small perturbations to the input Laplacian should only induce a limited change of final positional encodings. This ``small change of Laplacians, small change of positional encodings" actually generalizes the previous concept of basis invariance and proposes a stronger requirement on the networks. But this stability (or continuity) requirement is a great challenge for graphs, because small perturbations can produce completely different eigenvectors if some eigenvalues are close~(\citet{peg}, Lemma 3.4). Since the neural networks process eigenvectors, not the Laplacian matrix itself, they run the risk of being highly discontinuous with respect to the input matrix, leading to an inability to generalize to new graph structures and a lack of robustness to any noise in the input graph's adjacency. In contrast, stable models enjoy many benefits such as adversarial robustness~\citep{cisse2017parseval, tsuzuku2018lipschitz} and provable generalization~\citep{sokolic2017robust}.

Unfortunately, existing positional encoding methods are not stable. Methods that only focus on sign invariance~\citep{dwivedi2021generalization, kreuzer2021rethinking, signnet}, for instance, are not guaranteed to satisfy ``same Laplacian, same positional encodings" if multiplicity of eigenvalues exists. Basis invariant methods such as BasisNet are unstable because they apply different neural networks to different eigensubspaces. In a high-level view, they perform a \emph{hard partitioning} of eigenspaces and treat each chunk separately (see Appendix \ref{why-unstable} for a detailed discussion).
 The discontinuous nature of partitioning makes them highly sensitive to perturbations of the Laplacian. The hard partition also requires fixed eigendecomposition thus unsuitable for graph-level tasks. On the other hand, \citet{peg} proposes a provably stable positional encoding. But, to achieve stability, it completely ignores the distinctness of each eigensubspaces and processes the merged eigenspaces homogeneously. Consequently, it loses expressive power and has, e.g., a subpar performance on molecular graph regression tasks~\citep{gps}. 



\textbf{Main contributions.}  
In this work, we present Stable and Expressive Positional Encodings (SPE). The key insight is to perform a \textbf{soft and learnable} ``partition" of eigensubspaces in a \textbf{eigenvalue dependent} way, hereby achieving both stability (from the soft partition) and expressivity (from dependency on both eigenvalues and eigenvectors). Specifically: 
\begin{itemize}[leftmargin=15pt, itemsep=2pt, parsep=2pt, partopsep=1pt, topsep=-3pt]
    \item 
    SPE is provably stable. We show that the network sensitivity w.r.t. the input Laplacian is determined by the gap between the $d$-th and $(d+1)$-th smallest eigenvalues if using the first $d$ eigenvectors and eigenvalues. This implies our method is stable regardless of how the used $d$ eigenvectors and eigenvalues change. 

    \item SPE can universally approximate basis invariant functions and is as least expressive as existing methods in distinguishing graphs. We also prove its capability in counting graph substructures. 
    
    \item 
    We empirically illustrate 
    that introducing more stability helps generalize better but weakens the expressive power. Besides, on the molecule graph prediction datasets ZINC and Alchemy, our method significantly outperforms other positional encoding methods. On DrugOOD~\citep{ji2022drugood}, a ligand-based affinity prediction task with domain shifts, our method demonstrates a clear and constant improvement over other unstable positional encodings. All these validate the effectiveness of our stable and expressive method.
\end{itemize}

\section{Preliminaries}

\textbf{Notation.} We always use $n$ for the number of nodes in a graph, $d \le n$ for the number of eigenvectors and eigenvalues chosen, and $p$ for the dimension of the final positional encoding for each node. We use $\norm{\cdot}$ to denote the $L2$ norm of vectors and matrices, and $\frnorm{\cdot}$ for the Frobenius norm of matrices. 

\textbf{Graphs and Laplacian Encodings.} Denote an undirected graph with $n$ nodes by $\mathcal{G}=(\mA, \mX)$, where $\mA\in\mathbb{R}^{n\times n}$ is the adjacency matrix and $\mX\in\mathbb{R}^{n\times p}$ is the node feature matrix. Let $\mD=\text{diag}([\sum_{j=1}^nA_{i,j}]_{i=1}^n)$ be the diagonal degree matrix. The normalized Laplacian matrix of $\mathcal{G}$ is a positive semi-definite matrix defined by $\mL=\mI-\mD^{-1/2}\mA\mD^{-1/2}$. Its eigenvalue decomposition $\mL=\mV\text{diag}(\vbf{\lambda})\mV^{\top}$ returns eigenvectors $\mV$ and eigenvalues $\vbf{\lambda}$, which we denote by $\text{EVD}(\mL)=(\mV,\vbf{\lambda})$. In practice we may only use the smallest $d \le n$ eigenvalues and eigenvectors, so abusing notation slightly, we also denote the smallest $d$ eigenvalues by $\vbf{\lambda}\in\mathbb{R}^d$ and the corresponding $d$ eigenvectors by $\mV\in\mathbb{R}^{n\times d}$. A Laplacian positional encoding is a function that produces node embeddings $\mZ\in\mathbb{R}^{n\times p}$ given $(\mV,\vbf{\lambda})\in\mathbb{R}^{n\times d}\times \mathbb{R}^d$ as input.

\textbf{Basis invariance.} Given eigenvalues $\vbf{\lambda}\in\mathbb{R}^d$, if eigenvalue $\lambda_i$ has multiplicity $d_i$, then the corresponding eigenvectors $\mV_i \in \mathbb{R}^{n\times d_i}$ form a $d_i$-dimensional eigenspace.  
A vital symmetry of eigenvectors is the infinitely many choices of basis eigenvectors describing the same underlying eigenspace. Concretely, if $\mV_i$ is a basis for the eigenspace of $\lambda_i$, then $\mV_i\mQ_i$ is, too, for any orthogonal matrix $\mQ_i \in O(d_i)$. The symmetries of each eigenspace can be collected together to describe the overall symmetries of $\mV$ in terms of the direct sum group $O(\vbf{\lambda})\defeq \oplus_iO(d_i)=\{\oplus_{i}\mQ_i\in\mathbb{R}^{\sum_id_i\times \sum_id_i}:\mQ_i\in O(d_i)\}$, i.e., block diagonal matrices with $i$th block belonging to $O(d_i)$. Namely, for any $\mQ \in O(\vbf{\lambda})$, both $(\mV, \vbf{\lambda})$ and $(\mV\mQ, \vbf{\lambda})$ are eigendecompositions of the same underlying matrix. When designing a model $f$ that takes eigenvectors as input, we want $f$ to be \emph{basis invariant}: $f(\mV\mQ,\vbf{\lambda})=f(\mV,\vbf{\lambda})$ for any $(\mV, \vbf{\lambda})\in\mathbb{R}^{n\times d}\times \mathbb{R}^d$, and any $\mQ\in O(\vbf{\lambda})$. 

\textbf{Permutation equivariance.} Let $\Pi(n)=\{\mP\in\{0, 1\}^{n\times n}: \mP\mP^{\top}=\mI\}$ be the permutation matrices of $n$ elements. A function $f:\mathbb{R}^n\to\mathbb{R}^n$ is called permutation equivariant, if for any $\vbf{x}\in\mathbb{R}^n$ and any permutation $\mP\in \Pi(n)$, it satisfies $f(\mP\vbf{x})=\mP f(\vbf{x})$. Similarly, $f:\mathbb{R}^{n\times n}\to\mathbb{R}^n$ is said to be permutation equivariant if satisfying $f(\mP\mX\mP^{\top})=\mP f(\mX)$.



\section{A Provably Stable and Expressive PE}


In this section we introduce our model \emph{Stable and Expressive Positional Encodings} (SPE). SPE is both stable and a maximally expressive basis invariant architecture for processing eigenvector data, such as Laplacian eigenvectors. We begin with formally defining the stability of a positional encoding. Then we describe our SPE model, and analyze its stability. In the final two subsections we show that higher stability leads to improved out-of-distribution generalization, and show that SPE is a universally expressive basis invariant architecture.

\subsection{Stable Positional Encodings}


Stability intuitively means that a small input perturbation yields a small change in the output. For eigenvector-based positional encodings, the perturbation is to the Laplacian matrix, and should result in a small change of node-level positional embeddings. 
\begin{definition}[PE Stability] 
   A PE method $\mathrm{PE}: \mathbb{R}^{n\times d}\times \mathbb{R}^d\to\mathbb{R}^{n\times p}$ is called stable, if there exist constants $c, C>0$,  such that for any Laplacian $\mL, \mL^{\prime}$, 
   \begin{equation}
   \frnorm{\mathrm{PE}(\mathrm{EVD}(\mL)) - \mP_*\mathrm{PE}(\mathrm{EVD}(\mL^{\prime}))}\le C \cdot \frnorm{\mL-\mP_*\mL^{\prime}\mP_*^{\top}}^{c},
\end{equation}
where $\mP_*=\argmin_{\mP\in \Pi(n)}\frnorm{\mL-\mP\mL^{\prime}\mP^{\top}}$ is the permutation matrix matching two Laplacians. 
\end{definition}
It is worth noting that here we adopt a slightly generalized definition of typical stability via Lipschitz continuity ($c=1$). This definition via \holder continuity describes a more comprehensive stability behavior of PE methods, while retaining the essential idea of stability. 

\begin{remark}[Stability implies permutation equivariance]
    Note that a PE method is permutation equivariant if it is stable: simply let $\mL=\mP\mL^{\prime}\mP^{\top}$ for some $\mP\in\Pi(n)$ and we obtain the desired permutation equivariance $\mathrm{PE}(\mathrm{EVD}(\mP\mL\mP^{\top}))=\mP\cdot\mathrm{PE}(\mathrm{EVD}(\mL))$. 
\end{remark}


Stability is hard to achieve due to the instability of eigenvalue decomposition---a small perturbation of the Laplacian can produce completely different eigenvectors~(\citet{peg}, Lemma 3.4). Since positional encoding models process the eigenvectors (and eigenvalues), they naturally inherit this instability with respect to the input matrix.
Indeed, as mentioned above, many existing positional encodings are not stable. The main issue is that they partition the eigenvectors by eigenvalue, which leads to instabilities. See Appendix \ref{why-unstable} for a detailed discussion.




\subsection{SPE: A positional encoding with guaranteed stability}
\begin{figure}[t]
    \centering
    \includegraphics[width=0.8\linewidth]{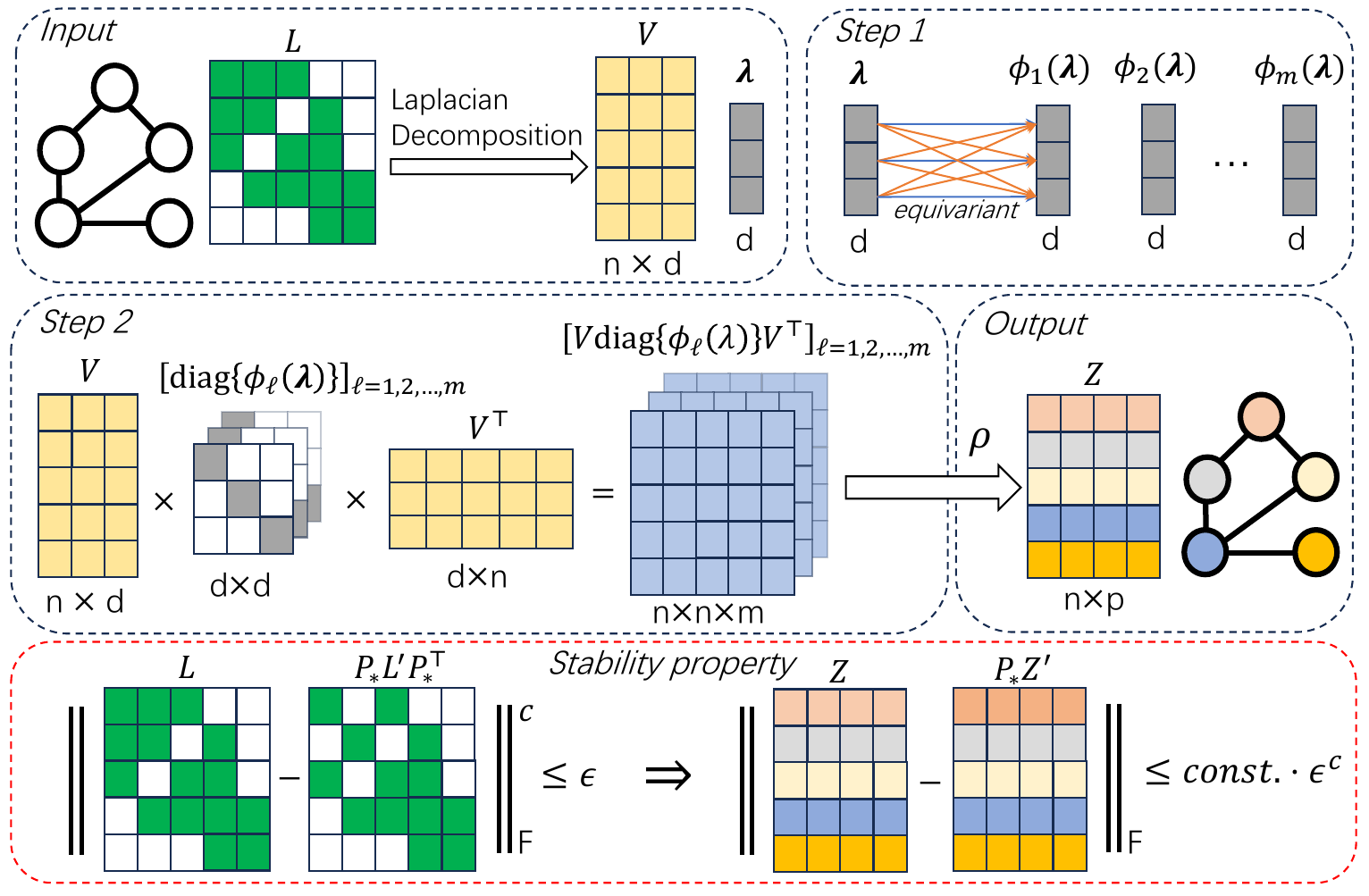}
    \caption{Illustration of the SPE architecture (first two rows) and its stability property (last row). The input graph is first decomposed into eigenvectors $\mV$ and eigenvalues $\vbf{\lambda}$. In step 1, a permutation equivariant $\phi_{\ell}$ act on $\vbf{\lambda}$ to produce another vector $\phi_{\ell}(\vbf{\lambda})$. In step 2, we compute $\mV\text{diag}\{\phi_{\ell}(\vbf{\lambda})\}\mV^{\top}$ for each $\phi_{\ell}$ and  concatenate the results into a tensor. This tensor is input into a permutation equivariant network $\rho$ to produce final node positional encodings.
    }
    \label{fig:spe}
\end{figure}


To achieve stability, the key insight is to avoid a hard partition of eigensubspaces. Simultaneously, we should fully utilize the information in the eigenvalues for strong expressive power. Therefore, we propose to do a ``soft partitioning" of eigenspaces by leveraging eigenvalues. Instead of treating each eigensubspace independently, we apply a weighted sum of eigenvectors in an \textbf{eigenvalue dependent} way. If done carefully, this can ensure that as two distinct eigenvalues converge---these are exactly the degenerate points creating instability---the way their respective eigenvectors are processed becomes more similar. This means that if two eigenvectors are ``swapped'', as happens at degenerate points, the model output does not change much. 
The resulting method is (illustrated in Figure \ref{fig:spe}):
\begin{equation}
    \textbf{SPE}:\hspace{5pt} \mathrm{SPE}(\mV,\vbf{\lambda})= \rho\big(\mV\mathrm{diag}(\,\phi_1(\vbf{\lambda}))\mV^{\top}, \mV\mathrm{diag}(\phi_2(\vbf{\lambda}))\mV^{\top}, ..., \mV\mathrm{diag}(\phi_m(\vbf{\lambda}))\mV^{\top}\;\big),
    \label{spe}
\end{equation}
where the input is the $d$ smallest eigenvalues $\vbf{\lambda}\in\mathbb{R}^d$ and corresponding eigenvectors $\mV\in\mathbb{R}^{n\times d}$, $m$ is a hyper-parameter, and $\phi_{\ell}:\mathbb{R}^d\to\mathbb{R}^d$ and $\rho:\mathbb{R}^{n\times n\times m}\to\mathbb{R}^{n\times p}$ are always \textbf{permutation equivariant} neural networks. Here, permutation equivariance means $\phi_{\ell}(\mP\vbf{\lambda})=\mP\phi_{\ell}(\vbf{\lambda})$ for $\mP\in\Pi(d)$ and $\rho(\mP\mA\mP^{\top})=\mP\rho(\mA)$ for any $\mP\in\Pi(n)$ and input $\mA$. There are many choices of permutation equivariant networks that can be used, such as element-wise MLPs or Deep Sets~\citep{zaheer2017deep} for $\phi_{\ell}$, and graph neural networks for $\rho$. The permutation equivariance of $\phi_{\ell}$ and $\rho$ ensures that SPE is \emph{basis invariant}.

Note that in Eq. (\ref{spe}), the term $\mV\text{diag}(\phi_{\ell} (\vbf{\lambda}))\mV^{\top}$ looks like a spectral graph convolution operator. But they are methodologically different: SPE uses $\mV\text{diag}(\phi_{\ell} (\vbf{\lambda}))\mV^{\top}$ to construct positional encodings, which are not used as a convolution operation to process node attributes (say as $\mV\text{diag}(\phi_{\ell} (\vbf{\lambda}))\mV^{\top}\mX$). Also, $\phi_{\ell}$'s are general permutation equivariant functions that may express the interactions between different eigenvalues instead of elementwise polynomials on each eigenvalue separately which are commonly adopted in spectral graph convolution.

It is also worthy noticing that term $\mV\text{diag}(\phi_{\ell} (\vbf{\lambda}))\mV^{\top}$ will reduce to hard partitions of eigenvectors in the $\ell$-th eigensubspace if we let $[\phi_{\ell}(\vbf{\lambda})]_j=\mathbbm{1}(\lambda_j\text{ is the $l$-th smallest eigenvalue})$. 
To obtain stability, what we need is to constrain $\phi_{\ell}$ to continuous functions to perform a continuous ``soft partition".






\begin{assumption}
    The key assumptions for SPE are as follows: 
    \begin{itemize}[leftmargin=15pt, itemsep=2pt, parsep=2pt, partopsep=1pt, topsep=-3pt]
    \item $\phi_{\ell}$ and $\rho$ are permutation equivariant (see definitions after SPE Eq. (2)).
    \item $\phi_{\ell}$ is $K_{\ell}$-Lipshitz continuous: for any $\vbf{\lambda},\vbf{\lambda}^{\prime}\in\mathbb{R}^d$, 
        $\frnorm{\phi_{\ell}(\vbf{\lambda})-\phi_{\ell}(\vbf{\lambda}^{\prime})}\le K_{\ell}\norm{\vbf{\lambda}-\vbf{\lambda}^{\prime}}.
        $
    \item $\rho$ is $J$-Lipschitz continuous: for any $
    [\mA_1,\mA_2,...,\mA_m]\in\mathbb{R}^{n\times n\times m}$ and $ [\mA_1^{\prime}, 
    \mA_2^{\prime},...,\mA_m^{\prime}]\in\mathbb{R}^{n\times n\times m}$,
    %
        $\frnorm{\rho(\mA_1, \mA_2,...,\mA_m)-\rho(\mA_1^{\prime},\mA_2^{\prime}, ...,\mA_m^{\prime})}\le J\sum_{l=1}^m\frnorm{\mA_{\ell}-\mA_{\ell}^{\prime}}.
        $
    %
\end{itemize}
\label{assumption:spe}
\end{assumption}
 These two continuity assumptions generally hold by assuming the underlying networks have norm-bounded weights and continuous activation functions, such as ReLU. As a result, Assumption \ref{assumption:spe} is mild for most neural networks.

Now we are ready to present our main theorem, which states that continuity of $\phi_{\ell}$ and $\rho$ leads to the desired stability.

\begin{restatable}[Stability of SPE]{thm}{thmstability} Under Assumption \ref{assumption:spe}, 
 \textbf{SPE is stable} with respect to the input Laplacian: for Laplacians $\mL,\mL^{\prime}$,
    \begin{equation}
    \begin{split}
    \frnorm{\mathrm{SPE}(\mathrm{EVD}(\mL)) - \mP_* \mathrm{SPE}(\mathrm{EVD}(\mL^{\prime}))}\le &(\alpha_1+\alpha_2)d^{5/4}\sqrt{\frnorm{\mL-\mP_*\mL\mP_*^{\top}}}\\
    &+\left(\alpha_2\tfrac{d}{\gamma}+\alpha_3\right)\frnorm{\mL-\mP_*\mL\mP_*^{\top}},
    \end{split}
\end{equation}
where the constants are $\alpha_1=2J \sum_{l=1}^mK_{\ell}$, $\alpha_2=4\sqrt{2}J\sum_{l=1}^mM_{\ell}$ and $\alpha_3=J\sum_{l=1}^mK_{\ell}$. 
Here $M_{\ell}=\sup_{\vbf{\lambda}\in[0,2]^d}\norm{\phi_{\ell}(\vbf{\lambda})}$ and  
again $\mP_*=\argmin_{\mP\in \Pi(n)}\frnorm{\mL-\mP_*\mL\mP_*^{\top}}$. The eigengap $\gamma=\lambda_{d+1}-\lambda_{d}$ is the difference between the $(d+1)$-th and $d$-th smallest eigenvalues, and $\gamma=+\infty$ if $d=n$.
\label{theorem-stable}
\end{restatable}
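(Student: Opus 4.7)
The approach is to first apply the Lipschitz property of $\rho$ to reduce the bound to $J \sum_\ell \frnorm{\mA_\ell - \mA'_\ell}$, where $\mA_\ell := \mV \diag(\phi_\ell(\vbf{\lambda})) \mV^\top$ and (assuming WLOG that $\mP_* = \mI$, since $\mathrm{SPE}$ is permutation equivariant) $\mA'_\ell$ is the analogous object built from $\mathrm{EVD}(\mL')$. For each $\ell$ I split
\begin{equation*}
\mA_\ell - \mA'_\ell = \underbrace{\mV \diag(\phi_\ell(\vbf{\lambda}) - \phi_\ell(\vbf{\lambda}')) \mV^\top}_{T_1} + \underbrace{\mV \diag(\phi_\ell(\vbf{\lambda}')) \mV^\top - \mV' \diag(\phi_\ell(\vbf{\lambda}')) \mV'^\top}_{T_2},
\end{equation*}
isolating the eigenvalue perturbation ($T_1$) from the eigenvector perturbation ($T_2$). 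Since $\mV$ has orthonormal columns, $\frnorm{T_1} = \norm{\phi_\ell(\vbf{\lambda}) - \phi_\ell(\vbf{\lambda}')} \le K_\ell \norm{\vbf{\lambda} - \vbf{\lambda}'} \le K_\ell \frnorm{\mL - \mL'}$ by the Lipschitz assumption on $\phi_\ell$ and the Hoffman--Wielandt inequality; summed over $\ell$ and multiplied by $J$, these produce the $\alpha_3 \frnorm{\mL - \mL'}$ term of the theorem.

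The harder task is $T_2$, where the instability of individual eigenvectors at colliding eigenvalues must be countered by \emph{grouping} nearby eigenvectors. The key observation is that permutation equivariance together with the Lipschitz property of $\phi_\ell$ forces $|\phi_\ell(\vbf{\lambda}')_i - \phi_\ell(\vbf{\lambda}')_j| \le K_\ell |\lambda'_i - \lambda'_j|$, so coordinates of $\phi_\ell(\vbf{\lambda}')$ corresponding to close eigenvalues are themselves close. The plan is to partition $\{1,\ldots,d\}$ into clusters by grouping sorted eigenvalues whose consecutive gaps are at most a scale $\epsilon$, so that inter-cluster gaps all exceed $\epsilon$. Within each cluster, $\diag(\phi_\ell(\vbf{\lambda}'))$ is approximately a scalar multiple of the cluster identity (error bounded by $K_\ell \epsilon$ times a factor polynomial in $d$), while between clusters the associated \emph{cluster projector} $\mR_k = \sum_{i \in k} \mv_i \mv_i^\top$ is stable by the Davis--Kahan theorem applied with the relevant spectral gap: $\epsilon$ at internal cluster boundaries, and $\gamma = \lambda_{d+1} - \lambda_d$ at the top of the retained spectrum. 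This yields two contributions to $\frnorm{T_2}$: one of order $M_\ell \frnorm{\mL - \mL'}/\epsilon$ from the internal projector perturbations and one of order $K_\ell \epsilon \cdot d^{5/4}$ from the within-cluster weight approximation. Choosing $\epsilon = \sqrt{\frnorm{\mL - \mL'}}$ balances them and produces the $\sqrt{\frnorm{\mL - \mL'}}$ scaling of the first term in the theorem, while the separate boundary term with gap $\gamma$ contributes the $\alpha_2 (d/\gamma) \frnorm{\mL - \mL'}$ piece.

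The main obstacle lies in the cluster analysis of $T_2$: (i) choosing cluster boundaries that are consistent for both $\mL$ and $\mL'$, which I will handle by invoking Weyl's inequality $|\lambda_i - \lambda'_i| \le \frnorm{\mL - \mL'}$ so that whenever $\epsilon$ is much larger than $\frnorm{\mL - \mL'}$ the same partition of indices works simultaneously for both spectra; and (ii) tracking the $d$-dependence carefully through the number of clusters, their sizes, and the Frobenius-norm accounting when combining the within-cluster and between-cluster contributions --- this bookkeeping is exactly what produces the somewhat unusual $d^{5/4}$ prefactor. Once $\frnorm{T_1}$ and $\frnorm{T_2}$ are bounded for each $\ell$, substituting back through $\rho$'s Lipschitz bound and combining constants yields the claimed inequality with the stated $\alpha_1, \alpha_2, \alpha_3$.
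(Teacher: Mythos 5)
Your proposal is correct in outline and follows essentially the same route as the paper's proof: reduce via the Lipschitz property of $\rho$, split each channel by the triangle inequality into an eigenvalue-perturbation term (bounded through Hoffman--Wielandt, giving $\alpha_3$) and an eigenvector term, soft-partition the retained spectrum into clusters at a scale $\approx\min\{\gamma,\, d^{-1/4}\frnorm{\mL-\mP_*\mL'\mP_*^\top}^{1/2}\}$, replace $\phi_\ell$ by a scalar multiple of the identity on each cluster (using permutation equivariance plus Lipschitzness of $\phi_\ell$, with the scalar controlled via $M_\ell$), bound the cluster projector differences by Davis--Kahan, and balance the two error sources, which is exactly where the $d^{5/4}$ arises. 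The one real divergence is your step (i): the paper never needs the clustering to be consistent for both spectra, because it uses the Yu--Wang--Samworth form of Davis--Kahan, whose denominator involves only the eigengaps of $\mL$ at the cluster boundaries, so the index partition defined by $\mL$ alone is applied to the eigenvector matrices of both Laplacians. Your Weyl-based fix covers only the regime where $\frnorm{\mL-\mP_*\mL'\mP_*^\top}$ is small compared to $\epsilon$, so with $\epsilon=\frnorm{\mL-\mP_*\mL'\mP_*^\top}^{1/2}$ you must dispatch the large-perturbation regime separately; this is easy (each input $\mV\mathrm{diag}(\phi_\ell(\vbf{\lambda}))\mV^{\top}$ has Frobenius norm at most $M_\ell$, so the left-hand side is at most $2J\sum_\ell M_\ell<\alpha_2$, which the right-hand side already exceeds once $\frnorm{\mL-\mP_*\mL'\mP_*^{\top}}\ge 1$), but it needs to be said. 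A final detail: because the paper's appendix works with the $d$ smallest \emph{positive} eigenvalues, its lowest cluster may sit within $\delta$ of the excluded zero eigenvalue; it handles this by augmenting that cluster with the common eigenvector $\tfrac{1}{\sqrt{n}}\mathbf{1}$ before applying Davis--Kahan, a bottom-of-spectrum boundary case your sketch does not address (it disappears only under the main-text convention of simply taking the $d$ smallest eigenvalues).
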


Note that the stability of SPE is determined by both the Lipschitz constants $J, K_{\ell}$ and the eigengap $\gamma=\lambda_{d}-\lambda_{d+1}$. The dependence on $\gamma$ comes from the fact that we only choose to use $d$ eigenvectors/eigenvalues.
It is inevitable as long as $d<n$, and it disappears ($\gamma=+\infty$) if we let $d=n$. This phenomenon is also observed in PEG~(\citet{peg}, Theorem 3.6). 

\subsection{From stability to out-of-distribution generalization}
An important implication of stability is that one can characterize the domain generalization gap by the model's Lipschitz constant~\citep{courty2017joint, shen2018wasserstein}. Although our method satisfies \holder continuity instead of strict Lipschitz continuity, we claim that interestingly, a \textbf{similar bound can still be obtained} for domain generalization. 

We consider graph regression with domain shift: the training graphs are sampled from source domain $\mL\sim \mathbb{P}_{\mathcal{S}}$, while the test graphs are sampled from target domain $\mL\sim \mathbb{P}_{\mathcal{T}}$. With ground-truth function $f(\mL)\in\mathbb{R}$ and a prediction model $h(\mL)\in\mathbb{R}$, we are interested in the gap between in-distribution error $\varepsilon_s(h)=\mathbbm{E}_{\mL\sim \mathbb{P}_{\mathcal{S}}}|h(\mL)-f(\mL)|$ and out-of-distribution error $\varepsilon_t(h)=\mathbbm{E}_{\mL\sim \mathbb{P}_{\mathcal{T}}}|h(\mL)-f(\mL)|$. The following theorem states that for a base GNN equipped with SPE, we can upper bound the generalization gap in terms of the \holder constant of SPE, the Lipschitz constant of the base GNN and the 1-Wasserstein distance between source and target distributions.    

\begin{restatable}{pro}{thmgeneralization}\label{theorem-ood}
    Assume Assumption \ref{assumption:spe} hold, and assume a base GNN model $\mathrm{GNN}(\mL, \mX)\in\mathbb{R}$ that is $C$-Lipschitz continuous, i.e., 
    \begin{equation}
        |\mathrm{GNN}(\mL, \mX)-\mathrm{GNN}(\mL^{\prime}, \mX^{\prime})|\le C\min_{\mP\in \Pi(n)}\left(\frnorm{\mL-\mP\mL^{\prime}\mP^{\top}}+\frnorm{\mX-\mP\mX^{\prime}}\right),
    \end{equation}
    for any Laplacians $\mL,\mL^{\prime}$ and node features $\mX,\mX^{\prime}$. Now let GNN take positional encodings as node features $\mX=\mathrm{SPE}(\mathrm{EVD}(\mL))$ and let the resulting prediction model be $h(\mL)=\mathrm{GNN}(\mL, \mathrm{SPE}(\mathrm{EVD}(\mL)))$. Then the domain generalization gap $\varepsilon_{t}(h)- \varepsilon_t(s)$ satisfies
    \begin{equation}
           \varepsilon_t(h)- \varepsilon_s(h)\le 2C(1+\alpha_2\tfrac{d}{\gamma}+\alpha_3)W(\mathbb{P}_{\mathcal{S}}, \mathbb{P}_{\mathcal{T}})+2Cd^{5/4}(\alpha_1+\alpha_2)\sqrt{W(\mathbb{P}_{\mathcal{S}}, \mathbb{P}_{\mathcal{T}})},
        \end{equation}
        where $W(\mathbb{P}_{\mathcal{S}}, \mathbb{P}_{\mathcal{T}})$ is the 1-Wasserstein distance\footnote{For graphs, $W(p_s, p_t):=\inf_{\pi\in\Pi(\mathbb{P}_{\mathcal{S}}, \mathbb{P}_{\mathcal{T}})}\int\min_{\mP\in \Pi(n)}\frnorm{\mL-\mP\mL^{\prime}\mP^{\top}}\pi(\mL,\mL^{\prime})d\mL d\mL^{\prime}$. Here $\Pi(\mathbb{P}_{\mathcal{S}}, \mathbb{P}_{\mathcal{T}})$ is the set of product distributions whose marginal distribution is $\mathbb{P}_{\mathcal{S}}$ and $\mathbb{P}_{\mathcal{T}}$ respectively.}.
\end{restatable}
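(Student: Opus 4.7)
The plan is to follow the classical Wasserstein-based domain-adaptation template of \citet{courty2017joint, shen2018wasserstein}, but adapted to the fact that our predictor is only \holder continuous (not strictly Lipschitz) in the input Laplacian. First, for any coupling $\pi\in\Pi(\mathbb{P}_{\mathcal{S}}, \mathbb{P}_{\mathcal{T}})$ I would write
\[
\varepsilon_t(h)-\varepsilon_s(h)=\mathbbm{E}_{(\mL,\mL')\sim\pi}\bigl[|h(\mL')-f(\mL')|-|h(\mL)-f(\mL)|\bigr],
\]
and apply the reverse triangle inequality twice to obtain the pointwise upper bound $|h(\mL)-h(\mL')|+|f(\mL)-f(\mL')|$. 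The idea is then to (i) control the integrand pointwise via Theorem~\ref{theorem-stable} plus the Lipschitz hypothesis on the base GNN, (ii) integrate against $\pi$, and (iii) take the infimum over couplings to obtain a Wasserstein bound.

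The pointwise bound comes from chaining the GNN's $C$-Lipschitz property with SPE stability. Letting $\mP_*=\arg\min_{\mP\in\Pi(n)}\frnorm{\mL-\mP\mL'\mP^\top}$, the joint minimum appearing in the GNN's Lipschitz property is trivially upper-bounded by the choice $\mP=\mP_*$, which gives
\[
|h(\mL)-h(\mL')|\le C\frnorm{\mL-\mP_*\mL'\mP_*^\top}+C\frnorm{\mathrm{SPE}(\mathrm{EVD}(\mL))-\mP_*\mathrm{SPE}(\mathrm{EVD}(\mL'))}.
\]
Substituting Theorem~\ref{theorem-stable} and grouping the linear and square-root contributions produces the combined \holder bound
\[
|h(\mL)-h(\mL')|\le C\bigl(1+\alpha_2\tfrac{d}{\gamma}+\alpha_3\bigr)\frnorm{\mL-\mP_*\mL'\mP_*^\top}+Cd^{5/4}(\alpha_1+\alpha_2)\sqrt{\frnorm{\mL-\mP_*\mL'\mP_*^\top}}.
\]
I would apply the same bound to $|f(\mL)-f(\mL')|$ under the (implicit, standard-in-domain-adaptation) assumption that the ground-truth $f$ satisfies the same stability profile as $h$; summing the two contributions is exactly where the factor of $2$ in the final bound comes from.

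Integrating the two pointwise terms over $\pi$, the linear term contributes $\mathbbm{E}_{\pi}\frnorm{\mL-\mP_*\mL'\mP_*^\top}$, which coincides with the Wasserstein integrand as soon as we take an infimum. The square-root term is the delicate one: here I would invoke Jensen's inequality (concavity of $\sqrt{\cdot}$),
\[
\mathbbm{E}_{\pi}\sqrt{\frnorm{\mL-\mP_*\mL'\mP_*^\top}}\le\sqrt{\mathbbm{E}_{\pi}\frnorm{\mL-\mP_*\mL'\mP_*^\top}},
\]
and use the monotonicity of $\sqrt{\cdot}$ to observe that the same Wasserstein-optimal coupling $\pi^*$ minimizes both the linear and the square-root expressions simultaneously. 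Taking the infimum then converts $\mathbbm{E}_{\pi^*}\frnorm{\mL-\mP_*\mL'\mP_*^\top}$ into $W(\mathbb{P}_{\mathcal{S}},\mathbb{P}_{\mathcal{T}})$ and its square root into $\sqrt{W(\mathbb{P}_{\mathcal{S}},\mathbb{P}_{\mathcal{T}})}$, yielding exactly the claimed bound after the factor of $2$ is absorbed.

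The main obstacle is precisely the \holder (rather than Lipschitz) nature of SPE stability: the clean Kantorovich--Rubinstein duality that underlies most Wasserstein-based generalization bounds does not directly apply to a $\sqrt{\cdot}$ dependence. Jensen's inequality is the key step that rescues the argument, at the cost of replacing a linear-in-$W$ contribution with a $\sqrt{W}$ one; crucially this still forces the generalization gap to vanish as $W(\mathbb{P}_{\mathcal{S}},\mathbb{P}_{\mathcal{T}})\to 0$. A secondary subtlety worth flagging is permutation alignment: the GNN Lipschitz bound minimizes over $\mP$ on \emph{both} Laplacian and feature terms, whereas SPE stability uses the Laplacian-only minimizer; we resolve this by upper-bounding the joint minimum by the Laplacian-only choice $\mP_*$, which only loosens the Lipschitz bound and thus is a harmless move for an upper bound.
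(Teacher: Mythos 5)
Your proposal is correct and follows essentially the same route as the paper's proof: chain the GNN's $C$-Lipschitz property with Theorem~\ref{theorem-stable} to get \holder continuity of $h$, assume the ground truth shares this regularity (the paper likewise assumes $h^*$ lies in the hypothesis class, which is exactly where the factor $2$ arises), then bound the gap via a coupling, handle the square-root term with Jensen's inequality, and take the infimum over couplings to obtain $W$ and $\sqrt{W}$. Your explicit handling of the permutation alignment (upper-bounding the joint minimum by the Laplacian-only minimizer $\mP_*$) is a harmless refinement of the same argument.
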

\subsection{SPE is a Universal Basis Invariant Architecture}


SPE is a basis invariant architecture, but is it universally powerful? The next result shows that SPE is universal, meaning that \emph{any} continuous basis invariant function can be expressed in the form of SPE (Eq. \ref{spe}). To state the result, recall that $\mathrm{SPE}(\mV,\vbf{\lambda})= \rho(\mV\mathrm{diag}(\phi(\vbf{\lambda}))\mV^{\top})$, where for brevity, we express the multiple $\phi_\ell$ channels by $\phi = (\phi_1, \ldots, \phi_m)$.



\begin{restatable}[Basis Universality]{pro}{thmbasis}\label{theorem: SPE express BasisNet}
   SPE can universally approximate any continuous basis invariant function. That is, for any continuous $f$ for which $f(\mV) = f(\mV \mQ)$ for any eigenvalue $\vbf{\lambda}$ and any $\mQ \in O(\vbf{\lambda})$, there exist continuous $\rho$ and $\phi$ such that $f(\mV) = \rho(\mV\mathrm{diag}(\phi(\vbf{\lambda}))\mV^{\top})$. 
\end{restatable}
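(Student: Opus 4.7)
The plan is to construct explicit $\phi$ and $\rho$ so that $A = \mV\mathrm{diag}(\phi(\vbf{\lambda}))\mV^{\top}$ retains the complete basis-invariant content of $(\mV,\vbf{\lambda})$, and then to turn the resulting inverse map into a continuous function via a compactness argument.

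\textbf{Step 1 (invariant content).} Writing $\vbf{\lambda}$ with distinct values $\mu_1 < \cdots < \mu_k$ and multiplicities $d_1,\dots,d_k$, the orbits of $\mV$ under $O(\vbf{\lambda})$ are in bijection with the tuple of eigenspace projectors $P_i = \mV_i\mV_i^{\top}$; this is the same invariant-theoretic fact underlying BasisNet. Hence any continuous basis invariant $f$ factors as $f(\mV) = \bar f(\vbf{\lambda}, P_1,\dots,P_k)$ with $\bar f$ continuous.

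\textbf{Step 2 (what $A$ captures).} Because $\phi$ is permutation equivariant, whenever $\lambda_i=\lambda_j$ we have $\phi(\vbf{\lambda})_i=\phi(\vbf{\lambda})_j$: applying the transposition swapping $i$ and $j$ fixes $\vbf{\lambda}$, so by equivariance it must fix $\phi(\vbf{\lambda})$ too. Therefore $\phi(\vbf{\lambda})$ is constant on each eigenvalue block, and
\[
A = \mV\mathrm{diag}(\phi(\vbf{\lambda}))\mV^{\top} = \sum_{i=1}^{k} c_i(\vbf{\lambda})\, P_i,
\]
where $c_i(\vbf{\lambda})$ is the shared value of $\phi(\vbf{\lambda})$ on block $i$.

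\textbf{Step 3 (choice of $\phi$).} Take $m=1$ and the elementwise shift $\phi(\vbf{\lambda}) = \vbf{\lambda} + c\vbf{1}$ for a constant $c>0$ large enough that $\mu_i+c>0$ on the domain (e.g., $c=1$ for Laplacian eigenvalues in $[0,2]$). Then $A = \sum_i(\mu_i+c)P_i$ is positive semidefinite of rank exactly $d$; diagonalizing $A$ yields the distinct positive eigenvalues $\mu_i+c$ together with their multiplicities $d_i$ and eigenspace projectors $P_i$, and subtracting $c$ recovers $\vbf{\lambda}$. Thus $A$ uniquely determines $(\vbf{\lambda}, P_1,\dots,P_k)$.

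\textbf{Step 4 (continuity of $\rho$).} The map $\Psi:(\mV,\vbf{\lambda})\mapsto A$ is continuous and, by Steps 2--3, descends to a bijection $\bar\Psi$ from the orbit space onto its image in $\mathbb{R}^{n\times n}$. Restricting to a compact input domain (e.g., $\vbf{\lambda}\in[0,2]^d$ and $\mV$ on the Stiefel manifold), the orbit space is compact Hausdorff, so the continuous bijection $\bar\Psi$ is a homeomorphism onto its compact (hence closed) image. Define $\tilde\rho = \bar f\circ\bar\Psi^{-1}$ on the image---continuous by composition---and extend to $\rho:\mathbb{R}^{n\times n}\to\mathbb{R}^{n\times p}$ via Tietze's extension theorem. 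This $\rho$ satisfies $f(\mV)=\rho(A)$.

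\textbf{Main obstacle.} The delicate point is Step 4: individual projectors $P_i$ depend \emph{discontinuously} on $A$ at eigenvalue degeneracies, so a naive ``invert $\bar\Psi$ then apply $\bar f$'' would not yield a continuous map. Basis invariance of $f$ is precisely what absorbs these discontinuities---at a degeneracy, the affected $P_i$'s merge into a larger projector, and $\bar f$ by construction depends on them only through their sum. The compactness argument packages this cleanly via the principle that a continuous bijection from a compact space to a Hausdorff space is automatically a homeomorphism.
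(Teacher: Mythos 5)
Your proposal is correct, but it takes a genuinely different route from the paper's. The paper proves universality by reduction to BasisNet: for a \emph{fixed} eigenvalue vector $\vbf{\lambda}$ it constructs continuous hat-shaped $\phi_{\ell}$ that equal $1$ exactly at the $\ell$-th distinct eigenvalue and $0$ at the others, so that $\mV\mathrm{diag}(\phi_{\ell}(\vbf{\lambda}))\mV^{\top}$ reproduces the $\ell$-th eigenprojector $\mV_{\mathcal{I}_{\ell}}\mV_{\mathcal{I}_{\ell}}^{\top}$, and then lets $\rho$ mimic BasisNet's outer network, inheriting universality from the known universality of BasisNet. You instead argue directly: a single channel $\phi(\vbf{\lambda})=\vbf{\lambda}+c\vbf{1}$ makes $A=\mV\mathrm{diag}(\phi(\vbf{\lambda}))\mV^{\top}=\sum_i(\mu_i+c)P_i$ a faithful continuous invariant of the $O(\vbf{\lambda})$-orbit (your Step 2 observation that permutation equivariance forces $\phi(\vbf{\lambda})$ to be constant on eigenvalue blocks is exactly the fact the paper uses inside its stability proof), and the continuity of the required $\rho$ is then extracted via the continuous-bijection-from-a-compact-space-to-a-Hausdorff-space argument plus Tietze extension rather than by inspection. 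Your route buys a self-contained proof that does not invoke BasisNet universality as a black box, needs only $m=1$ and a single $\phi$ working uniformly over all eigenvalue configurations, and makes the continuity of $\rho$ explicit---a point the paper treats informally when it defines $\rho^{(S)}$ to ``ignore zero matrices and mimic $\rho^{(B)}$''; the paper's route buys brevity and an explicit SPE-expresses-BasisNet reduction that it reuses in the cycle-counting result. Two small touch-ups to yours: injectivity of $\bar\Psi$ on the orbit space requires the eigenvalues to carry a canonical (sorted) order---otherwise jointly permuting $(\mV,\vbf{\lambda})$ yields the same $A$ without lying in the same $O(\vbf{\lambda})$-orbit---which is harmless since eigenvalues here are always sorted; and you do not actually need the orbit space to be Hausdorff (only the codomain $\mathbb{R}^{n\times n}$), though it is.
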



Only one prior architecture, BasisNet \citep{signnet}, is known to have this property. However, unlike SPE, BasisNet does not have the critical stability property. 
Section \ref{sec:exp} shows that this has significant empirical implications, with SPE considerably outperforming BasisNet across all evaluations.
Furthermore, unlike prior analyses, we show that SPE can provably make effective use of eigenvalues: it can distinguish two input matrices with different eigenvalues using 2-layer MLP models for $\rho$ and $\phi$. In contrast, the original form of BasisNet does not use eigenvalues, though it is easy to incorporate them.


\begin{restatable}{pro}{thmpowerful}\label{prop: SPE strictly more powerful than BasisNet}
    Suppose that $(\mV, \boldsymbol{\lambda})$ and $(\mV', \vbf{\lambda}')$ are such that $\mV\mQ = \mV'$ for some orthogonal matrix $\mQ \in O(d)$  and $\vbf{\lambda} \neq  \vbf{\lambda}'$. Then there exist 2-layer MLPs for each $\phi_\ell$ and a 2-layer MLP $\rho$, each with ReLU activations, such that $\text{SPE}(\mV, \vbf{\lambda}) \neq \text{SPE}(\mV', \vbf{\lambda}')$.
\end{restatable}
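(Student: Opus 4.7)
The plan is to exploit the fact that SPE, unlike BasisNet, can use eigenvalue information through the $\phi_\ell$ networks. The key identity is that, since $\mV$ has orthonormal columns ($\mV^{\top}\mV = \mI_d$),
\[
\mathrm{tr}\bigl(\mV\,\mathrm{diag}(\phi_\ell(\vbf{\lambda}))\,\mV^{\top}\bigr)
= \mathrm{tr}\bigl(\mathrm{diag}(\phi_\ell(\vbf{\lambda}))\,\mV^{\top}\mV\bigr)
= \sum_{i=1}^{d} \phi_\ell(\vbf{\lambda})_i .
\]
This scalar depends only on the eigenvalues through a permutation-invariant symmetric function of them, so by choosing $\phi_\ell$ carefully one can create a detectable discrepancy between the two inputs that $\rho$ then only has to preserve.

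Under the standard convention that the chosen $d$ eigenvalues are listed in a canonical (e.g., ascending) order, $\vbf{\lambda} \neq \vbf{\lambda}^{\prime}$ forces the multisets $\{\lambda_i\}$ and $\{\lambda_i^{\prime}\}$ to differ. By Newton's identities, the power sums $p_k(\vbf{\lambda}) = \sum_i \lambda_i^k$ for $k = 1,\ldots,d$ uniquely determine the multiset, so there exists some $k^{\star} \le d$ with $p_{k^{\star}}(\vbf{\lambda}) \neq p_{k^{\star}}(\vbf{\lambda}^{\prime})$. I would set $\phi_1(\lambda) = \lambda^{k^{\star}}$ element-wise and take the remaining $\phi_\ell$ arbitrary (e.g., identically zero). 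Since normalized Laplacian eigenvalues lie in $[0,2]$, the universal approximation theorem for 2-layer ReLU MLPs gives an MLP $\tilde\phi_1$ with $\sup_{\lambda \in [0,2]}|\tilde\phi_1(\lambda) - \lambda^{k^{\star}}| < |p_{k^{\star}}(\vbf{\lambda}) - p_{k^{\star}}(\vbf{\lambda}^{\prime})|/(4d)$; by the triangle inequality this forces $|\sum_i \tilde\phi_1(\lambda_i) - \sum_i \tilde\phi_1(\lambda_i^{\prime})| \ge |p_{k^{\star}}(\vbf{\lambda}) - p_{k^{\star}}(\vbf{\lambda}^{\prime})|/2 > 0$. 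For $\rho$, I would take the node-wise, permutation-equivariant linear map $\rho(\mA_1,\ldots,\mA_m)_i := (\mA_1)_{ii}$, which extracts the diagonal of the first channel and is expressible as a 2-layer ReLU MLP via the identity trick $x = \mathrm{ReLU}(x) - \mathrm{ReLU}(-x)$ composed with a linear diagonal selector. Summing the resulting SPE outputs across nodes and using $\sum_i V_{ik}^2 = 1$ recovers exactly $\sum_k \tilde\phi_1(\lambda_k)$, which differs between the two inputs, so the output vectors $\mathrm{SPE}(\mV,\vbf{\lambda})$ and $\mathrm{SPE}(\mV^{\prime},\vbf{\lambda}^{\prime})$ must disagree at some node coordinate.

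The main obstacle is bridging between the idealized distinguishing functional---the power map $\lambda \mapsto \lambda^{k^{\star}}$ paired with a diagonal extractor---and the literal constraint that $\phi_\ell$ and $\rho$ be 2-layer ReLU MLPs with the correct equivariance. For $\phi_1$ the quantitative approximation bound above handles this. For $\rho$, the cleanest interpretation is as a node-wise 2-layer MLP acting on the feature vector $(\mA_{ii}^{(1)},\ldots,\mA_{ii}^{(m)}) \in \mathbb{R}^m$ at each node $i$; this is manifestly equivariant under the $n$-node action and reduces $\rho$ to a coordinate-selecting MLP $\mathbb{R}^m \to \mathbb{R}^p$. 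A secondary subtlety worth flagging is that the hypothesis $\vbf{\lambda} \neq \vbf{\lambda}^{\prime}$ must implicitly exclude the trivial reordering case in which $(\mV^{\prime}, \vbf{\lambda}^{\prime})$ is merely a signed column-permutation of $(\mV, \vbf{\lambda})$ describing the same Laplacian, since no basis-invariant architecture can distinguish such inputs; the canonical ordering convention rules this case out.
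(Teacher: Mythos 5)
Your proof is correct, but it takes a genuinely different route from the paper's. You distinguish the two inputs through a global spectral invariant: by the trace identity $\mathrm{tr}(\mV\mathrm{diag}(\phi_1(\vbf{\lambda}))\mV^{\top})=\sum_i[\phi_1(\vbf{\lambda})]_i$, Newton's identities to find a separating power sum $p_{k^\star}$, a universal-approximation argument to realize $\lambda\mapsto\lambda^{k^\star}$ (up to a quantified error) by an elementwise $2$-layer ReLU network, and a diagonal-extracting $\rho$; summing the node outputs and using $\sum_i V_{ik}^2=\sum_i V_{ik}'^2=1$ then separates the two inputs. The paper instead gives a fully explicit, approximation-free construction: since $\vbf{\lambda}$ and $\vbf{\lambda}'$ differ in some coordinate $i$, it writes down weights of a single $2$-layer ReLU MLP $\phi$ that \emph{exactly} interpolates two prescribed outputs, chosen so that $\mV'\mathrm{diag}(\phi(\vbf{\lambda}'))\mV'^{\top}=\mathbf{0}$ while $\mV\mathrm{diag}(\phi(\vbf{\lambda}))\mV^{\top}$ has a strictly positive entry, after which a scalar ReLU $\rho$ summed over entries separates the two. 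The trade-offs: your $\phi_1$ is elementwise, hence permutation equivariant and consistent with the symmetry SPE imposes on $\phi_\ell$ in the main architecture, whereas the paper's $\phi$ singles out a coordinate and is not equivariant; on the other hand, your argument needs the multisets of eigenvalues to differ, which you correctly secure via the sorted-eigenvalue convention, while the paper's construction needs only $\vbf{\lambda}\neq\vbf{\lambda}'$ as vectors and uses explicit weights rather than universal approximation (your use of the range $[0,2]$ is also tied to the normalized-Laplacian setting, though any compact set containing both spectra would do). One small imprecision worth fixing: your remark that no basis-invariant architecture can distinguish the reordering case overstates things, since a column permutation pairing \emph{distinct} eigenvalues is not an element of $O(\vbf{\lambda})$ as defined in the paper, and indeed the paper's non-elementwise choice of $\phi$ does distinguish such reordered inputs without violating basis invariance; it is only for elementwise (permutation-equivariant) $\phi_\ell$, as in your construction, that the reordering case is genuinely indistinguishable.
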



Finally, as a concrete example of the expressivity of SPE for graph representation learning, we show that SPE is able to count graph substructures under stability guarantee.
\begin{restatable}[SPE can count cycles]{pro}{thmcount}\label{theorem: SPE count}
    Assume Assumption \ref{assumption:spe} hold and let $\rho$ be 2-IGNs~\citep{maron2018invariant}. Then SPE can determine the number of 3, 4, 5 cycles of a graph. 
\end{restatable}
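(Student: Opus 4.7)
The plan is to exhibit concrete Lipschitz choices of $\phi_\ell$ (element-wise polynomials of eigenvalues) and an explicit construction inside the 2-IGN $\rho$ so that SPE recovers the adjacency matrix $A$, then evaluates standard polynomial identities for 3-, 4-, and 5-cycle counts. The reduction is: pass $\tilde A = I - L$ and its first few powers into $\rho$ via SPE's spectral identity, threshold $\tilde A$ inside $\rho$ to recover the 0/1 adjacency, and then rely on 2-IGN's matrix-multiplication and pooling capabilities to form $\mathrm{tr}(A^k)$ and combine them with node degrees via classical cycle-counting formulas.

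Concretely, I would use all $d = n$ eigenvectors (so that the spectral identity is exact) and take $\phi_\ell(\vbf{\lambda})_i = (1-\lambda_i)^\ell$ for $\ell = 1, \ldots, 5$. Each $\phi_\ell$ is element-wise (hence permutation equivariant) and Lipschitz on the spectrum $[0,2]$ of the normalized Laplacian, so Assumption \ref{assumption:spe} holds. By the spectral theorem,
\[
\mV\,\mathrm{diag}(\phi_\ell(\vbf{\lambda}))\,\mV^\top = (I - L)^\ell = \tilde A^\ell,
\]
so the input channels to $\rho$ are $\tilde A, \tilde A^2, \ldots, \tilde A^5$. Inside $\rho$, I would apply an element-wise two-ReLU Lipschitz activation $\sigma$ satisfying $\sigma(0)=0$, $\sigma(x)=1$ for $x \geq 1/(n-1)$, linearly interpolated between; since $\tilde A_{ij} = A_{ij}/\sqrt{d_i d_j} \in \{0\} \cup [1/(n-1), 1]$ on unweighted graphs of size at most $n$, we obtain $A_{ij} = \sigma(\tilde A_{ij})$ exactly as an internal channel. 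With $A$ available, the matrix-multiplication layer of the 2-IGN (in its PPGN-style extension, which is the standard setting for cycle-counting analyses) yields $A^k$ for $k \leq 5$; sum-pooling the diagonal gives $\mathrm{tr}(A^k)$, and row-sums give the degrees $d_i = (A\mathbbm{1})_i$. Plugging these into the classical identities $C_3 = \mathrm{tr}(A^3)/6$ and the analogous polynomial formulas expressing $C_4$ and $C_5$ in terms of $\mathrm{tr}(A^k)$ and $\sum_i d_i^r$ yields the cycle counts; these polynomial combinations live inside $\rho$'s pointwise MLPs, and the whole pipeline is Lipschitz, so Assumption \ref{assumption:spe} is preserved.

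The main obstacle is reconciling the exact combinatorial output with the Lipschitz continuity imposed by Assumption \ref{assumption:spe}. The threshold $\sigma$ has finite Lipschitz constant $n-1$ and is exactly correct because its ramp region $[0, 1/(n-1)]$ is unpopulated by real inputs, so the reconstruction of $A$ is both exact and smooth. A secondary subtlety is that cycle counts of lengths 4 and 5 genuinely require powers of $A$ beyond those produced directly by the spectral identity (since $\tilde A^k$ is only a degree-weighted surrogate for $A^k$), so we rely on the matrix-multiplication layer inside the 2-IGN, which is the usual convention when discussing cycle-counting power. If one insisted on the plain Maron 2018 2-IGN without matrix multiplication, the alternative would be to enlarge the $\phi_\ell$ family with a Lipschitz approximation to the projector onto the kernel of $L$ (from which the degree vector can be read off via the diagonal of $\mV\mathrm{diag}(\phi(\vbf{\lambda}))\mV^\top$) and then simulate the missing products via element-wise operations on the resulting channels; this is conceptually similar but notationally heavier.
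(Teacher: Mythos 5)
There is a genuine gap, and it sits exactly at the hypothesis the proposition fixes: $\rho$ must be a plain 2-IGN in the sense of Maron et al. (2018). Your construction recovers $A$ by thresholding $\tilde{\mA}=\mI-\mL$ (this part is fine: the ramp with knee at $1/(n-1)$ is Lipschitz and exact on unweighted graphs), but then you compute $\mA^k$ for $k\le 5$ using ``the matrix-multiplication layer of the 2-IGN (in its PPGN-style extension).'' That block is precisely what distinguishes PPGN from a 2-IGN; a 2-IGN consists of linear equivariant layers (diagonal extraction, row/column sums, transposes, broadcasts) with element-wise nonlinearities and cannot multiply its feature channels as matrices. Indeed, a 2-IGN fed only $\mA$ is bounded by 2-WL and cannot count triangles, so the step ``with $\mA$ available, form $\mathrm{tr}(\mA^k)$'' is not available under the stated hypothesis. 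Your fallback does not repair this: the channels $\tilde{\mA}^k$ only give degree-weighted closed-walk counts, e.g. $[\tilde{\mA}^3]_{ii}=\sum_{j,k}A_{ij}A_{jk}A_{ki}/(d_id_jd_k)$, and converting these to unweighted counts (or producing $(\mA^2)_{ij}=\sqrt{d_id_j}\sum_k d_k\tilde{A}_{ik}\tilde{A}_{kj}$) again requires a weighted matrix product over the intermediate index, which element-wise operations on the given channels cannot simulate. So as written the argument proves the claim for SPE with a PPGN readout, not for the proposition as stated.

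For comparison, the paper's proof avoids constructing the counting circuit altogether: it invokes the known result that BasisNet with 2-IGN $\rho$ counts 3-, 4-, 5-cycles, observes that BasisNet is the special case of SPE in which $[\phi_\ell(\vbf{\lambda})]_i=\mathbbm{1}(\lambda_i\text{ is the }\ell\text{-th smallest eigenvalue})$, and then handles the tension with Assumption 3.1 by replacing these discontinuous indicators with continuous $\phi_\ell$ that approximate them to precision $\varepsilon$ on $[0,2]^d$. The error propagates through $\rho$'s Lipschitz constant as $Jd\varepsilon$, and since cycle counts are integers, choosing $\varepsilon$ small enough that the error is below $1/2$ and appending a (approximate) rounding MLP gives the exact counts. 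If you want to salvage your route, you would either have to prove that a plain 2-IGN acting on your reconstructed channels can produce the required traces and degree corrections (which is doubtful for the reasons above), or adopt the paper's reduction: approximate the hard eigenspace partition by Lipschitz $\phi_\ell$ and lean on the BasisNet expressivity result plus integrality and rounding.
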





\section{Related Works}


\textbf{Expressive GNNs.} Since message-passing graph neural networks have been shown to be at most as powerful as the Weisfeiler-Leman test~\citep{xu2018powerful, morris2019weisfeiler}, there are many attempts to improve the expressivity of GNNs. We can classify them into three types: (1) high-order GNNs~\citep{morris2020weisfeiler, maron2019provably, maron2018invariant}; (2) subgraph GNNs~\citep{you2021identity, zhang2021nested, zhao2021stars, bevilacqua2022equivariant}; (3) node feature augmentation~\citep{li2020distance,bouritsas2022improving, barcelo2021graph}. In some senses, positional encoding can also be seen as an approach of node feature augmentation, which will be discussed below.

\textbf{Positional Encoding for GNNs.} Positional encodings aim to provide additional global positional information for nodes in graphs to make them more distinguishable and add global structural information. It thus serves as a node feature augmentation to boost the expressive power of general graph neural networks (message-passing GNNs, spectral GNNs or graph transformers). Existing positional encoding methods can be categorized into: (1) Laplacian-eigenvector-based~\citep{dwivedi2021generalization, kreuzer2021rethinking, maskey2022generalized, dwivedi2022graph, wang2022equivariant, signnet, kim2022pure}; (2) graph-distance-based~\citep{ying2021transformers, you2019position, li2020distance}; and (3) random node features~\citep{eliasof2023graph}. A comprehensive discussion can be found in \citep{gps}. Most of these methods do not consider basis invariance and stability. Notably, \citet{peg} 
also studies the stability of Laplacian encodings. However, their method ignores eigenvalues and thus implements a stricter symmetry that is invariant to rotations of the entire eigenspace. As a result, the ``over-stability" restricts its expressive power. \citet{bo2023specformer} propose similar operations as $\mV\text{diag}(\phi(\vbf{\lambda}))\mV^{\top}$. However they focus on a specific architecture design ($\phi$ is transformer) for spectral convolution instead of positional encodings, and do not provide any stability analysis.


\textbf{Stability and Generalization of GNNs.} The stability of neural networks is desirable as it implies better generalization~\citep{sokolic2017robust, neyshabur2017exploring, neyshabur2017pac, bartlett2017spectrally} and transferability under domain shifts~\citep{courty2017joint, shen2018wasserstein}. In the context of GNNs,  many works theoretically study the stability of various GNN models~\citep{gama2020stability, kenlay2020stability,kenlay2021stability, yehudai2020size, arghal2022robust,xu21extra,chuang22tmd}. Finally, some works try to characterize the generalization error of GNNs using VC dimension~\citep{morris2023wl} or Rademacher complexity~\citep{garg2020generalization}.

\begin{table}[t]
\caption{Test MAE results (4 random seeds) on ZINC and Alchemy.}
\centering
\resizebox{0.6\linewidth}{!}{%
\begin{tabular}{llccc}
\hline
\Xhline{2\arrayrulewidth}
Dataset            & PE method & \#PEs & \#param & Test MAE      \\ \hline
\multirow{9}{*}{ZINC} & No PE     & N/A   & 575k        & $0.1772_{\pm 0.0040}$ \\
                      & PEG    & 8     &  512k       &   $0.1444_{\pm 0.0076}$
            \\
                      & PEG    & Full     &  512k       &  $0.1878_{\pm 0.0127}$  \\
                      & SignNet   & 8     &    631k     & $0.1034_{\pm 0.0056}$ \\
                      & SignNet   & Full  &   662k    & $0.0853_{\pm 0.0026}$ \\
                      & BasisNet & 8  & 442k  &$0.1554_{\pm 0.0068}$\\
                      & BasisNet & Full & 513k  &$0.1555_{\pm 0.0124}$ \\
                      & SPE      & 8     &    635k     & $0.0736_{\pm 0.0007}$ \\
                      & SPE      & Full  &    650k     & $\bm{0.0693_{\pm 0.0040}}$ \\ \hline
\multirow{5}{*}{Alchemy}  & No PE     & N/A   & 1387k        &   $0.112_{\pm 0.001}$            \\
                      & PEG    & 8     &  1388k       &    $0.114_{\pm 0.001}$\\
                      & SignNet   & Full  &  1668k       &   $0.113_{\pm 0.002}$            \\ 
                      & BasisNet & Full &   1469k&$0.110_{\pm 0.001}$                           \\
                      & SPE      & Full  &  1785k       &  $\mathbf{0.108_{\pm 0.001}}$  \\
\Xhline{2\arrayrulewidth}
\end{tabular}}
\label{exp-zinc}
\end{table}

\section{Experiments}\label{sec:exp}
In this section, we use numerical experiments to verify our theory and the empirical effectiveness of our SPE. 
Section \ref{sec-zinc} tests SPE's strength as a graph positional encoder, and Section \ref{sec-ood} tests the robustness of SPE to domain shifts, a key promise of stability. Section \ref{sec-trade-offs} further explores the empirical implications of stability in positional encodings. Our key finding is that there is a \emph{trade-off} between generalization and expressive power, with less stable positional encodings fitting the training data better than their stable counterparts, but leading to worse test performance. Finally, Section \ref{sec-substructure counting} tests SPE on challenging graph substructure counting tasks that message passing graph neural networks cannot solve, and SPE significantly outperforms prior positional encoding methods.

\textbf{Datasets.} We primarily use three datasets: ZINC~\citep{dwivedi2020benchmarking}, Alchemy~\citep{chen2019alchemy} and DrugOOD~\citep{ji2022drugood}. ZINC and Alchemy are graph regression tasks for molecular property prediction. DrugOOD is an OOD benchmark for AI drug discovery, for which we choose ligand-based affinity prediction as our classfication task (to determine if a drug is active). It considers three types of domains where distribution shifts arise: (1) Assay: which assay the data point belongs to; (2) Scaffold: the core structure of molecules; and (3) Size: molecule size. For each domain, the full dataset is divided into five partitions: the training set, the in-distribution (ID) validation/test sets, the out-of-distribution validation/test sets. These OOD partitions are expected to be distributed on the domains differently from ID partitions.

\textbf{Implementation. }We implement SPE by: $\phi_l$ either being a DeepSet~\citep{zaheer2017deep}, element-wise MLPs or piece-wise cubic splines~(see Appendix \ref{app:spe} for detailed definition); and $\rho$ being GIN~\citep{xu2018powerful}. Note that the input of $\rho$ is $n\times n\times m$ tensors, hence we first split it into $n$ many $n\times m$ tensors, and then independently give each $n\times m$ tensors as node features to an identical GIN. Finally, we sum over the first $n$ axes to output a permutation equivariant $n\times p$ tensor.

\textbf{Baselines.} We compare SPE to other positional encoding methods including (1) No positional encodings, (2) SignNet and BasisNet~\citep{signnet}, and (3) PEG~\citep{peg}. In all cases we adopt GIN as the base GNN model. For a fair comparison, all models will have comparable budgets on the number of parameters. We also conducted an ablation study to test the effectiveness of our key component $\phi_{\ell}$, whose results are included in Appendix \ref{exp_appendix}.

\subsection{Small Molecule Property Prediction}
\label{sec-zinc}

We use SPE to learn graph positional encodings on ZINC and Alchemy. We let $\phi_l$ be Deepsets using only the top 8 eigenvectors (PE-8), and be element-wise MLPs when using all eigenvectors (PE-full). As before, we take $\rho$ to be a GIN.

\textbf{Results.} The test mean absolute errorx (MAEs) are shown in Table \ref{exp-zinc}. On ZINC, SPE performs much better than other baselines, both when using just 8 eigenvectors ($0.0736$) and all eigenvectors ($0.0693$) . On Alchemy, we always use all eigenvectors since the graph size only ranges from 8 to 12. For Alchemy we observe no significant improvement of any PE methods over base model w/o positional encodings. But SPE still achieves the least MAE among all these models.

\subsection{Out-Of-Distribution Generalization: Binding Affinity Prediction}\label{sec-ood}
\begin{table}[t]
\centering
\caption{AUROC results (5 random seeds) on DrugOOD.}
\resizebox{0.9\linewidth}{!}{%
\centering
\begin{tabular}{llllll}
\Xhline{3\arrayrulewidth}
Domain                    & PE Method & ID-Val (AUC) & ID-Test (AUC) & OOD-Val (AUC) & OOD-Test (AUC) \\ \hline
\multirow{4}{*}{Assay}    & No PE     &  $92.92_{\pm 0.14}$
            &     $92.89_{\pm 0.14}$         &   $71.02_{\pm 0.79}$           &    $71.68_{\pm1.10}$
            \\
                          & PEG    &   $92.51_{\pm 0.17}$
           &   $92.57_{\pm 0.22}$&    $70.86_{\pm 0.44}$&     $71.98_{\pm 0.65}$\\
                          & SignNet   &   $92.26_{\pm 0.21}$
           &   $92.43_{\pm 0.27}$&           $70.16_{\pm 0.56}$ &    $\bm{72.27_{\pm 0.97}}$            \\
                          & BasisNet   &  $88.96_{\pm 1.35}$          &  $89.42_{\pm 1.18}$             &  $71.19_{\pm 0.72}$             &    $71.66_{\pm 0.05}$            \\
                          & SPE      &    $92.84_{\pm 0.20}$        & $92.94_{\pm 0.15}$
             &      $71.26_{\pm 0.62}$        &    $\bm{72.53_{\pm 0.66}}$
            \\ \hline
\multirow{4}{*}{Scaffold} & No PE     &  $96.56_{\pm 0.10}$
 &  $87.95_{\pm 0.20}$ &           $79.07_{\pm 0.97}$&$68.00_{\pm 0.60}$  \\
                          & PEG    &   $95.65_{\pm 0.29}$&    $86.20_{\pm 0.14}$          &    $79.17_{\pm 0.29}$
           &  $\bm{69.15_{\pm 0.75}}$
              \\
                          & SignNet   &  $95.48_{\pm 0.34}$& $86.73_{\pm 0.56}$ &  $77.81_{\pm 0.70}$ &  $66.43_{\pm 1.06}$
           \\
                          & BasisNet   &     $85.80_{\pm 
3.75}$     &   $78.44_{\pm 2.45}$  &   $73.36_{\pm 1.44}$            & $66.32_{\pm 5.68}$           \\
                          & SPE     &$96.32_{\pm 0.28}$& $88.12_{\pm 0.41}$& $80.03_{\pm 0.58}$ &$\bm{69.64_{\pm 0.49}}$
    \\ \hline
\multirow{4}{*}{Size}     & No PE     & $93.78_{\pm 0.12}$ & $93.60_{\pm 0.27}$ & $82.76_{\pm 0.04}$ &  $\bm{66.04_{\pm 0.70}}$\\
& PEG    &   $92.46_{\pm 0.35}$ & $92.67_{\pm 0.23}$ &  $82.12_{\pm0.49}$
    &    $\bm{66.01_{\pm 0.10}}$\\
                          & SignNet   & $93.30_{\pm 0.43}$
 &$93.20_{\pm 0.39}$ &$80.67_{\pm 0.50}$ & $64.03_{\pm 0.70}$\\
                          & BasisNet   &     $86.04_{\pm 4.01}$
         &     $85.51_{\pm 4.04}$ &  $75.97_{\pm 1.71}$&$60.79_{\pm 3.19}$\\
                          & SPE     & $92.46_{\pm 0.35}$
& $92.67_{\pm 0.23}$ & $82.12_{\pm 0.49}$ & $\bm{66.02_{\pm 1.00}}$\\
    \Xhline{2\arrayrulewidth}
\end{tabular}}
\label{exp-drugood}
\end{table}
We study the relation between stability and out-of-distribution generalization using the DrugOOD dataset~\citep{ji2022drugood}. We take $\phi_l$ to be element-wise MLPs and $\rho$ be GIN as usual.

\textbf{Results.} The results are shown in Table \ref{exp-drugood}. All models have comparable Area Under ROC (AUC) on the ID-Test set. However, there is a big difference in OOD-Test performance on Scaffold and Size domains, with the unstable methods (SignNet and BasisNet) performing much worse than stable methods (No PE, PEG, SPE). This emphasizes the importance of stability in domain generalization. Note that this phenomenon is less obvious in the Assay domain, which is because the Assay domain represents concept (labels) shift instead of covariant (graph features) shift. 

\subsection{Trade-offs between stability, generalization and expressivity}\label{sec-trade-offs}
\begin{figure}[h!]
    \centering
    \begin{subfigure}
        \centering
        \includegraphics[width=0.3\linewidth]{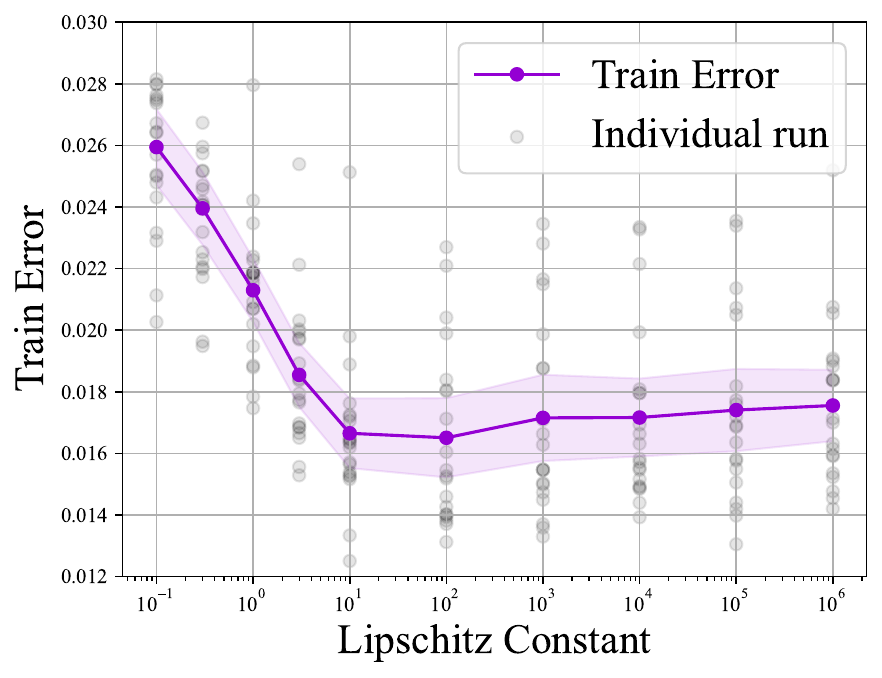}%
        \includegraphics[width=0.3\linewidth]{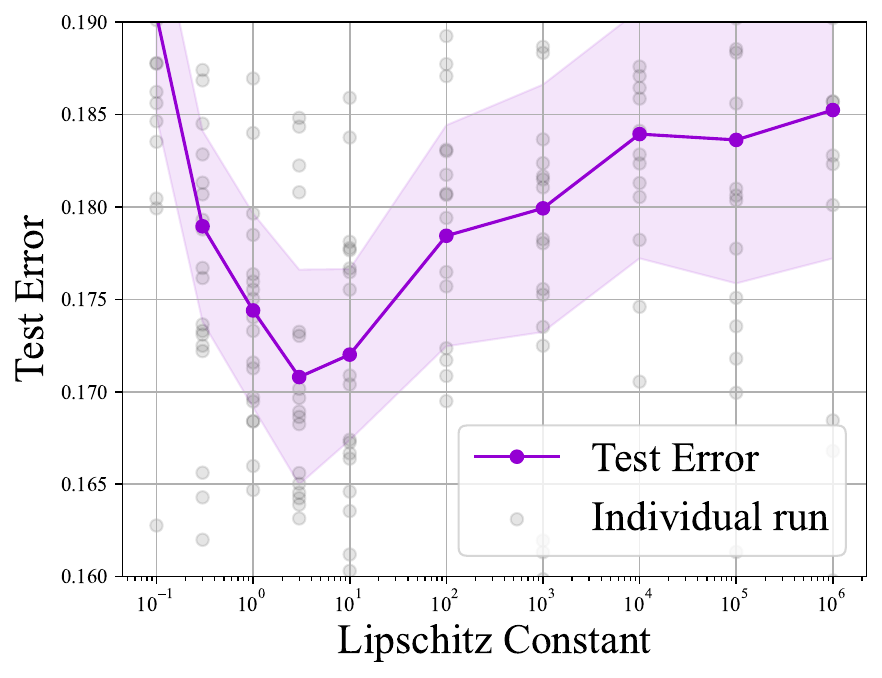}
        \includegraphics[width=0.3\linewidth]{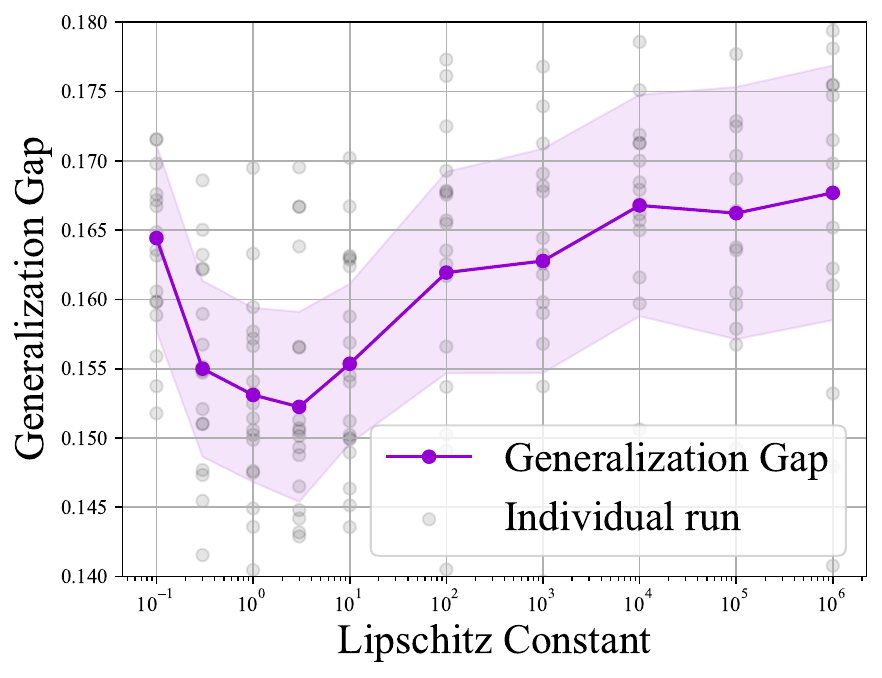}
        \includegraphics[width=0.3\linewidth]{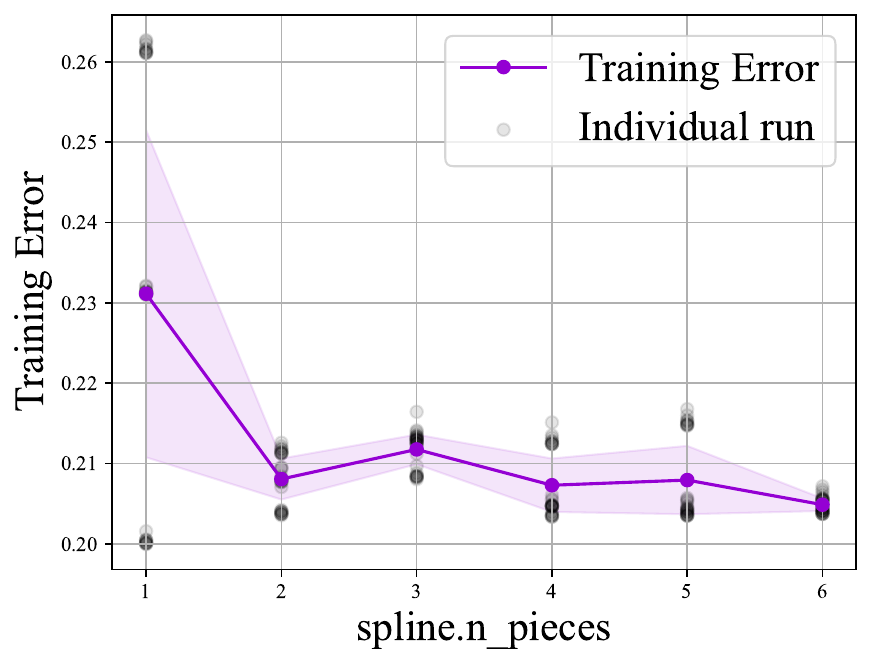}
        \includegraphics[width=0.3\linewidth]{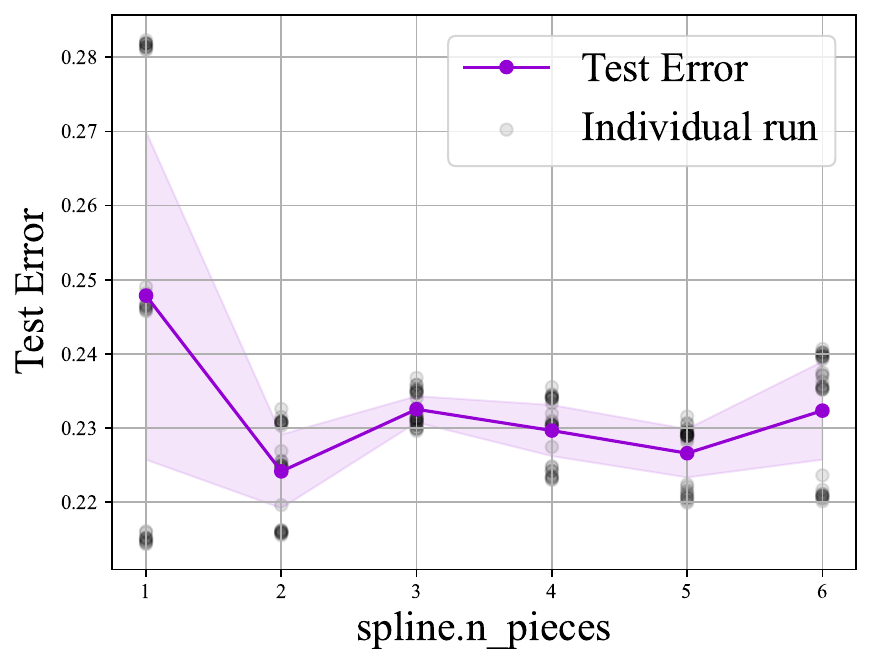}
        \includegraphics[width=0.3\linewidth]{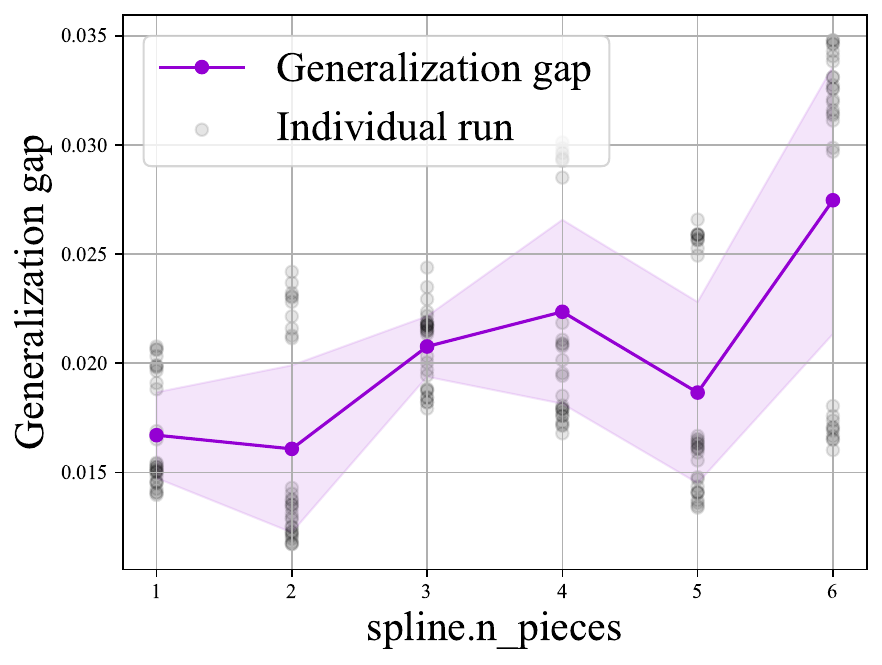}
    \end{subfigure}
    \caption{Training error, test error and generalization gap v.s. model complexity (stability). In the first row, we directly change the Lipschitz constant of individual MLPs; in the second row, we choose $\phi_{\ell}$ to be piecewise spline functions and change the number of pieces.}
    \label{exp-tradeoff}
\end{figure}

We hypothesize that stability has different effects on expressive power and generalization.  Intuitively, very high stability means that outputs change very little as inputs change. Consequently, we expect highly stable models to have lower expressive power, but to generalize more reliably to new data. To test this behavior in practice we evaluate SPE on ZINC using 8 eigenvectors. 
We control the stability by tuning the complexity of underlying neural networks in the following two ways:
\begin{enumerate}
    [leftmargin=9pt, itemsep=2pt, parsep=2pt, partopsep=1pt, topsep=-3pt]
    \item Directly control the Lipschitz constant of each MLP in SPE (in both $\phi_{\ell}$ and $\rho$) by normalizing weight matrices. 
    \item Let $\phi_{\ell}$ be a piecewise cubic spline. Increase the number of spline pieces from 1 to 6, with fewer splines corresponding to higher stability. 
\end{enumerate}
See Appendix \ref{exp_appendix} for full details. In both cases we use eight $\phi_{\ell}$ functions. 
We compute the summary statistics over different random seeds.
As a measure of expressivity, we report the average training loss over the last 10 epochs on ZINC. As a measure of stability, we report the generalization gap (the difference between the test loss and the training loss) at the best validation epoch over ZINC.

\textbf{Results.} In Figure \ref{exp-tradeoff}, we show the trend of training error, test error and generalization gap as Lipschiz constant of individual MLPs (first row) or the number of spline pieces (second row) changes. We can see that as model complexity increases (stability decreases), the training error gets reduced (more expressive power) while the generalization gap grows. This justifies the important practical role of model stability for the trade-off between expressive power and generalization.

\subsection{Counting Graph Substructures}\label{sec-substructure counting}
\begin{wrapfigure}{r}{0.45\textwidth}
    \label{exp-count}
\vspace{-40pt}
    \centering
    \includegraphics[width=0.45\textwidth]{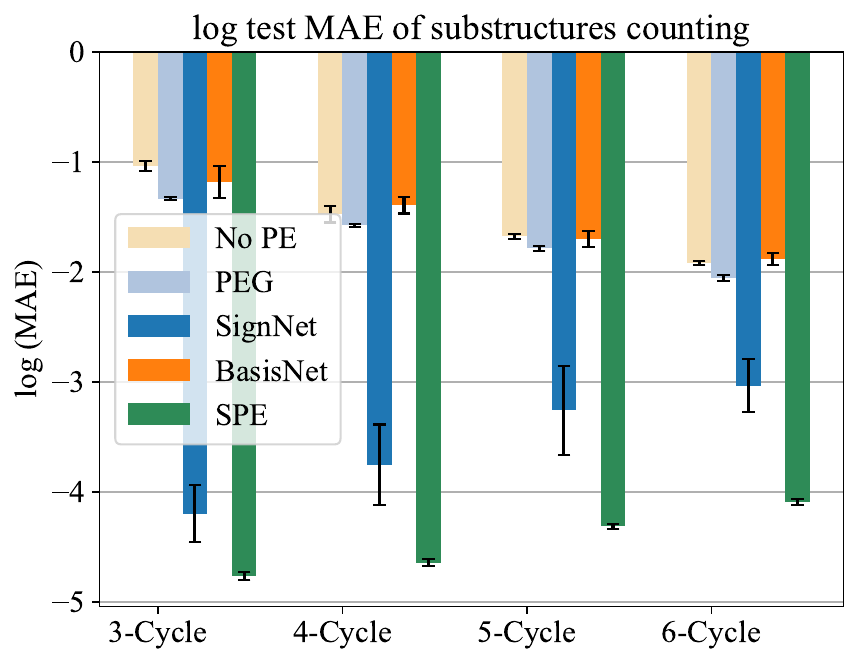}
    \vspace{-15pt}
    \caption{$\log_e(\text{Test MAE})$ over 3 random seeds on cycle counting task. Lower is better.}
    \label{exp-count}
    \vspace{-10pt}
\end{wrapfigure}

To empirically study the expressive power of SPE, we follow prior works that generate random graphs~\citep{zhao2021stars, huang2022boosting}. The dataset contains Erd\H{o}s-Renyi random graphs and other random regular graphs (see Appendix M.2.1 in~\citet{chen2020can}) and is randomly split into train/valid/test splitting with ratio 3:2:5. and label nodes according to the number of substructures they are part of. We aggregate the node labels to obtain the number of substructures in the overall graph and view this as a graph regression task. We let $\phi_l$ be element-wise MLPs and $\rho$ be GIN. 

\textbf{Results.} Figure \ref{exp-count} shows that SPE significantly outperforms SignNet in counting 3,4,5 and 6-cycles. We emphasize that linear differences in log-MAE correspond to exponentially large differences in MAE. This result shows that SPE still achieves very high expressive power, whilst enjoying improved robustness to domain-shifts thanks to its stability (see Section \ref{sec-ood}). 


\section{Conclusion}
We present SPE, a learnable Laplacian positional encoding that is both provably stable and expressive. Extensive experiments show the effectiveness of SPE on molecular property prediction benchmarks, the high expressivity in learning graph substructures, and the robustness as well as generalization ability under domain shifts. In the future, this technique can be extended to link prediction or other tasks involving large graphs where stability is also crucial and desired. Finally, our analysis provides a general technique for graph eigenspace stability, not just limited to domains of positional encodings and graph learning.

\clearpage
\section*{Acknowledgments}
The authors would like to thank Derek Lim for a constructive discussion. Yinan Wang and Pan Li are partially supported by the NSF awards PHY-2117997, IIS-2239565. 

\bibliography{BIB/general-background,BIB/gnn-background,BIB/gnn-position,BIB/network-embedding}
\bibliographystyle{iclr2024_conference}

\newpage

\newenvironment{proof_sketch}{%
  \renewcommand{\proofname}{Proof sketch}\proof}{\endproof}
\setlength{\parindent}{0pt}
\setlength{\parskip}{0.75em}

\allowdisplaybreaks   
\frenchspacing   
\hypersetup{colorlinks=true, linkcolor=blue, urlcolor=blue}   

\definecolor{ms-blue}{RGB}{94, 155, 209}
\definecolor{ms-green}{RGB}{115, 172, 83}
\definecolor{ms-red}{RGB}{234, 127, 63}
\newcommand{\bad}[1]{{\color{ms-red} #1}}
\newcommand{\badtext}[1]{{\color{ms-red} \text{#1}}}
\newcommand{\good}[1]{{\color{ms-green} #1}}
\newcommand{\goodtext}[1]{{\color{ms-green} \text{#1}}}
\newcommand{\note}[1]{{\color{ms-blue} #1}}
\newcommand{\notetext}[1]{{\color{ms-blue} \text{#1}}}

\newcommand{\abs}[1]{\left| #1 \right|}
\newcommand{\ang}[1]{\left\langle #1 \right\rangle}
\newcommand{\bkt}[1]{\left[ #1 \right]}
\newcommand{\bktz}[1]{\left\llbracket #1 \right\rrbracket}
\newcommand{\brc}[1]{\left\{ #1 \right\}}
\newcommand{\nop}[1]{\left. #1 \right.}
\newcommand{\prn}[1]{\left( #1 \right)}
\newcommand{\round}[1]{\left\lfloor #1 \right\rceil}

\newcommand{\mbkt}[1]{\mleft[ #1 \mright]}
\newcommand{\mbrc}[1]{\mleft\{ #1 \mright\}}
\newcommand*{\mprn}[1]{\mleft( #1 \mright)}
\newcommand{\equad}{\mathrel{\phantom{=}}}
\newcommand{\hquad}{\hspace{0.5em}}
\newcommand{\ifquad}{\mathrel{\phantom{\iff}}}
\newcommand{\iquad}{\mathrel{\phantom{\implies}}}
\newcommand{\zero}{\phantom{0}}


\newcommand{\bigO}[1]{\mathcal{O} \mprn{#1}}
\newcommand{\bigOmega}[1]{\Omega \mprn{#1}}
\newcommand{\bigTheta}[1]{\Theta \mprn{#1}}
\newcommand{\littleO}[1]{o \mprn{#1}}
\newcommand{\littleOmega}[1]{\omega \mprn{#1}}
\newcommand{\tildeO}[1]{\tilde{\mathcal{O}} \mprn{#1}}

\newcommand{\colsp}[1]{\mathrm{colsp} \mprn{#1}}
\newcommand{\nullsp}[1]{\mathrm{nulsp} \mprn{#1}}
\newcommand{\rank}[1]{\mathrm{rank} \mprn{#1}}
\newcommand{\rowsp}[1]{\mathrm{rowsp} \mprn{#1}}

\newcommand{\Bias}[1]{\mathrm{Bias} \mbkt{#1}}
\newcommand{\Cov}[1]{\mathrm{Cov} \mbkt{#1}}
\newcommand{\E}[1]{\mathbb{E} \mbkt{#1}}
\newcommand{\Ewrt}[2]{\mathop{\mathbb{E}}_{#1} \mbkt{#2}}
\newcommand{\given}{\:\middle\vert\:}
\renewcommand{\H}[1]{\mathbb{H} \mbkt{#1}}
\newcommand{\Hwrt}[2]{\mathbb{H}_{#1} \mbkt{#2}}
\newcommand{\I}[1]{\mathbb{I} \mbkt{#1}}
\newcommand{\Iv}[1]{\mathbbm{1} \mbkt{#1}}
\newcommand{\KL}[2]{\mathbb{KL} \mbkt{#1 \, \middle\Vert \, #2}}
\newcommand{\Kurt}[1]{\mathrm{Kurt} \mbkt{#1}}
\newcommand{\MSE}[1]{\mathrm{MSE} \mbkt{#1}}
\renewcommand{\P}[1]{\Pr \mbkt{#1}}
\newcommand{\Pwrt}[2]{\mathop{\Pr}_{#1} \mbkt{#2}}
\newcommand{\RMSE}[1]{\mathrm{RMSE} \mbkt{#1}}
\newcommand{\SE}[1]{\mathrm{SE} \mbkt{#1}}
\newcommand{\Std}[1]{\mathrm{Std} \mbkt{#1}}
\newcommand{\Var}[1]{\mathrm{Var} \mbkt{#1}}
\newcommand{\Varwrt}[2]{\mathop{\mathrm{Var}}_{#1} \mbkt{#2}}

\newcommand{\Bernoulli}[1]{\mathrm{Ber} \mprn{#1}}
\newcommand{\Beta}[2]{\mathrm{Beta} \mprn{#1, #2}}
\newcommand{\Binomial}[2]{\mathrm{Bin} \mprn{#1, #2}}
\newcommand{\DiscreteUniform}[2]{\mathcal{U} \mbrc{#1, #2}}
\newcommand{\Geometric}[1]{\mathrm{Geom} \mprn{#1}}
\newcommand{\Laplace}[2]{\mathrm{Lap} \mprn{#1, #2}}
\newcommand{\Normal}[2]{\mathcal{N} \mprn{#1, #2}}
\newcommand{\Poisson}[1]{\mathrm{Poisson} \mprn{#1}}
\newcommand{\Uniform}[2]{\mathcal{U} \mprn{#1, #2}}

\newcommand{\at}[1]{\Big|_{#1}}
\newcommand{\bigcdot}{\boldsymbol \cdot}
\newcommand{\circled}[1]{   
    \tikz[baseline=(char.base)]{\node[shape=circle,draw,inner sep=0.5pt] (char) {#1};}
}
\newcommand{\lfrac}[2]{\nop{#1 \middle/ #2}}
\newcommand{\notimpliedby}{\centernot\impliedby}
\newcommand{\notimplies}{\centernot\implies}
\newcommand{\stackclap}[2]{\stackrel{\mathclap{#1}}{#2}}   

\newenvironment{lrcases}{
    \begin{rcases}
    \begin{dcases}
}{
    \end{dcases}
    \end{rcases}
}


\appendix
\section{Deferred Proofs}

\textbf{Basic conventions.} Let $[n] = \mathbb{Z} \cap [1, n]$ and $\bktz{a, b} = \mathbb{Z} \cap [a, b]$ denote integer intervals. Let $\Pi(n)$ be the symmetric group on $n$ elements. Unless otherwise stated, eigenvalues are counted \emph{with multiplicity}; for example, the first and second smallest eigenvalues of $\mI \in \mathbb{R}^{2 \times 2}$ are both $1$. Let $\norm{\cdot}$ and $\frnorm{\cdot}$ be L2-norm and Frobenius norm of matrices.

\textbf{Matrix indexing.} For any matrix $\mA \in \mathbb{R}^{n \times d}$:
\begin{itemize}[leftmargin=15pt, itemsep=2pt, parsep=2pt, partopsep=1pt, topsep=-3pt]
    \item Let $\bkt{\mA}_{i,j} \in \mathbb{R}$ be the entry at row $i$ and column $j$
    \item Let $\bkt{\mA}_{\ang{i}} \in \mathbb{R}^d$ be row $i$ represented as a column vector
    \item Let $\bkt{\mA}_j \in \mathbb{R}^n$ be column $j$
    \item For any set $\mathcal{J} = \brc{j_1, \cdots, j_{d'}} \subseteq [d]$, let $\bkt{\mA}_{\mathcal{J}} \in \mathbb{R}^{n \times d'}$ be columns $j_1, \cdots, j_{d'}$ arranged in a matrix
    \item If $n = d$, let $\diag(\mA) \in \mathbb{R}^n$ be the diagonal represented as a column vector
\end{itemize}

\textbf{Special classes of matrices.} Define the following sets:
\begin{itemize}[leftmargin=15pt, itemsep=2pt, parsep=2pt, partopsep=1pt, topsep=-3pt]
    \item All $n \times n$ diagonal matrices: $\mathrm{D}(n) = \brc{\mD \in \mathbb{R}^{n \times n}: \forall i \neq j, \, \bkt{\mD}_{i,j} = 0}$
    \item The orthogonal group in dimension $n$, i.e. all $n \times n$ orthogonal matrices: $\mathrm{O}(n) = \brc{\mQ \in \mathbb{R}^{n \times n}: \mQ^{-1} = \mQ^\top}$
    \item All $n \times n$ permutation matrices: $\Pi(n) = \brc{\mP \in \brc{0, 1}^{n \times n}: \mP^{-1}=\mP^{\top}}$
    \item All $n \times n$ symmetric matrices: $\mathrm{S}(n) = \brc{\mA \in \mathbb{R}^{n \times n}: \mA = \mA^\top}$
\end{itemize}

\textbf{Spectral graph theory.} Many properties of an undirected graph are encoded in its (normalized) \textit{Laplacian matrix}, which is always symmetric and positive semidefinite. In this paper, we only consider connected graphs. The Laplacian matrix of a connected graph always has a zero eigenvalue with multiplicity $1$ corresponding to an eigenvector of all ones, and all other eigenvalues positive. The \emph{Laplacian eigenmap} technique uses the eigenvectors corresponding to the $d$ smallest positive eigenvalues as vertex positional encodings. We assume the $d$th and $(d + 1)$th smallest positive eigenvalues of the Laplacian matrices under consideration are distinct. This motivates definitions for the following sets of matrices:
\begin{itemize}[leftmargin=15pt, itemsep=2pt, parsep=2pt, partopsep=1pt, topsep=-3pt]
    \item All $n \times n$ Laplacian matrices satisfying the properties below:
    \begin{align}
    \begin{split}
        \mathrm{L}_d(n) = &\{\mL \in \mathrm{S}(n): \mL \succeq \mathbf{0} \, \land \, \mathrm{rank}(\mL) =n-1 \, \land \, \mL \mathbf{1} = \mathbf{0} \, \\
        &\land \, \text{$d$-th and $(d + 1)$-th smallest positive eigenvalues are distinct}\}
    \end{split}
    \end{align}

    \item All $n \times n$ diagonal matrices with positive diagonal entries: $\mathrm{D}_+(n) = \brc{\mD \in \mathrm{D}(n): \mD \succ \mathbf{0}}$
    \item All $n \times d$ matrices with orthonormal columns: $\mathrm{O}(n, d) = \brc{\mQ \in \mathbb{R}^{n \times d}:\bkt{\mQ}_i \bigcdot \bkt{\mQ}_j = \mathbbm{1}[i = j]}$

\end{itemize}

\textbf{Eigenvalues and eigenvectors.} For the first two functions below, assume the given matrix has at least $d$ positive eigenvalues.
\begin{itemize}[leftmargin=15pt, itemsep=2pt, parsep=2pt, partopsep=1pt, topsep=-3pt]
    \item Let $\Uplambda_d: \bigcup_{n \geq d} \mathrm{S}(n) \rightarrow \mathrm{D}_+(d)$ return a diagonal matrix containing the $d$ smallest positive eigenvalues of the given matrix, sorted in increasing order.
    \item Let $\mathscr{X}_d: \bigcup_{n \geq d} \mathrm{S}(n) \rightarrow \mathrm{O}(n, d)$ return a matrix whose columns contain an unspecified set of orthonormal eigenvectors corresponding to the $d$ smallest positive eigenvalues (sorted in increasing order) of the given matrix.
    \item Let $\mathscr{X}_{\ang{d}}: \bigcup_{n \geq d} \mathrm{S}(n) \rightarrow \mathrm{O}(n, d)$ return a matrix whose columns contain an unspecified set of orthonormal eigenvectors corresponding to the $d$ smallest eigenvalues (sorted in increasing order) of the given matrix.
    \item Let $\Uplambda^d: \bigcup_{n \geq d} \mathrm{S}(n) \rightarrow \mathrm{D}(d)$ return a diagonal matrix containing the $d$ greatest eigenvalues of the given matrix, sorted in increasing order.
    \item Let $\mathscr{X}^d: \bigcup_{n \geq d} \mathrm{S}(n) \rightarrow \mathrm{O}(n, d)$ return a matrix whose columns contain an unspecified set of orthonormal eigenvectors corresponding to the $d$ greatest eigenvalues (sorted in increasing order) of the given matrix.
\end{itemize}

\textbf{Batch submatrix multiplication.} Let $\mA \in \mathbb{R}^{n \times d}$ be a matrix and let $\brc{\mathcal{J}_k}_{k=1}^p$ be a partition of $[d]$. For each $k \in [p]$, let $d_k = \abs{\mathcal{J}_k}$ and let $\mB_k \in \mathbb{R}^{d_k \times d_k}$ be a matrix. For notational convenience, let $\mathcal{B} = \brc{(\mB_k, \mathcal{J}_k)}_{k=1}^p$. Define a binary \emph{star operator} such that
\begin{align}
    \mA \star \mathcal{B} \in \mathbb{R}^{n \times d} \enspace \text{is the matrix where} \enspace \forall k \in [p], \, \bkt{\mA \star \mathcal{B}}_{\mathcal{J}_k} = \bkt{\mA}_{\mathcal{J}_k} \mB_k \, .
\end{align}
We primarily use batch submatrix multiplication in the context of orthogonal invariance, where an orthogonal matrix is applied to each eigenspace. For any (eigenvalue) matrix $\boldsymbol{\Lambda} \in \mathrm{D}(d)$, define
\begin{align}
    \mathrm{O}(\boldsymbol{\Lambda}) = \brc{\brc{(\mQ_k, \mathcal{J}_k)}_{k=1}^p: \brc{\mQ_k}_{k=1}^p \in \prod_{k=1}^p \mathrm{O}(d_k)} \, ,
\end{align}
where $\mathcal{J}_k = \brc{j \in [d]: \bkt{\diag(\boldsymbol{\Lambda})}_j = \sigma_k}$ for each $k \in [p]$ and $\brc{\sigma_k}_{k=1}^p$ are the distinct values in $\diag(\boldsymbol{\Lambda})$. In this context, $\mathrm{O}(\boldsymbol{\Lambda})$ is the domain of the right operand of $\star$.

\subsection{Proof of Theorem \ref{theorem-stable}}
\thmstability*
%
%

\begin{proof}
    Fix Laplacians $\mL, \mL' \in \mathrm{L}_d(n)$. We will show that for any permutation matrix $\mP  \in \Pi(n)$,
    \begin{align}
    \begin{split}
        \frnorm{\mathrm{SPE}(\mathrm{EVD}(\mL)) - \mP  \, \mathrm{SPE}(\mathrm{EVD}(\mL'))} \leq &\prn{\alpha_1 + \alpha_2} d^{\frac{5}{4}} \frnorm{\mL - \mP  \mL' \mP ^\top}^{\frac{1}{2}} \\
        &+ \prn{\alpha_2 \frac{d}{\gamma} + \alpha_3} \frnorm{\mL - \mP  \mL' \mP ^\top} \, .
    \end{split}
    \end{align}
    Fix $\mP  \in \Pi(n)$. For notational convenience, we denote $\text{diag}\{\rho(\vbf{\lambda})\}$ by $\rho(\bm{\Lambda})$ with $\bm{\Lambda}=\text{diag}\{\vbf{\lambda}\}$, and let $\mX = \mathscr{X}_d(\mL)$, $\boldsymbol{\Lambda} = \Uplambda_d(\mL)$, $\mX' = \mathscr{X}_d(\mL')$, and $\boldsymbol{\Lambda}' = \Uplambda_d(\mL')$. Then
    \begin{align}
        &\equad \frnorm{\mathrm{SPE}(\mathrm{EVD}(\mL)) - \mP  \, \mathrm{SPE}(\mathrm{EVD}(\mL'))} \\
        &\stackclap{(a)}{=} \frnorm{\boldsymbol{\rho} \mprn{\mX \boldsymbol{\phi}_1(\boldsymbol{\Lambda}) \mX^\top, \cdots, \mX \boldsymbol{\phi}_m(\boldsymbol{\Lambda}) \mX^\top} - \mP  \boldsymbol{\rho} \mprn{\mX' \boldsymbol{\phi}_1 \mprn{\boldsymbol{\Lambda}'} \mX'^\top, \cdots, \mX' \boldsymbol{\phi}_m \mprn{\boldsymbol{\Lambda}'} \mX'^\top}} \\
        &\stackclap{(b)}{=} \frnorm{\boldsymbol{\rho} \mprn{\mX \boldsymbol{\phi}_1(\boldsymbol{\Lambda}) \mX^\top, \cdots, \mX \boldsymbol{\phi}_m(\boldsymbol{\Lambda}) \mX^\top} - \boldsymbol{\rho} \mprn{\mP  \mX' \boldsymbol{\phi}_1 \mprn{\boldsymbol{\Lambda}'} \mX'^\top \mP ^\top, \cdots, \mP  \mX' \boldsymbol{\phi}_m \mprn{\boldsymbol{\Lambda}'} \mX'^\top \mP ^\top}} \\
        &\stackclap{(c)}{\leq} J \sum_{\ell=1}^m \frnorm{\mX \boldsymbol{\phi}_\ell(\boldsymbol{\Lambda}) \mX^\top - \mP  \mX' \boldsymbol{\phi}_\ell \mprn{\boldsymbol{\Lambda}'} \mX'^\top \mP ^\top} \\
        \begin{split}
        &\stackclap{(d)}{\leq} J \sum_{\ell=1}^m  \frnorm{\mX \boldsymbol{\phi}_\ell(\boldsymbol{\Lambda}) \mX^\top - \mP  \mX' \boldsymbol{\phi}_\ell(\boldsymbol{\Lambda}) \mX'^\top \mP ^\top} \\ &\qquad\qquad\qquad+ \frnorm{\mP  \mX' \boldsymbol{\phi}_\ell(\boldsymbol{\Lambda}) \mX'^\top \mP ^\top - \mP  \mX' \boldsymbol{\phi}_\ell \mprn{\boldsymbol{\Lambda}'} \mX'^\top \mP ^\top}\end{split}\\
        &\stackclap{(e)}{=} J \sum_{\ell=1}^m \biggl\{ \underbrace{\frnorm{\mX \boldsymbol{\phi}_\ell(\boldsymbol{\Lambda}) \mX^\top - \mP  \mX' \boldsymbol{\phi}_\ell(\boldsymbol{\Lambda}) \mX'^\top \mP ^\top}}_{\circled{1}} + \underbrace{\frnorm{\mX' \boldsymbol{\phi}_\ell(\boldsymbol{\Lambda}) \mX'^\top - \mX' \boldsymbol{\phi}_\ell \mprn{\boldsymbol{\Lambda}'} \mX'^\top}}_{\circled{2}} \biggr\} \, ,
    \end{align}
    where (a) holds by definition of SPE, (b) holds by permutation equivariance of $\boldsymbol{\rho}$, (c) holds by Lipschitz continuity of $\boldsymbol{\rho}$, (d) holds by the triangle inequality, and (e) holds by permutation invariance of Frobenius norm.
    
    Next, we upper-bound $\circled{1}$. Let $\delta = \min \brc{\gamma, d^{-\frac{1}{4}} \frnorm{\mL - \mP  \mL' \mP ^\top}^{\frac{1}{2}}}$. The $\delta = 0$ case is trivial, because
    \begin{align}
    \begin{split}
        \delta = 0 \iff \mL = \mP  \mL' \mP ^\top &\implies \mathrm{SPE}(\mathrm{EVD}(\mL)) = \mathrm{SPE}(\mathrm{EVD} \mprn{\mP  \mL' \mP ^\top})\\
        &\stackclap{(a)}{\iff} \mathrm{SPE}(\mathrm{EVD}(\mL)) = \mP  \, \mathrm{SPE}(\mathrm{EVD}(\mL')) \, ,
    \end{split}
    \end{align}
    where (a) holds due to permutation equivariance of $\mathrm{SPE}$. Thus, assume $\delta > 0$ for the remainder of this proof. Let
    \begin{align}
        \brc{j_k}_{k=1}^{p+1} = \brc{1} \cup \brc{j \in [d + 1]: \lambda_j - \lambda_{j-1} \geq \delta}, \quad 1 = j_1 < \cdots < j_{p+1} \stackclap{(a)}{=} d + 1
    \end{align}
    be the \emph{keypoint} indices at which eigengaps are greater than or equal to $\delta$, where (a) holds because
    \begin{align}
        \lambda_{d+1} - \lambda_d = \gamma \geq \min \brc{\gamma, d^{-\frac{1}{4}} \frnorm{\mL - \mP  \mL' \mP ^\top}^{\frac{1}{2}}} = \delta \, .
    \end{align}
    For each $k \in [p]$, let $\mathcal{J}_k = \brc{j \in [d]: j_k \leq j < j_{k+1}}$ be a \emph{chunk} of contiguous indices at which eigengaps are less than $\delta$, and let $d_k = \abs{\mathcal{J}_k}$ be the size of the chunk. Define a matrix $\tilde{\boldsymbol{\Lambda}} \in \mathrm{D}(d)$ as
    \begin{align}
        \forall k \in [p], \, \forall j \in \mathcal{J}_k, \, \bkt{\diag \mprn{\tilde{\boldsymbol{\Lambda}}}}_j = \lambda_{j_k} \, .
    \end{align}
    It follows that
    \begin{align}
        \begin{split}
        \circled{1} &\stackclap{(a)}{\leq} \frnorm{\mX \boldsymbol{\phi}_\ell(\boldsymbol{\Lambda}) \mX^\top - \mX \boldsymbol{\phi}_\ell \mprn{\tilde{\boldsymbol{\Lambda}}} \mX^\top} + \frnorm{\mX \boldsymbol{\phi}_\ell \mprn{\tilde{\boldsymbol{\Lambda}}} \mX^\top - \mP  \mX' \boldsymbol{\phi}_\ell \mprn{\tilde{\boldsymbol{\Lambda}}} \mX'^\top \mP ^\top} \\
        &\hspace{140pt}+\frnorm{\mP  \mX' \boldsymbol{\phi}_\ell \mprn{\tilde{\boldsymbol{\Lambda}}} \mX'^\top \mP ^\top - \mP  \mX' \boldsymbol{\phi}_\ell(\boldsymbol{\Lambda}) \mX'^\top \mP ^\top} 
        \end{split}\\
        \begin{split}
        &\stackclap{(b)}{=} \frnorm{\mX \boldsymbol{\phi}_\ell(\boldsymbol{\Lambda}) \mX^\top - \mX \boldsymbol{\phi}_\ell \mprn{\tilde{\boldsymbol{\Lambda}}} \mX^\top} + \frnorm{\mX \boldsymbol{\phi}_\ell \mprn{\tilde{\boldsymbol{\Lambda}}} \mX^\top - \mP  \mX' \boldsymbol{\phi}_\ell \mprn{\tilde{\boldsymbol{\Lambda}}} \mX'^\top \mP ^\top} \\
        &\hspace{140pt}+ \frnorm{\mX' \boldsymbol{\phi}_\ell \mprn{\tilde{\boldsymbol{\Lambda}}} \mX'^\top - \mX' \boldsymbol{\phi}_\ell(\boldsymbol{\Lambda}) \mX'^\top}
        \end{split}\\
        \begin{split}
        &\stackclap{(c)}{\leq} \norm{\mX}^2 \frnorm{\boldsymbol{\phi}_\ell(\boldsymbol{\Lambda}) - \boldsymbol{\phi}_\ell \mprn{\tilde{\boldsymbol{\Lambda}}}} + \frnorm{\mX \boldsymbol{\phi}_\ell \mprn{\tilde{\boldsymbol{\Lambda}}} \mX^\top - \mP  \mX' \boldsymbol{\phi}_\ell \mprn{\tilde{\boldsymbol{\Lambda}}} \mX'^\top \mP ^\top} \\&\hspace{110pt}+ \norm{\mX'}^2\frnorm{\boldsymbol{\phi}_\ell \mprn{\tilde{\boldsymbol{\Lambda}}} - \boldsymbol{\phi}_\ell(\boldsymbol{\Lambda})} \end{split}\\
        &\stackclap{(d)}{=} 2 \frnorm{\boldsymbol{\phi}_\ell(\boldsymbol{\Lambda}) - \boldsymbol{\phi}_\ell \mprn{\tilde{\boldsymbol{\Lambda}}}} + \frnorm{\mX \boldsymbol{\phi}_\ell \mprn{\tilde{\boldsymbol{\Lambda}}} \mX^\top - \mP  \mX' \boldsymbol{\phi}_\ell \mprn{\tilde{\boldsymbol{\Lambda}}} \mX'^\top \mP ^\top} \\
        &\stackclap{(e)}{\leq} 2 K_\ell \frnorm{\boldsymbol{\Lambda} - \tilde{\boldsymbol{\Lambda}}} + \frnorm{\mX \boldsymbol{\phi}_\ell \mprn{\tilde{\boldsymbol{\Lambda}}} \mX^\top - \mP  \mX' \boldsymbol{\phi}_\ell \mprn{\tilde{\boldsymbol{\Lambda}}} \mX'^\top \mP ^\top} \\
        \begin{split}
        &\stackclap{(f)}{=} 2 K_\ell \frnorm{\boldsymbol{\Lambda} - \tilde{\boldsymbol{\Lambda}}} \\&\hspace{15pt}+ \frnorm{\sum_{k=1}^p \bkt{\mX}_{\mathcal{J}_k} \bkt{\boldsymbol{\phi}_\ell \mprn{\tilde{\boldsymbol{\Lambda}}}}_{\mathcal{J}_k, \mathcal{J}_k} \bkt{\mX}_{\mathcal{J}_k}^\top - \mP  \prn{\sum_{k=1}^p \bkt{\mX'}_{\mathcal{J}_k} \bkt{\boldsymbol{\phi}_\ell \mprn{\tilde{\boldsymbol{\Lambda}}}}_{\mathcal{J}_k, \mathcal{J}_k} \bkt{\mX'}_{\mathcal{J}_k}^\top} \mP ^\top} \end{split}\\
        &\stackclap{(g)}{\leq} 2 K_\ell \underbrace{\frnorm{\boldsymbol{\Lambda} - \tilde{\boldsymbol{\Lambda}}}}_{\circled{3}} + \sum_{k=1}^p \underbrace{\frnorm{\bkt{\mX}_{\mathcal{J}_k} \bkt{\boldsymbol{\phi}_\ell \mprn{\tilde{\boldsymbol{\Lambda}}}}_{\mathcal{J}_k, \mathcal{J}_k} \bkt{\mX}_{\mathcal{J}_k}^\top - \mP  \bkt{\mX'}_{\mathcal{J}_k} \bkt{\boldsymbol{\phi}_\ell \mprn{\tilde{\boldsymbol{\Lambda}}}}_{\mathcal{J}_k, \mathcal{J}_k} \bkt{\mX'}_{\mathcal{J}_k}^\top \mP ^\top}}_{\circled{4}} \, ,
    \end{align}
    where (a) holds by the triangle inequality, (b) holds by permutation invariance of Frobenius norm, (c) holds by \cref{lemma:frobenius-norm-of-product}, (d) holds because $\mX$ and $\mX'$ have orthonormal columns, (e) holds by Lipschitz continuity of $\boldsymbol{\phi}_\ell$, (f) holds by block matrix algebra, and (g) holds by the triangle inequality. Next, we upper-bound $\circled{3}$:
    \begin{align}
        \circled{3} &\stackclap{(a)}{=} \sqrt{\sum_{k=1}^p \sum_{j \in \mathcal{J}_k} (\lambda_j - \lambda_{j_k})^2} \\
        &\stackclap{(b)}{=} \sqrt{\sum_{k=1}^p \sum_{j \in \mathcal{J}_k} \prn{\sum_{j' = j_k + 1}^j (\lambda_{j'} - \lambda_{j' - 1})}^2} \\
        &\stackclap{(c)}{\leq} \sqrt{\sum_{k=1}^p \sum_{j \in \mathcal{J}_k} \prn{\sum_{j' = j_k + 1}^j \delta}^2} \\
        &= \delta \sqrt{\sum_{k=1}^p \sum_{j \in \mathcal{J}_k} (j - j_k)^2} \\
        &\stackclap{(d)}{=} \delta \sqrt{\sum_{k=1}^p \sum_{j=1}^{d_k - 1} j^2} \\
        &\leq \delta \sqrt{\sum_{k=1}^p d_k^3} \, ,
    \end{align}
    where (a) holds by definition of $\tilde{\boldsymbol{\Lambda}}$, (b) holds because the innermost sum in (b) telescopes, (c) holds because $\mathcal{J}_k$ is a chunk of contiguous indices at which eigengaps are less than $\delta$, and (d) holds because $\mathcal{J}_k$ is a contiguous integer interval.
    
    Next, we upper-bound $\circled{4}$. By definition of $\tilde{\boldsymbol{\Lambda}}$, the entries $\bkt{\diag \mprn{\tilde{\boldsymbol{\Lambda}}}}_j$ are equal for all $j \in \mathcal{J}_k$. By permutation equivariance of $\boldsymbol{\phi}_\ell$, the entries $\bkt{\diag \mprn{\boldsymbol{\phi}_\ell \mprn{\tilde{\boldsymbol{\Lambda}}}}}_j$ are equal for all $j \in \mathcal{J}_k$. Thus, $\bkt{\boldsymbol{\phi}_\ell \mprn{\tilde{\boldsymbol{\Lambda}}}}_{\mathcal{J}_k, \mathcal{J}_k} = \mu_{\ell,k} \mI$ for some $\mu_{\ell,k} \in \mathbb{R}$. As $\phi_{\ell}$ is Lipschitz continuous and defined on a bounded domain $[0,2]^d$, it must be bounded by constant $M_{\ell}=\sup_{\vbf{\lambda}\in [0,2]^d}\phi_{\ell}(\vbf{\lambda})$. Then by boundedness of $\boldsymbol{\phi}_\ell$,
    \begin{align}
        \abs{\mu_{\ell,k}} = \frac{1}{\sqrt{d_k}} \frnorm{\mu_{\ell,k} \mI} = \frac{1}{\sqrt{d_k}} \frnorm{\bkt{\boldsymbol{\phi}_\ell \mprn{\tilde{\boldsymbol{\Lambda}}}}_{\mathcal{J}_k, \mathcal{J}_k}} \leq \frac{1}{\sqrt{d_k}} \frnorm{\boldsymbol{\phi}_\ell \mprn{\tilde{\boldsymbol{\Lambda}}}} \leq \frac{M_\ell}{\sqrt{d_k}} \, .
    \end{align}
    Therefore,
    \begin{align}
        \circled{4} &= \frnorm{\bkt{\mX}_{\mathcal{J}_k} (\mu_{\ell,k} \mI) \bkt{\mX}_{\mathcal{J}_k}^\top - \mP  \bkt{\mX'}_{\mathcal{J}_k} (\mu_{\ell,k} \mI) \bkt{\mX'}_{\mathcal{J}_k}^\top \mP ^\top} \\
        &= \abs{\mu_{\ell,k}} \frnorm{\bkt{\mX}_{\mathcal{J}_k} \bkt{\mX}_{\mathcal{J}_k}^\top - \mP  \bkt{\mX'}_{\mathcal{J}_k} \bkt{\mX'}_{\mathcal{J}_k}^\top \mP ^\top} \\
        &\leq \frac{M_\ell}{\sqrt{d_k}} \frnorm{\bkt{\mX}_{\mathcal{J}_k} \bkt{\mX}_{\mathcal{J}_k}^\top - \mP  \bkt{\mX'}_{\mathcal{J}_k} \bkt{\mX'}_{\mathcal{J}_k}^\top \mP ^\top} \, .
    \end{align}
    
    Now, we consider two cases. Case 1: $k \geq 2$ or $\lambda_1 - \lambda_0 \geq \delta$. Define the matrices
    \begin{align}
        \mZ_k = \bkt{\mX}_{\mathcal{J}_k} \enspace \text{and} \enspace \mZ'_k = \bkt{\mX'}_{\mathcal{J}_k} \, .
    \end{align}
    There exists an orthogonal matrix $\mQ_k \in \mathrm{O}(d_k)$ such that
    \begin{align}
        \frnorm{\mZ_k - \mP  \mZ'_k \mQ_k} &\stackclap{(a)}{=} \frnorm{\bkt{\mathscr{X}_d(\mL)}_{\mathcal{J}_k} - \bkt{\mathscr{X}_d \mprn{\mP  \mL' \mP ^\top}}_{\mathcal{J}_k} \mQ_k} \\
        &\stackclap{(b)}{\leq} \frac{\sqrt{8} \frnorm{\mL - \mP  \mL' \mP ^\top}}{\min \brc{\lambda_{j_k} - \lambda_{j_k - 1}, \lambda_{j_{k+1}} - \lambda_{j_{k+1} - 1}}} \\
        &\stackclap{(c)}{\leq} \frac{\sqrt{8} \frnorm{\mL - \mP  \mL' \mP ^\top}}{\delta} \, , \label{eq:davis-kahan-robustness-normal}
    \end{align}
    where (a) holds by \cref{lemma:permutation-equivariance-of-eigenvectors,lemma:permutation-invariance-of-eigenvalues}, (b) holds by \cref{proposition:davis-kahan},\footnote{We can apply \cref{proposition:davis-kahan} because $\mathscr{X}_d$ extracts the same contiguous interval of eigenvalue indices for any matrix in $\mathrm{L}_d(n)$, and $\mP  \mL' \mP ^\top \in \mathrm{L}_d(n)$ by \cref{lemma:permutation-equivariance-of-eigenvectors,lemma:permutation-invariance-of-eigenvalues}.} and (c) holds because $j_k$ and $j_{k+1}$ are keypoint indices at which eigengaps are greater than or equal to $\delta$.
    
    Case 2: $k = 1$ and $\lambda_1 - \lambda_0 < \delta$. Define the matrices
    \begin{align}
        \mZ_1 = \begin{bmatrix}
            \frac{1}{\sqrt{n}} \mathbf{1} & \bkt{\mX}_{\mathcal{J}_1}
        \end{bmatrix} \enspace \text{and} \enspace \mZ'_1 = \begin{bmatrix}
            \frac{1}{\sqrt{n}} \mathbf{1} & \bkt{\mX'}_{\mathcal{J}_1}
        \end{bmatrix} \, .
    \end{align}
    There exists an orthogonal matrix $\mQ_1 \in \mathrm{O}(d_1 + 1)$ such that
    \begin{align}
        \frnorm{\mZ_1 - \mP  \mZ'_1 \mQ_1} &\stackclap{(a)}{=} \frnorm{\mathscr{X}_{\ang{d_1 + 1}}(\mL) - \mathscr{X}_{\ang{d_1 + 1}} \mprn{\mP  \mL' \mP ^\top} \, \mQ_1} \\
        &\stackclap{(b)}{\leq} \frac{\sqrt{8} \frnorm{\mL - \mP  \mL' \mP ^\top}}{\lambda_{j_2} - \lambda_{j_2 - 1}} \\
        &\stackclap{(c)}{\leq} \frac{\sqrt{8} \frnorm{\mL - \mP  \mL' \mP ^\top}}{\delta} \, , \label{eq:davis-kahan-robustness-special}
    \end{align}
    where (a) holds by \cref{lemma:permutation-equivariance-of-eigenvectors,lemma:permutation-invariance-of-eigenvalues}, (b) holds by \cref{proposition:davis-kahan},\footnote{We can apply \cref{proposition:davis-kahan} because $\mathscr{X}_{\ang{d_1 + 1}}$ extracts the same contiguous interval of eigenvalue indices for any matrix.} and (c) holds because $j_2$ is a keypoint index at which the eigengap is greater than or equal to $\delta$.
    
    Hence, in both cases,
    \begin{align}
        &\quad\frnorm{\bkt{\mX}_{\mathcal{J}_k} \bkt{\mX}_{\mathcal{J}_k}^\top - \mP  \bkt{\mX'}_{\mathcal{J}_k} \bkt{\mX'}_{\mathcal{J}_k}^\top \mP ^\top} \stackclap{(a)}{=} \frnorm{\mZ_k \mZ_k^\top - \mP  \mZ'_k \mZ_k^{\prime\top} \mP ^\top} \\
        &\stackclap{(b)}{=} \frnorm{\mZ_k \mZ_k^\top - \mP  \mZ'_k \mQ_k \mQ_k^\top \mZ_k^{\prime\top} \mP ^\top} \\
        &\stackclap{(c)}{\leq} \frnorm{\mZ_k \mZ_k^\top - \mZ_k \mQ_k^\top \mZ_k^{\prime\top} \mP ^\top} + \frnorm{\mZ_k \mQ_k^\top \mZ_k^{\prime\top} \mP ^\top - \mP  \mZ'_k \mQ_k \mQ_k^\top \mZ_k^{\prime\top} \mP ^\top} \\
        &\stackclap{(d)}{\leq} \norm{\mZ_k} \frnorm{\mZ_k^\top - \mQ_k^\top \mZ_k^{\prime\top} \mP ^\top} + \frnorm{\mZ_k - \mP  \mZ'_k \mQ_k} \norm{\mQ_k} \norm{\mZ_k'} \norm{\mP } \\
        &\stackclap{(e)}{=} \frnorm{\mZ_k^\top - \mQ_k^\top \mZ_k^{\prime\top} \mP ^\top} + \frnorm{\mZ_k - \mP  \mZ'_k \mQ_k} \\
        &\stackclap{(f)}{=} 2 \frnorm{\mZ_k - \mP  \mZ'_k \mQ_k} \\
        &\stackclap{(g)}{\leq} \frac{\sqrt{32} \frnorm{\mL - \mP  \mL' \mP ^\top}}{\delta} \, ,
    \end{align}
    where (a) holds in Case 2 because
    \begin{align}
        &\hspace{10pt}\bkt{\mX}_{\mathcal{J}_k} \bkt{\mX}_{\mathcal{J}_k}^\top - \mP  \bkt{\mX'}_{\mathcal{J}_k} \bkt{\mX'}_{\mathcal{J}_k}^\top \mP ^\top\\ &= \frac{1}{n} \mathbf{1} \mathbf{1}^\top + \bkt{\mX}_{\mathcal{J}_k} \bkt{\mX}_{\mathcal{J}_k}^\top - \frac{1}{n} \mathbf{1} \mathbf{1}^\top - \mP  \bkt{\mX'}_{\mathcal{J}_k} \bkt{\mX'}_{\mathcal{J}_k}^\top \mP ^\top \\
        &= \frac{1}{n} \mathbf{1} \mathbf{1}^\top + \bkt{\mX}_{\mathcal{J}_k} \bkt{\mX}_{\mathcal{J}_k}^\top - \mP  \prn{\frac{1}{n} \mathbf{1} \mathbf{1}^\top + \bkt{\mX'}_{\mathcal{J}_k} \bkt{\mX'}_{\mathcal{J}_k}^\top} \mP ^\top \\
        &= \begin{bmatrix}
            \frac{1}{\sqrt{n}} \mathbf{1} & \bkt{\mX}_{\mathcal{J}_k}
        \end{bmatrix} \begin{bmatrix}
            \frac{1}{\sqrt{n}} \mathbf{1}^\top \\
            \bkt{\mX}_{\mathcal{J}_k}^\top
        \end{bmatrix} - \mP  \begin{bmatrix}
            \frac{1}{\sqrt{n}} \mathbf{1} & \bkt{\mathbf{X'}}_{\mathcal{J}_k}
        \end{bmatrix} \begin{bmatrix}
            \frac{1}{\sqrt{n}} \mathbf{1}^\top \\
            \bkt{\mathbf{X'}}_{\mathcal{J}_k}^\top
        \end{bmatrix} \mP ^\top \\
        &= \mZ_k \mZ_k^\top - \mP  \mZ'_k \mZ_k^{\prime\top} \mP ^\top \, ,
    \end{align}
    (b) holds because $\mQ_k \mQ_k^\top = \mI$, (c) holds by the triangle inequality, (d) holds by \cref{lemma:frobenius-norm-of-product}, (e) holds because $\mZ_k$ and $\mZ'_k$ have orthonormal columns and $\mQ_k$ and $\mP $ are orthogonal, (f) holds because Frobenius norm is invariant to matrix transpose, and (g) holds by substituting in \cref{eq:davis-kahan-robustness-normal,eq:davis-kahan-robustness-special}. Combining these results,
    \begin{align}
        \circled{4} \leq \frac{\sqrt{32} \, M_\ell \frnorm{\mL - \mP  \mL' \mP ^\top}}{\sqrt{d_k} \, \delta} \, .
    \end{align}
    
    Next, we upper-bound $\circled{2}$:
    \begin{align}
        \circled{2} &\stackclap{(a)}{\leq} \norm{\mX'}^2 \frnorm{\boldsymbol{\phi}_\ell(\boldsymbol{\Lambda}) - \boldsymbol{\phi}_\ell \mprn{\boldsymbol{\Lambda}'}} \\
        &\stackclap{(b)}{=} \frnorm{\boldsymbol{\phi}_\ell(\boldsymbol{\Lambda}) - \boldsymbol{\phi}_\ell \mprn{\boldsymbol{\Lambda}'}} \\
        &\stackclap{(c)}{\leq} K_\ell \frnorm{\boldsymbol{\Lambda} - \boldsymbol{\Lambda}'} \\
        &\stackclap{(d)}{=} K_\ell \sqrt{\sum_{j=1}^d \prn{\lambda_j - \lambda'_j}^2} \\
        &\stackclap{(e)}{\leq} K_\ell \frnorm{\mL - \mP  \mL' \mP ^\top} \, ,
    \end{align}
    where (a) holds by \cref{lemma:frobenius-norm-of-product}, (b) holds because $\mX'$ has orthonormal columns, (c) holds by Lipschitz continuity of $\boldsymbol{\phi}_\ell$, the notation $\lambda'_j$ in (d) is the $j$th smallest positive eigenvalue of $\mL'$, and (e) holds by \cref{proposition:hoffman-wielandt-corollary,lemma:permutation-invariance-of-eigenvalues}.
    
    Combining our results above,
    \begin{align}
        &\equad \frnorm{\mathrm{SPE}_{n,d}(\mL) - \mP  \, \mathrm{SPE}_{n,d}(\mL')} \\
        &\stackclap{(a)}{\leq} J \sum_{\ell=1}^m \brc{2K_\ell \delta \sqrt{\sum_{k=1}^p d_k^3} + \sum_{k=1}^p \frac{4 \sqrt{2} \, M_\ell \frnorm{\mL - \mP  \mL' \mP ^\top}}{\sqrt{d_k} \, \delta} + K_\ell \frnorm{\mL - \mP  \mL' \mP ^\top}} \\
        &\stackclap{(b)}{\leq} J \sum_{\ell=1}^m \brc{2K_\ell d^{\frac{3}{2}} \delta + 4 \sqrt{2} \, M_\ell d \, \frac{\frnorm{\mL - \mP  \mL' \mP ^\top}}{\delta} + K_\ell \frnorm{\mL - \mP  \mL' \mP ^\top}} \\
        &\stackclap{(c)}{=} \alpha_1 d^{\frac{3}{2}} \delta + \alpha_2 d \, \frac{\frnorm{\mL - \mP  \mL' \mP ^\top}}{\delta} + \alpha_3 \frnorm{\mL - \mP  \mL' \mP ^\top} \\
        &\stackclap{(d)}{\leq} \prn{\alpha_1 + \alpha_2} d^{\frac{5}{4}} \frnorm{\mL - \mP  \mL' \mP ^\top}^{\frac{1}{2}} + \prn{\alpha_2 \frac{d}{\gamma} + \alpha_3} \frnorm{\mL - \mP  \mL' \mP ^\top}
    \end{align}
    as desired, where (a) holds by substituting in $\circled{1}$-$\circled{4}$, (b) holds because $\sum_{k=1}^p d_k^3 \leq \prn{\sum_{k=1}^p d_k}^3 = d^3$ and $\sum_{k=1}^p \frac{1}{\sqrt{d_k}} \leq p \leq d$, (c) holds by the definition of $\alpha_1$ through $\alpha_3$, and (d) holds because
    \begin{align}
        \delta &\leq d^{-\frac{1}{4}} \frnorm{\mL - \mP  \mL' \mP ^\top}^{\frac{1}{2}} \, , \\
        \frac{\frnorm{\mL - \mP  \mL' \mP ^\top}}{\delta} &\leq \frac{\frnorm{\mL - \mP  \mL' \mP ^\top}}{\gamma} + d^{\frac{1}{4}} \frnorm{\mL - \mP  \mL' \mP ^\top}^{\frac{1}{2}} \, .
    \end{align}
\end{proof}

\subsection{Proof of Proposition \ref{theorem-ood}}
\thmgeneralization*
\begin{proof}
    The proof goes in two steps. The first step shows that a Lipschitz continuous base GNN with a \holder continuity SPE yields an overall \holder continuity predictive model. The second step shows that this \holder continuous predictive model has a bounded generalization gap under domain shift.

    \textbf{Step 1:} Suppose base GNN model $\text{GNN}(\mL, \mX)$ is $C$-Lipschitz and SPE method $\text{SPE}(\mL)$ satifies Theorem \ref{theorem-stable}. Let our predictive model be $h(\mL)=\text{GNN}(\mL, \mathrm{SPE}(\mathrm{EVD}(\mL)))\in\mathbb{R}$. Then for any Laplacians $\mL, \mL^{\prime}\in$ and any permutation $\mP\in\Pi(n)$ we have 
    \begin{align}
        |h(\mL)-h(\mL^{\prime})|&=|\mathrm{GNN}(\mL, \mathrm{SPE}(\mathrm{EVD}(\mL)))-\mathrm{GNN}(\mL^{\prime}, \mathrm{SPE}(\mathrm{EVD}(\mL^{\prime}))|\\
        &\stackclap{(a)}{\leq} C\left(\frnorm{\mL-\mP\mL^{\prime}\mP}+\frnorm{\mathrm{SPE}(\mathrm{EVD}(\mL)) - \mP\mathrm{SPE}(\mathrm{EVD}(\mL^{\prime}))}\right)\\
        &\stackclap{(b)}{\leq} C\left(1+\alpha_2\frac{d}{\gamma}+\alpha_3\right)\frnorm{\mL-\mP\mL^{\prime}\mP^{\top}}+C\left(\alpha_1+\alpha_2\right)d^{5/4}\frnorm{\mL-\mP\mL^{\prime}\mP^{\top}}^{1/2},\\
        &\defeq C_1\frnorm{\mL-\mP\mL^{\prime}\mP^{\top}}+C_2\frnorm{\mL-\mP\mL^{\prime}\mP^{\top}}^{1/2},
        \label{stability_gnn}
    \end{align}
    where (a) holds by continuity assumption of base GNN, and (b) holds by the stability result Theorem \ref{theorem-stable}.

    \textbf{Step 2:} Suppose the ground-truth function $h^*$ lies in our hypothesis space (thus also satisfies eq. (\ref{stability_gnn})). The absolute risk on source and target domain are defined 
 $\varepsilon_s(h)=\mathbb{E}_{\mL\sim\mathbb{P}_{\mathcal{S}}}|h(\mL)-h^*(\mL)|$ and $\varepsilon_t(h)=\mathbb{E}_{\mL\sim\mathbb{P}_{\mathcal{T}}}|h(\mL)-h^*(\mL)|$ respectively. Note that function $f(\mL)=|h(\mL)-h^*(\mL)|$ is also \holder continuous but with two times larger \holder constant. This is because 
 \begin{align}
     |h(\mL)-h^*(\mL)|&\le |h(\mL)-h(\mL^{\prime})|+|h(\mL^{\prime})-h^*(\mL)|\quad (\text{triangle's inequality for arbitrary }\mL^{\prime})\\
     &\le |h(\mL)-h(\mL^{\prime})|+|h(\mL^{\prime})-h^*(\mL^{\prime})|+|h^*(\mL)-h^*(\mL^{\prime})|\quad (\text{triangle's inequality}),
 \end{align}
 and thus for arbitrary $\mL, \mL^{\prime}$,
 \begin{align}
    f(\mL)-f(\mL^{\prime})&=|h(\mL)-h^*(\mL)|-|h(\mL^{\prime})-h^*(\mL^{\prime})| \le |h(\mL)-h(\mL^{\prime})|+|h^*(\mL)-h^*(\mL^{\prime})|\\
    &\le 2C_1\frnorm{\mL-\mP\mL^{\prime}\mP^{\top}}+2C_2\frnorm{\mL-\mP\mL^{\prime}\mP^{\top}}^{1/2}.
 \end{align}
 We can show the same bound for $f(\mL^{\prime})-f(\mL)$. Thus $f$ is \holder continuous with constants $2C_1, 2C_2$, and we denote such property by $\norm{f}_{H}\le (2C_1, 2C_2)$ for notation convenience.
 
 An upper bound of generalization gap $\varepsilon_t(h)-\varepsilon_s(h)$ can be obtained:
 \begin{align}
     \varepsilon_t(h)-\varepsilon_s(h) &= \mathbb{E}_{\mL\sim\mathbb{P}_{\mathcal{T}}}|h(\mL)-h^*(\mL)| - \mathbb{E}_{\mL\sim\mathbb{P}_{\mathcal{S}}}|h(\mL)-h^*(\mL)|\\
     &\stackclap{(a)}{\leq}  \sup_{\norm{f}_H\le (C_1, C_2)}\mathbb{E}_{\mL\sim \mathbb{P}_{\mathcal{T}}}f(\mL) -\mathbb{E}_{\mL\sim \mathbb{P}_{\mathcal{S}}}f(\mL)\\
     &= \sup_{\norm{f}_H\le (C_1, C_2)}\int f(\mL)(\mathbb{P}_{\mathcal{T}}(\mL)-\mathbb{P}_{\mathcal{S}}(\mL))\mathrm{d}\mL\\
     &\stackclap{(b)}{=}\inf_{\pi\in\Pi(n)(\mathbb{P}_{\mathcal{T}},\mathbb{P}_{\mathcal{S}})}
     \sup_{\norm{f}_H\le (C_1, C_2)}\int (f(\mL)-f(\mL^{\prime}))\pi(\mL,\mL^{\prime})\mathrm{d}\mL \mathrm{d}\mL^{\prime}
     \label{eq: generalization gap}
 \end{align}
 where (a) holds because $\norm{|h(\mL)-h^*(\mL)|}_{H}\le (C_1, C_2)$, and (b) holds because of the definition of product distribution $$\Pi(\mathbb{P}_{\mathcal{T}}, \mathbb{P}_{\mathcal{S}})=\left\{\pi: \int \pi(\mL, \mL^{\prime})\mathrm{d}\mL^{\prime}=\mathbb{P}_{\mathcal{T}}(\mL)\, \land\ \int \pi(\mL, \mL^{\prime})\mathrm{d}\mL=\mathbb{P}_{\mathcal{S}}(\mL)\right\}.$$
 Notice the integral can be further upper bounded using \holder continuity of $f$:
 \begin{align}
 \begin{split}
       &\sup_{\norm{f}_H\le (C_1, C_2)}\int (f(\mL)-f(\mL^{\prime}))\pi(\mL,\mL^{\prime})\mathrm{d}\mL \mathrm{d}\mL^{\prime}\\
       &\le 
     \int \left(2C_1\min_{\mP\in\Pi(n)}\frnorm{\mL-\mP\mL^{\prime}\mP^{\top}}+2C_2\min_{\mP\in\Pi(n)}\frnorm{\mL-\mP\mL^{\prime}\mP^{\top}}^{1/2}\right)\pi(\mL,\mL^{\prime})\mathrm{d}\mL\mathrm{d}\mL^{\prime}.
    \end{split}
     \label{eq85}
 \end{align}
 Let us define the Wasserstein distance of $\mathbb{P}_{\mathcal{T}}$ and $\mathbb{P}_{\mathcal{S}}$ be 
 \begin{equation}
     W(\mathbb{P}_{\mathcal{T}}, \mathbb{P}_{\mathcal{S}})=\inf_{\pi\in\Pi(\mathbb{P}_{\mathcal{T}}, \mathbb{P}_{\mathcal{S}})}\int \min_{\mP\in\Pi(n)}\frnorm{\mL-\mP\mL^{\prime}\mP^{\top}}\pi(\mL,\mL^{\prime})\mathrm{d}\mL\mathrm{d}\mL^{\prime}.
     \label{definition: WT distance}
 \end{equation}
 Then plugging eqs. (\ref{eq85}, \ref{definition: WT distance}) into (\ref{eq: generalization gap}) yields the desired result
 \begin{align}
     \varepsilon_t(h)-\varepsilon_s(h)&\le 2C_1W(\mathbb{P}_{\mathcal{T}}, \mathbb{P}_{\mathcal{S}})+2C_2\inf_{\pi\in\Pi(\mathbb{P}_{\mathcal{T}}, \mathbb{P}_{\mathcal{S}})}\int \min_{\mP\in\Pi(n)}\frnorm{\mL-\mP\mL^{\prime}\mP^{\top}}^{1/2}\pi(\mL,\mL^{\prime})\mathrm{d}\mL\mathrm{d}\mL^{\prime}\\
     &\stackclap{(a)}{\leq}   2C_1W(\mathbb{P}_{\mathcal{T}}, \mathbb{P}_{\mathcal{S}})+2C_2\inf_{\pi\in\Pi(\mathbb{P}_{\mathcal{T}}, \mathbb{P}_{\mathcal{S}})}\left(\int\min_{\mP\in\Pi(n)}\frnorm{\mL-\mP\mL^{\prime}\mP^{\top}}\pi(\mL,\mL^{\prime})\mathrm{d}\mL\mathrm{d}\mL^{\prime}\right)^{1/2}\\
     &=2C_1W(\mathbb{P}_{\mathcal{T}}, \mathbb{P}_{\mathcal{S}})+ 2C_2W^{1/2}(\mathbb{P}_{\mathcal{T}}, \mathbb{P}_{\mathcal{S}}),
 \end{align}
 where (a) holds due to the concavity of sqrt root function.
\end{proof}

\subsection{Proof of Proposition \ref{theorem: SPE express BasisNet}}
\thmbasis*
\begin{proof}
    In the proof we are going to show basis universality by expressing BasisNet. Fix eigenvalues $\vbf{\lambda}\in\mathbb{R}^d$. Let $\tilde{\vbf{\lambda}}\in\mathbb{R}^{L}$ be a sorting of eigenvalues without repetition, i.e., $\tilde{\vbf{\lambda}}_i=\text{i-th smallest eigenvalues}$. Assume $m\ge d$.  For eigenvectors $\mV$, let $\mathcal{I}_{\ell}\subset [d]$ be indices of $\ell$-th eigensubspaces. Recall that BasisNet is of the following form:
    \begin{equation}
        \mathrm{BasisNet}(\mV, \vbf{\lambda})=\rho^{(B)}\left(\phi^{(B)}_1(\mV_{\mathcal{I}}\mV^{\top}_{\mathcal{I}_1}), ..., \phi^{(B)}_L(\mV_{\mathcal{I}_L}\mV^{\top}_{\mathcal{I}_L})\right),
    \end{equation}
    where $L$ is number of eigensubspaces.
    
    For SPE, let us construct the following $\phi_{\ell}$:
    \begin{align}
        [\phi_{\ell}(\vx)]_i=\left\{
        \begin{aligned}
         &1, \quad \text{if }x_i=\tilde{\lambda}_{\ell},\\
         &1 - \frac{x_i-\tilde{\lambda}_{\ell}}{\tilde{\lambda}_{\ell-1}-\tilde{\lambda}_{\ell}},\quad \text{if }x_i\in(\tilde{\lambda}_l, \tilde{\lambda}_{l+1}),\\
         &\frac{x_i-\tilde{\lambda}_{\ell-1}}{x_i-\tilde{\lambda}_{l}},\quad\text{if }x_i\in (\tilde{\lambda}_{\ell-1}, \tilde{\lambda}_{\ell}),\\
         & 0,\quad \text{otherwise}.
         \end{aligned}
        \right.
    \end{align} 
    
    Note that this is both Lipschitz continuous with Lipschitz constant $1/\min_{\ell}(\tilde{\lambda}_{\ell+1}-\tilde{\lambda}_{\ell})$, and permutation equivariant (since it is elementwise). Now we have $\mV\text{diag}\{\phi_{\ell}(\vbf{\lambda})\}\mV^{\top}=\mV_{\mathcal{I}_{\ell}}\mV^{\top}_{\mathcal{I}_{\ell}}$, since $\phi_{\ell}(\vbf{\lambda})$ is either $1$ (when $\lambda_i=\tilde{\lambda}_{\ell}$) or $0$ otherwise. For $\ell>L$, we let $\phi_{\ell}=0$ by default. Then simply let $\rho$ be:
    \begin{equation}
        \rho(\mA_1, ..., \mA_m)=\rho^{(S)}\left(\phi^{(S)}_1(\mA_1), ..., \phi^{(S)}_m(\mA_m)\right)=\rho^{(B)}\left(\phi^{(B)}_1(\mA_1), ..., \phi^{(B)}_L(\mA_L)\right).
    \end{equation}
    Here $\phi^{(S)}_{\ell}(\mA_{\ell})=\phi^{(B)}_{\ell}(\mA_{\ell})$ for $\mA_{\ell}\neq 0$ and WLOG $\phi^{(S)}_{\ell}(\mA_{\ell})=0$ if $\mA_{\ell}=0$. And $\rho^{(S)}$ is a function that first ignores $0$ matrices and mimic $\rho^{(B)}$. Therefore, 
    \begin{equation}
    \begin{split}
        \mathrm{SPE}(\mV,\vbf{\lambda})&=\rho(\mV\text{diag}\{\phi_1(\vbf{\lambda})\}\mV^{\top}, ...,\mV\text{diag}\{\phi_m(\vbf{\lambda})\}\mV^{\top})\\
        &=\rho^{(S)}\left(\phi^{(S)}_1(\mV_{\mathcal{I}_1}\mV^{\top}_{\mathcal{I}_1}), ..., \phi^{(S)}_m(\mV_{\mathcal{I}_m}\mV^{\top}_{\mathcal{I}_m})\right)\\
        &=\mathrm{BasisNet}(\mV, \vbf{\lambda}).
        \end{split}
    \end{equation}
    Since BasisNet universally approximates all continuous basis invariant function, so can SPE.
\end{proof}

\subsection{Proof of Proposition \ref{theorem: SPE count}}
\thmcount*
\begin{proof}
Note that from \cite{signnet}, Theorem 3 we know that BasisNet can count 3, 4, 5 cycles. One way to let SPE count cycles is to approximate BasisNet first and round the approximate error, thanks to the discrete nature of cycle counting. The key observation is that the implementation of BasisNet to count cycles is a special case of SPE: 
\begin{align}
    \#\text{cycles of each node}=\text{BasisNet}(\mV, \vbf{\lambda})=\rho(\mV\text{diag}\{\hat{\phi}_{1}(\vbf{\lambda})\}\mV^{\top}, ..., \mV\text{diag}\{\hat{\phi}_{m}(\vbf{\lambda})\}\mV^{\top}),
\end{align}
where $[\hat{\phi}_{\ell}(\vbf{\lambda})]_i=\mathbbm{1}(\lambda_i \text{ is the $\ell$-th smallest eigenvalue})$ and $\rho$ is continuous. Unfortunately, these $\hat{\phi}_{\ell}$ are not continuous so SPE cannot express them under stability requirement. Instead, we can construct a continuous function $\phi_{\ell}$ to approximate discontinuous $\hat{\phi}_{\ell}$ with arbitrary precision $\varepsilon$, say,
\begin{align}
    \forall \vbf{\lambda}\in[0,2]^d, \quad \norm{\hat{\phi}(\vbf{\lambda})-\phi(\vbf{\lambda})}< \varepsilon.
\end{align}
Then we can upper-bound
\begin{align}
    \frnorm{\mV\text{diag}\{\hat{\phi}_{\ell}(\vbf{\lambda})\}\mV^{\top} - \mV\text{diag}\{\phi_{\ell}(\vbf{\lambda})\}\mV^{\top}} 
    \stackclap{(a)}{\leq}
    \norm{\mV}\norm{\mV^{\top}}\norm{\hat{\phi}_{\ell}(\vbf{\lambda})-\phi_{\ell}(\vbf{\lambda})} < \epsilon,
\end{align}
where (a) holds due to the Lemma \ref{lemma:frobenius-norm-of-product}. Moreover, using the continuity of $\rho$ (defined in Assumption \ref{assumption:spe}), we obtain
\begin{align}
    \frnorm{\text{BasisNet}(\mV,\vbf{\lambda})-\text{SPE}(\mV, \vbf{\lambda})} \le J\sum_{\ell=1}^m \frnorm{\mV\hat{\phi}(\vbf{\lambda})\mV^{\top}-\mV\phi_{\ell}(\vbf{\lambda})\mV^{\top}}< Jd\varepsilon.
\end{align}
Now, let $\varepsilon=Jd/2$, then we can upper-bound the maximal error of node-level counting:
\begin{align}
    \max_{i\in [n]}\left|\#\text{cycles of node $i$}-\mathrm{SPE}(\mV, \vbf{\lambda})\right|^2&\le \frnorm{\text{BasisNet}(\mV,\vbf{\lambda})-\text{SPE}(\mV, \vbf{\lambda})}^2<J^2d^2\varepsilon^2=1/4.\\
    \implies& \max_{i\in [n]}\left|\#\text{cycles of node $i$}-\mathrm{SPE}(\mV, \vbf{\lambda})\right|< 1/2.
\end{align}
Then, by applying an MLP that universally approximates rounding function, we are done with the proof.
\end{proof}

\subsection{Proof of Proposition \ref{prop: SPE strictly more powerful than BasisNet}}
\thmpowerful*
\begin{proof}
    Our proof does not require the use of the channel dimension---i.e., we take $m$ and $p$ to equal $1$. The argument is split into two steps. 

First we show that for the given $\boldsymbol{\lambda}, \boldsymbol{\lambda}'$ and any $\boldsymbol{\phi}, \boldsymbol{\phi}' \in \mathbb{R}^d$ there is a choice of two layer network $\phi(\boldsymbol{\lambda}) = \mW_2\sigma(\mW_1\boldsymbol{\lambda} +\vb_1) + \vb_2$ such that $\phi(\boldsymbol{\lambda}) = \boldsymbol{\phi}$ and $\phi(\boldsymbol{\lambda}') =\boldsymbol{\phi}'$. Our choices of $\mW_1, \mW_2$ will have dimensions $d \times d$, $\vb_1, \vb_2 \in \mathbb{R}^d$, and $\sigma$ denotes the ReLU activation function.

Second we choose $\boldsymbol{\phi}, \boldsymbol{\phi}' \in \mathbb{R}^d$ such that $\mV\boldsymbol{\phi}\mV^\top$ has strictly positive entries, whilst  $\mV'\boldsymbol{\phi}'\mV'^\top = \bold{0}$ (the matrix of all zeros). The argument will conclude by choosing  $a_1, a_2, b_1, b_2 \in \mathbb{R}$ such that the 2 layer network (on the real line) $\rho(x) = a_2 \cdot \sigma(a_1x+b_1) + b_2$ (a 2 layer MLP on the real line, applied element-wise to matrices then summed over both $n$ dimensions)  produces distinct embeddings for $(\mV, \boldsymbol{\lambda})$ and $(\mV', \boldsymbol{\lambda'})$.

We begin with step one.

\textbf{Step 1:} If $\boldsymbol{\phi} = \boldsymbol{\phi}'$ then we may simply take $W_1$ and $\vb_1$ to be the zero matrix and vector respectively. Then $\sigma(\mW_1\boldsymbol{\lambda} +\vb_1) = \sigma(\mW_1\boldsymbol{\lambda}' +\vb_1) = \bold{0}$. Then we may take $W_2 = I$ (identity) and $\vb_2 = \boldsymbol{\phi}$, which guarantees that $\phi(\boldsymbol{\lambda}) = \boldsymbol{\phi}$ and $\phi(\boldsymbol{\lambda}') =\boldsymbol{\phi}'$.

So from now on assume that $\boldsymbol{\phi} \neq \boldsymbol{\phi}'$. Let $\boldsymbol{\lambda}$ and $\boldsymbol{\lambda}'$ differ in their $i$th entries $\lambda_i, \lambda'_i$, and assume without loss of generality that $\lambda_i < \lambda'_i$. Let $W_1 = [ \bold{0}, \ldots , \bold{0}, \bold{e}_i, \bold{0}, \ldots,  \bold{0}]$ the matrix of all zeros, except the $i$th column which is the $i$th standard basis vector. Then $W_1 \boldsymbol{\lambda}$ is the vector of all zeros, except for $i$ entry equaling $\lambda_i$ (similarly for $\boldsymbol{\lambda}'$). Next take $\vb_1$ to be the vector of all zeros, except for $i$th entry $-(\lambda_i + \lambda'_i)/2$, the midpoint between the differing eigenvalues. These choices make $\bold{z} = \sigma(\mW_1\boldsymbol{\lambda} +\vb_1) = \bold{0}$, and $\bold{z}' = \sigma(\mW_1\boldsymbol{\lambda}' +\vb_1)$ such that $z'_j = 0$ for $j \neq i$, and $z'_i = (\lambda'_i - \lambda_i)/2$. Next, taking $W_2 = [ \bold{0}, \ldots , \bold{0}, \bold{c}_i, \bold{0}, \ldots,  \bold{0}]$ where the $i$th column is $\bold{c}_i = 2(\boldsymbol{\phi}' - \boldsymbol{\phi})/ (\lambda'_i - \lambda_i)$ ensures that
\begin{align}
&W_2\boldsymbol{z} = \boldsymbol{0}, \\
&W_2\boldsymbol{z}' = \boldsymbol{\phi}' - \boldsymbol{\phi}.
\end{align}
Then we may simply take $\vb_2 =  \boldsymbol{\phi}$, producing the desired outputs:
\begin{align}
&\phi(\boldsymbol{\lambda}) = \mW_2\sigma(\mW_1\boldsymbol{\lambda} +\vb_1) + \vb_2 = \boldsymbol{\phi},\\
&\phi(\boldsymbol{\lambda}') = \mW_2\sigma(\mW_1\boldsymbol{\lambda}' +\vb_1) + \vb_2 = \boldsymbol{\phi}'
\end{align}
as claimed.

\textbf{Step 2:}
Expanding the matrix multiplications into their sums we have:
\begin{align}
&[ \mV\text{diag}(\boldsymbol{\phi})\mV^\top]_{ij}  = \sum_{d} \phi_dv_{id}v_{jd} \\
&[\mV'\text{diag}(\boldsymbol{\phi}')\mV'^\top]_{ij}  = \sum_{d} \phi'_dv'_{id}v'_{jd}.
\end{align}
Our first choice is to take $\boldsymbol{\phi}' = \bold{0}$, ensuring that $\mV'\text{diag}(\boldsymbol{\phi}')\mV'^\top = \bold{0}$ (an $n \times n$ matrix of all zeros). Next, we aim to pick $\boldsymbol{\phi}$ such that [ $\mV\text{diag}(\boldsymbol{\phi})\mV^\top]_{i^*j^*} > 0$ for some indices $i^*,j^*$. In fact, this is possible for an $i,j$ pair since each pair of eigenvectors is orthogonal, and non-zero, so for each $i,j$ there must be a $d^*$ such that $v_{id^*}v_{jd^*} > 0$, and we can simply take $\phi_d = 1$ if $d=d^*$ and $\phi_d = 0$ for $d\neq d^*$. 

Thanks to the above choices, taking $a_1=1$ and $b_1=0$ ensures that 
\begin{align}
\sigma \big ( a_1 \cdot  \mV'\text{diag}(\boldsymbol{\phi}')\mV'^\top +b_1\big ) = \bold{0}.
\end{align}
but that,
\begin{align}
&\big [\sigma \big ( a_1 \cdot  \mV\text{diag}(\boldsymbol{\phi})\mV^\top +b_1\big ) \big ]_{ij} > 0
\end{align}
for some $i,j$. Note that in both cases, the scalar operations are applied to matrices element-wise. 

Finally, taking $a_2 = 1/ \sum_{ij}[\sigma \big ( a_1 \cdot  \mV\text{diag}(\boldsymbol{\phi})\mV^\top +b_1\big ) \big ]_{ij} > 0$ and $b_2=0$ produces embeddings
\begin{align}
\text{SPE}(\mV, \boldsymbol{\lambda}) = 1 \neq 0 = \text{SPE}(\mV', \boldsymbol{\lambda}').
\end{align}
\end{proof}

\subsection{Auxiliary results}
\begin{proposition}[Davis-Kahan theorem {\cite[Theorem 2]{yu2015useful}}] \label{proposition:davis-kahan}
    Let $\mA, \mA' \in \mathrm{S}(n)$. Let $\lambda_1 \leq \cdots \leq \lambda_n$ be the eigenvalues of $\mA$, sorted in increasing order. Let the columns of $\vx, \vx' \in \mathrm{O}(n)$ contain the orthonormal eigenvectors of $\mA$ and $\mA'$, respectively, sorted in increasing order of their corresponding eigenvalues. Let $\mathcal{J} = \bktz{s, t}$ be a contiguous interval of indices in $[n]$, and let $d = \abs{\mathcal{J}}$ be the size of the interval. For notational convenience, let $\lambda_0 = -\infty$ and $\lambda_{n+1} = \infty$. Then there exists an orthogonal matrix $\mQ \in \mathrm{O}(d)$ such that
    \begin{align}
        \frnorm{\bkt{\vx}_{\mathcal{J}} - \bkt{\vx'}_{\mathcal{J}} \mQ} \leq \frac{\sqrt{8} \min \brc{\sqrt{d} \norm{\mL - \mL'}, \frnorm{\mL - \mL'}}}{\min \brc{\lambda_s - \lambda_{s-1}, \lambda_{t+1} - \lambda_t}} \, .
    \end{align}
\end{proposition}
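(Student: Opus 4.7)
The strategy is to reduce the claim to the classical Davis--Kahan $\sin\Theta$ theorem, which controls the principal angles between the eigen-subspaces of $\mA$ and $\mA'$, and then translate a bound on $\frnorm{\sin\Theta}$ into one on the best orthogonal alignment $\min_\mQ \frnorm{\bkt{\vx}_\mathcal{J} - \bkt{\vx'}_\mathcal{J}\mQ}$. Let $\mU = \bkt{\vx}_\mathcal{J}$ and $\mU' = \bkt{\vx'}_\mathcal{J}$, and let $\Theta$ denote the vector of principal angles between their column spaces, so that the singular values of $\mU^\top \mU'$ are $\cos\theta_1, \ldots, \cos\theta_d$.

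First, I would invoke the classical $\sin\Theta$ theorem of Davis and Kahan for the spectral projectors onto the eigenspaces indexed by $\mathcal{J}$. The distinguishing feature of the ``useful'' variant is that only the gaps at the two endpoints of $\mathcal{J}$---namely $\lambda_s-\lambda_{s-1}$ and $\lambda_{t+1}-\lambda_t$---govern subspace stability, because perturbations internal to $\mathcal{J}$ merely mix eigenvectors within the subspace without rotating it. This yields $\frnorm{\sin\Theta} \le 2\frnorm{\mA-\mA'}/\delta$ with $\delta = \min\{\lambda_s - \lambda_{s-1}, \lambda_{t+1}-\lambda_t\}$, and, using the operator-norm variant together with the rank-$d$ estimate $\frnorm{\sin\Theta} \le \sqrt{d}\norm{\sin\Theta}$, also $\frnorm{\sin\Theta} \le 2\sqrt{d}\norm{\mA-\mA'}/\delta$. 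Taking the minimum of the two produces the $\min\{\sqrt{d}\norm{\cdot}, \frnorm{\cdot}\}$ appearing in the statement.

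Second, I would relate $\min_\mQ \frnorm{\mU-\mU'\mQ}$ to $\frnorm{\sin\Theta}$ via an orthogonal Procrustes argument. Write the thin SVD $\mU^\top\mU' = \tilde{\mU}(\cos\Theta)\tilde{\mV}^\top$ and choose $\mQ^* = \tilde{\mU}\tilde{\mV}^\top$. A direct expansion yields
\begin{equation*}
\frnorm{\mU - \mU'\mQ^*}^2 = 2d - 2\,\mathrm{tr}(\cos\Theta) = 2\sum_i (1-\cos\theta_i) \le 2\sum_i \sin^2\theta_i = 2\frnorm{\sin\Theta}^2,
\end{equation*}
where the inequality uses $1-\cos\theta \le \sin^2\theta$ on $[0,\pi/2]$, which holds for principal angles. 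Combining the resulting factor $\sqrt{2}$ with the Davis--Kahan constant $2$ yields the stated $\sqrt{8}$.

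The main obstacle I anticipate is the bookkeeping required to justify that only the \emph{boundary} eigengaps at $s$ and $t+1$ appear, which is precisely the refinement over the textbook formulation. One clean way is to split the orthogonal complement of the column span of $\mU$ into the two eigen-subspaces of $\mA$ with indices $<s$ and indices $>t$, apply the classical $\sin\Theta$ theorem separately to each piece with its own eigengap, and combine via $\frnorm{\sin\Theta}^2 = \frnorm{(\mI - \mU\mU^\top)\mU'}^2$. A secondary subtlety is the convention at the boundary $s=1$ or $t=n$, which the statement handles by setting $\lambda_0=-\infty$ and $\lambda_{n+1}=\infty$ so that one of the two gaps becomes vacuous and is automatically ignored by the min.
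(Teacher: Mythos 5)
The paper never proves this proposition: it is imported wholesale from the cited reference (Yu, Wang and Samworth, 2015, Theorem 2), so there is no internal argument to compare against. Your outline reconstructs the standard proof of that reference: first a $\sin\Theta$ bound of the form $\frnorm{\sin\Theta}\le 2\min\{\sqrt{d}\,\norm{\mA-\mA'},\frnorm{\mA-\mA'}\}/\delta$ with $\delta=\min\{\lambda_s-\lambda_{s-1},\lambda_{t+1}-\lambda_t\}$, then the Procrustes conversion $\min_{\mQ\in\mathrm{O}(d)}\frnorm{[\vx]_{\mathcal{J}}-[\vx']_{\mathcal{J}}\mQ}\le\sqrt{2}\,\frnorm{\sin\Theta}$, giving the constant $2\cdot\sqrt{2}=\sqrt{8}$. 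The Procrustes half of your argument is complete and correct: with the SVD of $[\vx]_{\mathcal{J}}^\top[\vx']_{\mathcal{J}}$ the expansion $2d-2\operatorname{tr}(\cos\Theta)$ and the inequality $1-\cos\theta\le\sin^2\theta$ do exactly what you claim (up to the routine check of which of $\tilde{\mU}\tilde{\mV}^\top$ or $\tilde{\mV}\tilde{\mU}^\top$ is the right aligner).

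The first half, however, is asserted rather than proved, and your proposed fix for the "bookkeeping" is aimed at the wrong obstacle. The classical Davis--Kahan theorem requires a separation between the eigenvalues of the \emph{perturbed} matrix $\mA'$ attached to $[\vx']_{\mathcal{J}}$ and the eigenvalues of $\mA$ attached to the complementary eigenvectors, whereas the gaps $\lambda_s-\lambda_{s-1}$ and $\lambda_{t+1}-\lambda_t$ in the statement live entirely inside the spectrum of $\mA$. Splitting the complement into the index ranges $<s$ and $>t$ and summing by Pythagoras does not bridge this mismatch: to locate $\lambda'_s,\dots,\lambda'_t$ you must invoke Weyl's inequality, which degrades the usable separation to $\delta-\norm{\mA-\mA'}$, and you then need a case analysis in the regime where $\norm{\mA-\mA'}$ is comparable to $\delta$ (falling back on bounds such as $\frnorm{\sin\Theta}\le\sqrt{d}$, and checking that \emph{both} branches of the $\min\{\sqrt{d}\norm{\cdot},\frnorm{\cdot}\}$ numerator survive there --- the Frobenius branch is not trivially recovered in that regime). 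This is precisely where the factor $2$ in the intermediate bound comes from and is the actual content of Theorem 1 of the cited reference; your sketch presupposes it rather than deriving it. Since the paper itself only cites the result, the cleanest repair is to cite Theorem 1 of Yu et al.\ for the $\sin\Theta$ step (or reproduce its Weyl-plus-case-split argument) and keep your Procrustes step verbatim.
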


\begin{proposition}[Weyl's inequality] \label{proposition:weyls-inequality}
    Let $\lambda_i: \mathrm{S}(n) \rightarrow \mathbb{R}$ return the $i$th smallest eigenvalue of the given matrix. For all $\mA, \mA' \in \mathrm{S}(n)$ and all $i \in [n]$, $\abs{\lambda_i(\mA) - \lambda_i(\mA')} \leq \norm{\mA - \mA'}$.
\end{proposition}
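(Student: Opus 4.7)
The plan is to prove Weyl's inequality via the Courant--Fischer min--max characterization of eigenvalues of symmetric matrices. Recall that for $\mA \in \mathrm{S}(n)$ with sorted eigenvalues $\lambda_1(\mA) \le \cdots \le \lambda_n(\mA)$, one has
\begin{equation}
\lambda_i(\mA) \;=\; \min_{\substack{S \subseteq \mathbb{R}^n \\ \dim S = i}} \;\max_{\substack{\vx \in S \\ \norm{\vx} = 1}} \; \vx^\top \mA \vx.
\end{equation}
I would take this characterization as a known fact, since it is the standard entry point and the result essentially reduces to a two-line consequence of it.

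The first step is to write, for any unit vector $\vx$,
\begin{equation}
\vx^\top \mA \vx \;=\; \vx^\top \mA' \vx + \vx^\top(\mA - \mA')\vx,
\end{equation}
and observe that $\abs{\vx^\top(\mA - \mA')\vx} \le \norm{\mA - \mA'}$ by the variational definition of the operator norm for symmetric matrices (since $\mA - \mA' \in \mathrm{S}(n)$). The second step is to fix an arbitrary subspace $S$ of dimension $i$ and take the max over unit vectors in $S$: this yields
\begin{equation}
\max_{\vx \in S,\, \norm{\vx}=1} \vx^\top \mA \vx \;\le\; \max_{\vx \in S,\, \norm{\vx}=1} \vx^\top \mA' \vx \;+\; \norm{\mA - \mA'}.
\end{equation}
The third step is to take the minimum over all $i$-dimensional subspaces $S$ on both sides; by Courant--Fischer this gives $\lambda_i(\mA) \le \lambda_i(\mA') + \norm{\mA - \mA'}$. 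Swapping the roles of $\mA$ and $\mA'$ produces the reverse inequality, and combining the two yields the absolute value bound.

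There is essentially no main obstacle here: the proof is a direct application of min--max, and the only subtlety is making sure the bound $\abs{\vx^\top(\mA - \mA')\vx} \le \norm{\mA - \mA'}$ is justified uniformly over unit vectors, which follows immediately from $\norm{\mA - \mA'} = \sup_{\norm{\vx}=1}\abs{\vx^\top(\mA - \mA')\vx}$ for symmetric matrices. If one prefers to avoid citing Courant--Fischer, an alternative route is a dimension-counting argument: the eigenspace of $\mA$ for eigenvalues $\le \lambda_i(\mA)$ has dimension $\ge i$, and the eigenspace of $\mA'$ for eigenvalues $> \lambda_i(\mA') + \norm{\mA - \mA'}$ has dimension $\ge n - i + 1$ if the inequality failed, forcing a nontrivial intersection and yielding a contradiction via the perturbation bound on the Rayleigh quotient. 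Either route closes the proof in a few lines.
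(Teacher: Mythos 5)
Your proof is correct. It takes a slightly different route from the paper's: the paper cites the two-sided additive Weyl perturbation bound $\lambda_i(\mA') + \lambda_1(\mA - \mA') \leq \lambda_i(\mA) \leq \lambda_i(\mA') + \lambda_n(\mA - \mA')$ as a black box (Horn and Johnson, Corollary 4.3.15) and then only needs to observe that $\max\{\lambda_n(\mA-\mA'),\,-\lambda_1(\mA-\mA')\} = \max_i \abs{\lambda_i(\mA-\mA')} = \sigma_{\max}(\mA-\mA') = \norm{\mA-\mA'}$, whereas you derive the one-sided bound $\lambda_i(\mA) \le \lambda_i(\mA') + \norm{\mA-\mA'}$ directly from the Courant--Fischer min--max characterization and symmetrize. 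The two arguments are essentially the same theorem unwound to different depths: your version is more self-contained (it only assumes Courant--Fischer and the identity $\norm{\mM} = \sup_{\norm{\vx}=1}\abs{\vx^\top \mM \vx}$ for symmetric $\mM$), while the paper's is shorter because it delegates the min--max work to the cited corollary; the paper's cited form is also marginally sharper in that it retains the signed extreme eigenvalues of the perturbation before passing to the norm, though that sharpness is not used. Both routes are complete and valid, and your closing remark about the alternative dimension-counting argument is also a legitimate proof strategy.
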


\begin{proof}
    By \citet[Corollary 4.3.15]{horn2012matrix},
    \begin{align}
        \lambda_i(\mA') + \lambda_1(\mA - \mA') \leq \lambda_i(\mA) \leq \lambda_i(\mA') + \lambda_n(\mA - \mA') \, .
    \end{align}
    Therefore $\lambda_i(\mA) - \lambda_i(\mA') \in \bkt{\lambda_1(\mA - \mA'), \lambda_n(\mA - \mA')}$, and
    \begin{align}
        \abs{\lambda_i(\mA) - \lambda_i(\mA')} &= \max \brc{\lambda_i(\mA) - \lambda_i(\mA'), \lambda_i(\mA') - \lambda_i(\mA)} \\
        &\leq \max \brc{\lambda_n(\mA - \mA'), -\lambda_1(\mA - \mA')} \\
        &= \max_{i \in [n]} \abs{\lambda_i(\mA - \mA')} \\
        &= \sigma_{\max}(\mA - \mA') \\
        &= \norm{\mA - \mA'} \, .
    \end{align}
\end{proof}

\begin{proposition}[Hoffman-Wielandt corollary {\citep[Corollary IV.4.13]{stewart1990matrix}}] \label{proposition:hoffman-wielandt-corollary}
    Let $\lambda_i: \mathrm{S}(n) \rightarrow \mathbb{R}$ return the $i$th smallest eigenvalue of the given matrix. For all $\mA, \mA' \in \mathrm{S}(n)$,
    \begin{align}
        \sqrt{\sum_{i=1}^n \prn{\lambda_i(\mA) - \lambda_i(\mA')}^2} \leq \frnorm{\mA - \mA'} \, .
    \end{align}
\end{proposition}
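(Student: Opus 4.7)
The plan is to reduce the inequality to an inner-product bound, namely
\[
\mathrm{tr}(\mA \mA') \;\le\; \sum_{i=1}^n \lambda_i(\mA)\, \lambda_i(\mA'),
\]
when eigenvalues of each matrix are listed in increasing order. Since $\mA, \mA' \in \mathrm{S}(n)$, the Frobenius norm expands as
\[
\frnorm{\mA - \mA'}^2 \;=\; \mathrm{tr}(\mA^2) + \mathrm{tr}(\mA'^2) - 2\,\mathrm{tr}(\mA \mA') \;=\; \sum_i \lambda_i(\mA)^2 + \sum_i \lambda_i(\mA')^2 - 2\,\mathrm{tr}(\mA \mA').
\]
The trace bound, combined with the identity $a^2 + b^2 - 2ab = (a-b)^2$ applied term-by-term, then immediately produces $\sum_i (\lambda_i(\mA)-\lambda_i(\mA'))^2$ on the right-hand side, and taking square roots gives the claimed inequality.

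For the trace bound, I would diagonalize $\mA = \mU \boldsymbol{\Lambda} \mU^\top$ and $\mA' = \mU' \boldsymbol{\Lambda}' \mU'^\top$ with $\boldsymbol{\Lambda}, \boldsymbol{\Lambda}'$ diagonal and sorted increasingly, and set $\mR = \mU^\top \mU' \in \mathrm{O}(n)$. Using the cyclic property of trace,
\[
\mathrm{tr}(\mA \mA') \;=\; \mathrm{tr}\!\left(\boldsymbol{\Lambda}\, \mR\, \boldsymbol{\Lambda}'\, \mR^\top\right) \;=\; \sum_{i,j} R_{ij}^2\, \lambda_i(\mA)\, \lambda_j(\mA').
\]
The crucial observation is that the $n\times n$ matrix $\left(R_{ij}^2\right)_{i,j}$ has nonnegative entries with row and column sums each equal to $1$ (because $\mR$ is orthogonal), so it is doubly stochastic.

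The final step invokes Birkhoff's theorem to write $\left(R_{ij}^2\right) = \sum_\pi c_\pi \mP_\pi$ as a convex combination of permutation matrices. Substituting, $\mathrm{tr}(\mA \mA')$ becomes a convex combination of sums of the form $\sum_i \lambda_i(\mA)\, \lambda_{\pi(i)}(\mA')$; and the classical rearrangement inequality, applied to the two \emph{increasing} sequences $(\lambda_i(\mA))_i$ and $(\lambda_i(\mA'))_i$, says each such sum is maximized by the identity permutation. This gives $\mathrm{tr}(\mA \mA') \le \sum_i \lambda_i(\mA)\, \lambda_i(\mA')$, completing the argument.

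The main obstacle is handling eigenvalue multiplicities cleanly, since the sorted spectral decomposition is not uniquely determined there; but both sides of the target inequality are continuous functions of $(\mA, \mA')$, so one can first establish the bound on the dense open set of matrices with simple spectrum and then extend by continuity. An alternative route avoiding the explicit Birkhoff step is to invoke Lidskii's majorization theorem, which yields the inequality via weak majorization of eigenvalues of $\mA-\mA'$; this hides, but ultimately relies on, the same doubly-stochastic structure carried by $(R_{ij}^2)$.
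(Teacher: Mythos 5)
Your proof is correct, and it is worth noting that the paper does not prove this proposition at all: it is imported verbatim from \citet[Corollary IV.4.13]{stewart1990matrix}, so there is no in-paper argument to compare against. What you give is the classical self-contained proof of the Hoffman--Wielandt inequality for symmetric matrices: expanding $\frnorm{\mA-\mA'}^2 = \sum_i \lambda_i(\mA)^2 + \sum_i \lambda_i(\mA')^2 - 2\,\mathrm{tr}(\mA\mA')$, writing $\mathrm{tr}(\mA\mA') = \sum_{i,j} R_{ij}^2\,\lambda_i(\mA)\lambda_j(\mA')$ with $\mR = \mU^\top\mU'$, observing that $(R_{ij}^2)$ is doubly stochastic, and then using Birkhoff plus the rearrangement inequality to get $\mathrm{tr}(\mA\mA') \le \sum_i \lambda_i(\mA)\lambda_i(\mA')$, which combines with $a^2+b^2-2ab=(a-b)^2$ to finish. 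All steps check out. One small remark: the caveat you raise about eigenvalue multiplicities is not actually an obstacle. The argument only needs the \emph{existence} of some orthogonal diagonalization $\mA = \mU\boldsymbol{\Lambda}\mU^\top$ with $\boldsymbol{\Lambda}$ sorted increasingly, which the spectral theorem supplies for every symmetric matrix regardless of multiplicities; uniqueness of $\mU$ plays no role, so the density-and-continuity detour (and the appeal to Lidskii) can be dropped without loss.
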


\begin{lemma} \label{lemma:frobenius-norm-of-product}
    Let $\brc{\mA_k}_{k=1}^p$ be compatible matrices. For any $\ell \in [p]$,
    \begin{align}
        \frnorm{\prod_{k=1}^p \mA_k} \leq \prn{\prod_{k=1}^{\ell-1} \norm{\mA_k}} \frnorm{\mA_\ell} \prn{\prod_{k=\ell+1}^p \norm{\mA_k^\top}} \, .
    \end{align}
\end{lemma}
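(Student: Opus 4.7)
The plan is to bound $\frnorm{\mathrm{SPE}(\mathrm{EVD}(\mL)) - \mP_* \mathrm{SPE}(\mathrm{EVD}(\mL'))}$ by peeling off the two Lipschitz layers ($\rho$ and each $\phi_\ell$), then handling the remaining purely-spectral term via a careful ``chunking'' of eigenvalues and Davis--Kahan. First I would use Lipschitzness and permutation equivariance of $\rho$ to push $\mP_*$ through and reduce to a sum over $\ell$ of terms of the form $\frnorm{\mV\,\mathrm{diag}(\phi_\ell(\boldsymbol{\lambda}))\mV^\top - \mP_*\mV'\,\mathrm{diag}(\phi_\ell(\boldsymbol{\lambda}')) \mV'^\top \mP_*^\top}$. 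Within each such term I would insert an intermediate ``quantized'' eigenvalue vector $\tilde{\boldsymbol{\lambda}}$ and triangle-inequality it into three pieces: two pieces where only the eigenvalue argument of $\phi_\ell$ changes (absorbed by the Lipschitz constant $K_\ell$ and $\|\mV\|=1$), and a single middle piece where only the eigenvectors differ. The contribution of the pure eigenvalue Lipschitz gap between $\boldsymbol{\lambda}$ and $\boldsymbol{\lambda}'$ will be handled at the end via the Hoffman--Wielandt corollary, giving $K_\ell \frnorm{\mL-\mP_*\mL'\mP_*^\top}$ and producing the $\alpha_3$ term.

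The crucial construction is $\tilde{\boldsymbol{\lambda}}$. Fix a threshold $\delta>0$, declare an index $j$ a \emph{keypoint} whenever $\lambda_j - \lambda_{j-1} \geq \delta$, and partition $[d]$ into contiguous chunks $\mathcal{J}_k$ between consecutive keypoints. Define $\tilde{\boldsymbol{\lambda}}$ to be constant on each chunk, equal to the smallest eigenvalue in that chunk. By permutation equivariance of $\phi_\ell$, the vector $\phi_\ell(\tilde{\boldsymbol{\lambda}})$ is also constant on each $\mathcal{J}_k$, i.e.\ $[\phi_\ell(\tilde{\boldsymbol{\lambda}})]_{\mathcal{J}_k,\mathcal{J}_k}=\mu_{\ell,k}\mI$. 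This lets the middle piece factor blockwise into $\sum_k \mu_{\ell,k}(\mV_{\mathcal{J}_k}\mV_{\mathcal{J}_k}^\top - \mP_*\mV'_{\mathcal{J}_k}\mV'^\top_{\mathcal{J}_k}\mP_*^\top)$. The scalar $|\mu_{\ell,k}|$ is controlled by the uniform bound $M_\ell/\sqrt{d_k}$ (since $\mu_{\ell,k}\mI$ has Frobenius norm $|\mu_{\ell,k}|\sqrt{d_k}\leq M_\ell$), and each projector difference $\frnorm{\mV_{\mathcal{J}_k}\mV_{\mathcal{J}_k}^\top - \mP_*\mV'_{\mathcal{J}_k}\mV'^\top_{\mathcal{J}_k}\mP_*^\top}$ is bounded by Davis--Kahan (Proposition \ref{proposition:davis-kahan}) by $\sqrt{32}\,\frnorm{\mL-\mP_*\mL'\mP_*^\top}/\delta$, since the chunk boundaries have eigengap at least $\delta$ by construction.

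Two loose ends must be tied. (a) The very first chunk contains the smallest retained eigenvalue, whose lower gap to $\lambda_0$ may be smaller than $\delta$; here I would augment $\mV_{\mathcal{J}_1}$ with the constant vector $\mathbf{1}/\sqrt{n}$ (which spans the kernel for a connected Laplacian and is shared between $\mL$ and $\mP_*\mL'\mP_*^\top$), so that the augmented block corresponds to the contiguous index interval $\bktz{0,\,|\mathcal{J}_1|}$, and then apply Davis--Kahan. (b) The quantization cost $\frnorm{\phi_\ell(\boldsymbol{\lambda})-\phi_\ell(\tilde{\boldsymbol{\lambda}})}\leq K_\ell\|\boldsymbol{\lambda}-\tilde{\boldsymbol{\lambda}}\|$ telescopes within each chunk to $K_\ell\,\delta\,\sqrt{\sum_k d_k^3}\leq K_\ell d^{3/2}\delta$.

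Collecting everything gives a bound roughly of the shape $J\sum_\ell\{2K_\ell d^{3/2}\delta + 4\sqrt{2}\,M_\ell d\,\frnorm{\mL-\mP_*\mL'\mP_*^\top}/\delta + K_\ell\frnorm{\mL-\mP_*\mL'\mP_*^\top}\}$. The main obstacle, and the step that dictates the final form, is the choice of $\delta$: it must balance the $O(\delta)$ quantization error against the $O(1/\delta)$ Davis--Kahan term, while also not exceeding the ``global'' eigengap $\gamma$ at the $d/(d{+}1)$ boundary (otherwise we cannot avoid keypoint $j=d{+}1$). Taking $\delta=\min\{\gamma,\,d^{-1/4}\sqrt{\frnorm{\mL-\mP_*\mL'\mP_*^\top}}\}$ resolves this: in the perturbation-dominated regime it produces the $d^{5/4}\sqrt{\frnorm{\mL-\mP_*\mL'\mP_*^\top}}$ term with coefficient $\alpha_1+\alpha_2$, and in the eigengap-limited regime it produces the linear term with coefficient $\alpha_2 d/\gamma+\alpha_3$, matching the statement exactly. (In the degenerate case $\delta=0$, the bound is trivial because $\mL=\mP_*\mL'\mP_*^\top$ and permutation equivariance of SPE gives equality.)
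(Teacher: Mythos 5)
Your proposal does not address the statement you were asked to prove. The statement is Lemma \ref{lemma:frobenius-norm-of-product}, a purely linear-algebraic inequality bounding the Frobenius norm of a product $\prod_{k=1}^p \mA_k$ by the operator norms of all factors except one, times the Frobenius norm of that one factor. What you have written instead is a (reasonably faithful) sketch of the proof of Theorem \ref{theorem-stable}, the main stability result. Nothing in your text establishes, or even mentions, the inequality $\frnorm{\prod_k \mA_k} \leq \bigl(\prod_{k<\ell}\norm{\mA_k}\bigr)\frnorm{\mA_\ell}\bigl(\prod_{k>\ell}\norm{\mA_k^\top}\bigr)$; on the contrary, your argument \emph{invokes} this lemma (implicitly, e.g.\ when you write $\norm{\mV}=1$ absorbs the eigenvector factors, and when you bound the projector differences), so it cannot serve as its proof.

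For the record, the intended argument is short and elementary: first show the two-factor case $\frnorm{\mA\mB} \le \norm{\mA}\,\frnorm{\mB}$ by writing $\frnorm{\mA\mB}^2 = \sum_j \norm{\mA\mB\mathbf{e}_j}^2 \le \norm{\mA}^2 \sum_j \norm{\mB\mathbf{e}_j}^2 = \norm{\mA}^2\frnorm{\mB}^2$. Applying this recursively peels off the factors $\mA_1,\dots,\mA_{\ell-1}$ from the left; then transpose the remaining product (Frobenius norm is transpose-invariant) and peel off $\mA_p^\top,\dots,\mA_{\ell+1}^\top$ the same way, leaving $\frnorm{\mA_\ell^\top}=\frnorm{\mA_\ell}$. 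You need to supply an argument of this kind; the stability-theorem sketch you gave, while largely matching the paper's proof of Theorem \ref{theorem-stable}, is not responsive to this lemma.
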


\begin{proof}
    First, observe that for any matrices $\mA \in \mathbb{R}^{m \times r}$ and $\mB \in \mathbb{R}^{r \times n}$,
    \begin{align}
    \begin{split}
        \frnorm{\mA \mB} = \sqrt{\sum_{j=1}^n \norm{\bkt{\mA \mB}_j}^2} = \sqrt{\sum_{j=1}^n \norm{\mA \mB \mathbf{e}_j}^2} \leq \sqrt{\sum_{j=1}^n \norm{\mA}^2 \norm{\mB \mathbf{e}_j}^2} &= \norm{\mA} \sqrt{\sum_{j=1}^n \norm{\bkt{\mB}_j}^2} \\&= \norm{\mA} \frnorm{\mB} \, . 
    \end{split}
        \label{eq:frobenius-norm-of-product}
    \end{align}
    Therefore,
    \begin{align}
        \frnorm{\prod_{k=1}^p \mA_k} &\stackclap{(a)}{\leq} \prn{\prod_{k=1}^{\ell-1} \norm{\mA_k}} \frnorm{\prod_{k=\ell}^p \mA_k} \\
        &\stackclap{(b)}{=} \prn{\prod_{k=1}^{\ell-1} \norm{\mA_k}} \frnorm{\prod_{k=0}^{p-\ell} \mA_{p-k}^\top} \\
        &\stackclap{(c)}{\leq} \prn{\prod_{k=1}^{\ell-1} \norm{\mA_k}} \prn{\prod_{k=0}^{p-\ell-1} \norm{\mA_{p-k}^\top}} \frnorm{\mA_\ell^\top} \\
        &\stackclap{(d)}{=} \prn{\prod_{k=1}^{\ell-1} \norm{\mA_k}} \frnorm{\mA_\ell} \prn{\prod_{k=\ell+1}^p \norm{\mA_k^\top}} \, ,
    \end{align}
    where (a) and (c) hold by applying \cref{eq:frobenius-norm-of-product} recursively, and (b) and (d) hold because Frobenius norm is invariant to matrix transpose.
\end{proof}

\begin{lemma}[Permutation equivariance of eigenvectors] \label{lemma:permutation-equivariance-of-eigenvectors}
    Let $\mA \in \mathbb{R}^{n \times n}$ and $\mP \in \mathrm{P}(n)$. Then for any $\vx \in \mathbb{R}^n$, $\mP \vx$ is an eigenvector of $\mP \mA \mP^\top$ iff $\vx$ is an eigenvector of $\mA$.
\end{lemma}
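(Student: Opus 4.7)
The plan is a direct calculation exploiting the defining property of permutation matrices, namely $\mP^\top \mP = \mI$ (equivalently, $\mP^{-1} = \mP^\top$). Both directions of the ``iff'' reduce to inserting or removing this identity in the middle of a product.

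For the forward direction, I would assume $\mA \vx = \lambda \vx$ for some scalar $\lambda$ and compute
\begin{equation*}
(\mP \mA \mP^\top)(\mP \vx) \;=\; \mP \mA (\mP^\top \mP) \vx \;=\; \mP \mA \vx \;=\; \lambda\, \mP \vx,
\end{equation*}
which is the desired eigenvalue equation for $\mP \mA \mP^\top$ at $\mP\vx$. For the converse, I would start from $(\mP \mA \mP^\top)(\mP \vx) = \mu (\mP \vx)$, left-multiply both sides by $\mP^\top$, and again use $\mP^\top \mP = \mI$ to recover $\mA \vx = \mu \vx$.

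The only subtlety, if one demands eigenvectors be nonzero, is to note that $\vx \neq \bm{0} \iff \mP \vx \neq \bm{0}$, which is immediate because $\mP$ is invertible. There is no genuine obstacle in this lemma; it is essentially a one-line change-of-basis identity, stated separately because it is invoked several times in the proof of Theorem~\ref{theorem-stable} (together with the companion fact that the eigenvalues of $\mA$ and $\mP \mA \mP^\top$ coincide, which follows by the same argument applied to all eigenpairs simultaneously).
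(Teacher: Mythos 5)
Your proof is correct and matches the paper's argument: both reduce the claim to inserting or cancelling $\mP^\top\mP=\mI$ in the product and using invertibility of $\mP$, the paper simply writing the two directions as a single chain of equivalences. Your added remark about $\vx\neq\bm{0}\iff\mP\vx\neq\bm{0}$ is a harmless (and slightly more careful) addition.
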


\begin{proof}
    \begin{align}
        \text{$\mP \vx$ is an eigenvector of $\mP \mA \mP^\top$} &\stackclap{(a)}{\iff} \exists \lambda \in \mathbb{R}, \, \mP \mA \mP^\top \mP \vx = \lambda \mP \vx \\
        &\stackclap{(b)}{\iff} \exists \lambda \in \mathbb{R}, \, \mP \mA \vx = \lambda \mP \vx \\
        &\stackclap{(c)}{\iff} \exists \lambda \in \mathbb{R}, \, \mP \mA \vx = \mP \lambda \vx \\
        &\stackclap{(d)}{\iff} \exists \lambda \in \mathbb{R}, \, \mA \vx = \lambda \vx \\
        &\stackclap{(e)}{\iff} \text{$\vx$ is an eigenvector of $\mA$} \, ,
    \end{align}
    where (a) is the definition of eigenvector, (b) holds because permutation matrices are orthogonal, (c) holds by linearity of matrix-vector multiplication, (d) holds because permutation matrices are invertible, and (e) is the definition of eigenvector.
\end{proof}

\begin{lemma}[Permutation invariance of eigenvalues] \label{lemma:permutation-invariance-of-eigenvalues}
    Let $\mA \in \mathbb{R}^{n \times n}$ and $\mP \in \mathrm{P}(n)$. Then $\lambda \in \mathbb{R}$ is an eigenvalue of $\mP \mA \mP^\top$ iff $\lambda$ is an eigenvalue of $\mA$.
\end{lemma}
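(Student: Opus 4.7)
The plan is to derive this lemma as a direct corollary of the immediately preceding Lemma \ref{lemma:permutation-equivariance-of-eigenvectors}. That lemma establishes that $\mP\vx$ is an eigenvector of $\mP\mA\mP^\top$ if and only if $\vx$ is an eigenvector of $\mA$, and a quick inspection of its proof shows that the associated scalar $\lambda$ is the \emph{same} on both sides (step (c) of that proof keeps $\lambda$ fixed while just pulling $\mP$ outside by linearity). So the eigenvector bijection automatically preserves eigenvalues, which is exactly the content of the present lemma.

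Concretely, I would argue as follows. Suppose $\lambda$ is an eigenvalue of $\mA$; then there exists a nonzero $\vx \in \mathbb{R}^n$ with $\mA \vx = \lambda \vx$. Set $\vy = \mP \vx$; since $\mP$ is invertible, $\vy \neq \mathbf{0}$. By Lemma \ref{lemma:permutation-equivariance-of-eigenvectors} (or unfolding its one-line computation directly via $\mP\mA\mP^\top\mP\vx = \mP\mA\vx = \mP\lambda\vx = \lambda\mP\vx$) one gets $\mP \mA \mP^\top \vy = \lambda \vy$, so $\lambda$ is an eigenvalue of $\mP \mA \mP^\top$. The converse is symmetric: given a nonzero eigenvector $\vy$ of $\mP \mA \mP^\top$ with eigenvalue $\lambda$, set $\vx = \mP^\top \vy$ and repeat the argument, using $\mP^\top\mP = \mI$.

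An equally short alternative, should I prefer a proof independent of the preceding lemma, is to observe that $\mP \mA \mP^\top$ and $\mA$ are similar matrices (since $\mP^{-1} = \mP^\top$ for permutation matrices), and therefore have identical characteristic polynomials: $\det(\mP \mA \mP^\top - \lambda \mI) = \det(\mP(\mA - \lambda\mI)\mP^\top) = \det(\mP)\det(\mA - \lambda\mI)\det(\mP^\top) = \det(\mA - \lambda\mI)$, using $\det(\mP)\det(\mP^\top) = \det(\mP\mP^\top) = 1$. Either route is essentially trivial; there is no genuine obstacle. The statement is simply invariance of the spectrum under similarity, specialized to orthogonal conjugation by a permutation matrix, and I would most likely present the eigenvector-based version to keep the appendix stylistically uniform with the preceding lemma.
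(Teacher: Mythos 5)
Your proof is correct and follows essentially the same route as the paper's: both work directly with the eigenvalue equation, passing between a nonzero $\vy$ and $\vx=\mP^\top\vy$ (equivalently $\vy=\mP\vx$) and using $\mP\mP^\top=\mI$, the paper merely writing it as a single chain of equivalences rather than two symmetric directions. The similarity/characteristic-polynomial alternative you mention is also valid but unnecessary here.
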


\begin{proof}
    \begin{align}
        \text{$\lambda$ is an eigenvalue of $\mP \mA \mP^\top$} &\stackclap{(a)}{\iff} \exists \mathbf{y} \neq \mathbf{0}, \, \mP \mA \mP^\top \mathbf{y} = \lambda \mathbf{y} \\
        &\stackclap{(b)}{\iff} \exists \mathbf{y} \neq \mathbf{0}, \, \mP \mA \mP^\top \mathbf{y} = \lambda \mP \mP^\top \mathbf{y} \\
        &\stackclap{(c)}{\iff} \exists \mathbf{y} \neq \mathbf{0}, \, \mP \mA \mP^\top \mathbf{y} = \mP \lambda \mP^\top \mathbf{y} \\
        &\stackclap{(d)}{\iff} \exists \vx \neq \mathbf{0}, \, \mP \mA \vx = \mP \lambda \vx \\
        &\stackclap{(e)}{\iff} \exists \vx \neq \mathbf{0}, \, \mA \vx = \lambda \vx \\
        &\stackclap{(f)}{\iff} \text{$\lambda$ is an eigenvalue of $\mA$} \, ,
    \end{align}
    where (a) is the definition of eigenvalue, (b) holds because permutation matrices are orthogonal, (c) holds by linearity of matrix-vector multiplication, (d)-(e) hold because permutation matrices are invertible, and (f) is the definition of eigenvalue.
\end{proof}

\section{Experimental details and Additional Results}
\label{exp_appendix}

\subsection{Implementation of SPE}
\label{app:spe}
SPE includes parameterized permutation equivariant functions $\rho:\mathbb{R}^{n\times n\times m}\to\mathbb{R}^{n\times p}$ and $\phi_{\ell}:\mathbb{R}^d\to\mathbb{R}^d$. 

For $\phi_{\ell}$, we treat input $\vbf{\lambda}$ as $d$ vectors with input dimension 1 and use either elementwise-MLPs, i.e., $[\phi_{\ell}(\vbf{\lambda})]_i=\mathrm{MLP}(\lambda_i)$, or Deepsets to process them. We also use piecewise cubic splines, which is a $\mathbb{R}$ to $\mathbb{R}$ piecewise function with cubic polynomials on each piece. Given number of pieces as a hyperparameter, the piece interval is determined by uniform chunking $[0, 2]$, the range of eigenvalues. The learnable parameters are the coefficients of cubic functions for each piece. To construct $\phi_{\ell}$, we simply let one piecewise cubic spline elementwisely act on each individual eigenvalues:
\begin{equation}
    [\phi_{\ell}(\vbf{\lambda})]_i=\mathrm{spline}(\lambda_i).
\end{equation}

For $\rho$, in principle any permutation equivariant tensor neural networks can be applied. But in our experiments we adapt GIN as $\rho$. Here is how: for input $\mA\in\mathbb{R}^{n\times n\times m}$, we first partition $\mA$ along the second axis into $n$ many matrices $\mA_i\in\mathbb{R}^{n\times m}$ (in code we do not actually need to partition them since parallel matrix multiplication does the work). Then we treat $\mA_i$ as node features of the original graph and independently and identically apply a GIN to this graph with node features $\mA_i$. This will produce node representations $\mZ_i\in\mathbb{R}^{n\times p}$ from $\mA_i$. Finally we let $\mZ=\sum_{i=1}^n\mZ_i$ be the final output of $\rho$. Note that this whole process makes $\rho$ permutation equivariant.

\subsection{Implementation of baselines}
For PEG, we follow the formula of their paper and augment edge features by
\begin{equation}
    e_{i,j}\leftarrow e_{i,j} \cdot \mathrm{MLP}(\norm{\mV_{i, :}-\mV_{j, :}}).
\end{equation}

For SignNet/BasisNet, we refer to their public code release at \hyperlink{https://github.com/cptq/SignNet-BasisNet}{https://github.com/cptq/SignNet-BasisNet}. \textcolor{blue}{}Specifically, SignNet uses GINs as $\phi$ and BasisNet uses 2-IGN as $\phi$. Note that the original BasisNet does not support inductive learning, i.e., it cannot even apply to new graph structures. This is because it has separate weights for eigensubspaces with different dimension. Here we simply initialize an unlearned weights for eigensubspaces with unseen dimensions.

\subsection{Other training details}
We use Adam optimizer with an initial learning rate 0.001 and 100 warm-up steps. We adopt a linear decay learning rate scheduler. Batch size is 128 for ZINC, Alchemy and substructures counting, 64 for DrugOOD.

\subsection{Controlling Lipschitz constant of MLPs}
Here we state how we control the Lipschitz constant of MLPs in Section 5.3. Technically, by Lipschitz constant we actually mean an upper bound for best Lipschitz constant (the minimal Lipschitz constant for function). Note that for a compositition of Lipschitz functions $(f_1\circ f_2\circ ...\circ f_k)$ with individual Lipschitz constants $C_1, C_2, ...,C_k$, the product $C_1\cdot C_2\cdot ...\cdot C_k$ is a valid Lipschitz constant. A MLP consists of linear layers and ReLU activation. The Lipschitz constants of linear layers are simply the operator norm of weight matrices, while ReLU is 1-Lipschitz. So we can easily get the Lipschitz constant of MLPs by multiplying the operator norm of weight matrices. As a result, we can control the Lipschitz constant to be $C$ by first normalizing weight matrices to be unit norm and then multiply a constant $C^{1/k}$ between layers assuming there are $k$ layers.

\subsection{\textcolor{blue}{}Generalization gap on ZINC}
\textcolor{blue}{}We show the training loss and the generalization gap (test loss - training loss) on ZINC dataset as shown below. These loss are all evaluated at the epoch with minimal validation loss. We can see that though SignNet and BasisNet achieve a pretty low training MAE (high expressive power), their generalization gap is larger than other baselines (poor stability) and thus the final test MAE is not the best. For baseline GNN and PEG, they are pretty stable with small generalization gap, but the poor expressive power make them hard to fit the dataset well (training loss is high). In contrast, SPE has not only a lowest training MAE (high expressive power) but also a small generalization gap (good stability). That is why it can obtain the best test performance among all the models.
 \begin{table}[ht!]
\caption{Test/training MAE and generalization gap results (4 random seeds) on ZINC. Bold-face black, blue and pink are used to denote the \textbf{first}, \textcolor{blue}{\textbf{second}} and \textcolor{pink}{\textbf{third}}  best method for each column (each method only counts once as there are multiple configurations).}
\centering
\resizebox{\linewidth}{!}{%
\begin{tabular}{llccccc}
\hline
\Xhline{2\arrayrulewidth}
Dataset            & PE method & \#PEs & \#param & Test MAE & Training MAE & General. Gap      \\ \hline
\multirow{9}{*}{ZINC} & No PE     & N/A   & 575k        & $0.1772_{\pm 0.0040}$ & $0.1509_{\pm0.0086}
$& $\bm{0.0263_{\pm 0.0113}}$ \\
                      & PEG    & 8     &  512k       &   $\color{pink}\bm{0.1444_{\pm 0.0076}}$ &$0.1088_{\pm0.0066}
$ &$0.0382_{\pm 0.0100}$\\
                      & PEG    & Full     &  512k       &  $0.1878_{\pm 0.0127}$ & $0.1486_{\pm 0.0191}
$& $\color{blue}\bm{0.0342_{\pm 0.0206}}$ \\
                      & SignNet   & 8     &    631k     & $0.1034_{\pm 0.0056}$ & $0.0418_{\pm0.0101}$&$0.0602_{\pm 0.0112}$\\
                      & SignNet   & Full  &   662k    & $\color{blue}\bm{0.0853_{\pm 0.0026}}$ & $\color{blue}\bm{0.0349_{\pm0.0078}}
$&$0.0502_{\pm 0.0103}$\\
                      & BasisNet & 8  & 442k  &$0.1554_{\pm 0.0068}$ & $\color{pink}\bm{0.0513_{\pm	0.0053}}$&$0.1042_{\pm 0.0063}$\\
                      & BasisNet & Full & 513k  &$0.1555_{\pm 0.0124}$ & $0.0684_{\pm0.0202}
$&$0.0989_{\pm 0.0258}$\\
                      & SPE      & 8     &    635k     & $0.0736_{\pm 0.0007}$ & $\bm{0.0324_{\pm 0.0058}}
$&$0.0413_{\pm 0.0057}$\\
                      & SPE      & Full  &    650k     & $\bm{0.0693_{\pm 0.0040}}$&$0.0334_{\pm0.0054}
$ & $\color{pink}\bm{0.0359_{\pm0.0087}}$ \\ \hline

\Xhline{2\arrayrulewidth}
\end{tabular}}
\label{exp-zinc}
\end{table}

\subsection{\textcolor{blue}{}Running time evaluation}
 \textcolor{blue}{}We evaluate the running time of SPE and other baselines on ZINC and DrugOOD. The results represents the training/inference time on the whole training dataset (ZINC or DrugOOD) over 5 trials. We can see that the speed SPE is overall comparable to SignNet, and is much faster than BasisNet. This is possibly because BasisNet has to deal with the irregular and length-varying input $[V_1V_1^{\top}, V_2V_2^{\top}, ...]$, which is hard for parallel computation in a batch, while SPE simply needs to deal with the more uniform $V\text{diag}(\phi(\lambda))V^{\top}$.
\begin{table}[h!]
\caption{Training and inference time (average over 5 trials) on ZINC and DrugOOD, evaluated on the whole training dataset. GPU is Quadro RTX 6000 }
\centering
\resizebox{0.8\linewidth}{!}{%
\begin{tabular}{llccc}
\hline
\Xhline{2\arrayrulewidth}
Dataset            & PE method & \#PEs & Train Time (s) & Inference Time (s)\\ \hline
\multirow{9}{*}{ZINC} & No PE     & N/A   &$3.319_{\pm 0.400}$ & $2.852_{\pm 0.310}$\\
                      & PEG    & 8     &  $3.785_{\pm 0.424}$ & $3.590_{\pm 0.341}$        \\
                      & PEG    & Full     & $3.639_{\pm 0.387}$ & $3.518_{\pm 0.318}$  \\
                      & SignNet   & 8     & $8.724_{\pm 0.686}$ & $3.546_{\pm 0.366}$  \\
                      & SignNet   & Full  & $23.157_{\pm 1.932}$ & $7.883_{\pm 0.374}$\\
                      & BasisNet & 8  & $49.923_{\pm 6.391}$ & $18.295_{\pm 0.569}$\\
                      & BasisNet & Full & $66.176_{\pm 4.015}
$ & $27.546_{\pm 0.622}$\\
                      & SPE      & 8     &   $10.888_{\pm 0.416}$  & $5.336_{\pm 0.738}$\\
                      & SPE      & Full  & $11.576_{\pm 0.472}$ & $5.406_{\pm 0.499}$ \\ \hline
\multirow{5}{*}{DrugOOD}  & No PE     & N/A   & $13.560_{\pm0.657}$ & $4.260_{\pm 0.140}$\\
                      & PEG    & 32     & $14.212_{\pm 1.084}$  & $4.780_{\pm 0.351}$\\
                      & SignNet   & 32  &   $30.705_{\pm 0.723}$ & $12.844_{\pm 0.307}$      \\ 
                      & BasisNet & 32 &   $199.364_{\pm 2.807}$ & $86.529_{\pm 1.693}$
                       \\
                      & SPE      & 32  &  $37.577_{\pm 0.833}$ & $21.706_{\pm 0.850}$
 \\
\Xhline{2\arrayrulewidth}
\end{tabular}}
\label{exp-zinc}
\end{table}

\textcolor{blue}{}To see how complexity grows with graph size, we construct Erdos–Renyi random graphs for different graph sizes, ranging from 10 to 320. For each graph size, we construct 1,000 such random graphs with fixed node degree 2.5. Then we train and test each methods on these 1,000 graph for 10 epochs to estimate the time complexity. For fairness, each model has 60k parameters. By default we use batch size 50, and if it is out-of-memory (OOM), we use batch size 5 then. It batch size 5 still leads to OOM, we will denote OOM in the results. Below we report the average training/inference time per epoch.

\begin{table}[h!]
\centering
\begin{tabular}{c|lllll}
\hline
Graph size & GINE & PEG & SignNet & SPE & BasisNet \\ \hline
10         &  $0.505_{\pm 0.277}$    & $0.574_{\pm 0.321}$    &  $1.750_{\pm 0.332}$       &  $1.357_{\pm 0.321}$        & $5.185_{\pm 0.407}$    \\
20         & $0.535_{\pm 0.275}$     & $0.631_{\pm 0.336}$    &  $1.648_{\pm 0.325}$       &   $1.323_{\pm 0.309}$       &  $3.418_{\pm 0.323}$   \\
40         & $0.561_{\pm 0.297}$     & $0.644_{\pm 0.332
}$    &   $2.293_{\pm 0.362}$      &  $1.507_{\pm 0.352}$        & $6.019_{\pm 0.305}$    \\
80         &$0.609_{\pm 0.294}$      & $0.750_{\pm 0.338}$    &  $5.810_{\pm 0.335}$       &  $4.030_{\pm 0.369}$        & $40.848_{\pm 0.359}$    \\
160        & $1.056_{\pm 0.271}$     &$1.410_{\pm 0.356}$     & $21.153_{\pm 0.275}$         &   $55.256_{\pm 0.757}$       &  OOM   \\
320        &$2.714_{\pm 0.411}$      &$4.403_{\pm 0.425}$     &    $83.833_{\pm 0.168}$     &  OOM         &  OOM    \\ \hline
\end{tabular}
\caption{Average training time (s) on random graph dataset over 10 epochs.}
\end{table}

\begin{table}[h!]
\centering
\begin{tabular}{c|lllll}
\hline
Graph size & GINE & PEG & SignNet & SPE & BasisNet \\ \hline
10         &  $0.125_{\pm 0.001}$    & $0.154_{\pm 0.001}$    &  $0.465_{\pm 0.002}$       &  $0.385_{\pm 0.002}$        & $2.326_{\pm 0.248}$    \\
20         & $0.163_{\pm 0.001}$     & $0.204_{\pm 0.001}$    &  $0.434_{\pm 0.002}$       &   $0.299_{\pm 0.001}$       &  $1.659_{\pm 0.006}$   \\
40         & $0.174_{\pm 0.001}$     & $0.212_{\pm 0.002
}$    &   $0.986_{\pm 0.002}$      &  $0.543_{\pm 0.002}$        & $3.167_{\pm 0.101}$    \\
80         &$0.213_{\pm 0.003}$      & $0.307_{\pm 0.004}$    &  $2.976_{\pm 0.017}$       &  $2.093_{\pm 0.007}$        & $22.054_{\pm 0.017}$    \\
160        & $0.512_{\pm 0.035}$     &$0.778_{\pm 0.032}$     & $11.886_{\pm 0.134}$         &   $38.747_{\pm 0.247
}$       &  OOM   \\
320        &$1.500_{\pm 0.098}$      &$2.841_{\pm 0.113}$     &    $48.179
_{\pm 0.264}$     &  OOM         &  OOM    \\ \hline
\end{tabular}
\caption{Average inference/test time (s) on random graph dataset over 10 epochs.}
\end{table}

\subsection{Ablation study}
One key component of SPE is to leverage eigenvalues using $\phi_{\ell}(\vbf{\lambda})$. Here We try removing the use of eigenvalues, i.e., set $\phi_{\ell}(\vbf{\lambda})=1$ to see the difference. Mathematically, this will result in $\mathrm{SPE}(\mV, \vbf{\lambda})=\rho([\mV\mV^{\top}]_{\ell=1}^m)$. This is pretty similar to PEG and we loss expressive power from this over-stable operation $\mV\mV^{\top}$. As shown in the table below, removing eigenvalue information leads to a dramatic drop of performance on ZINC-subset. Therefore, the processing of eigenvalues is an effective and necessary design in our method.

\begin{table}[h!]
\centering
\begin{tabular}{ll}
\hline
Method                   & Test MAE              \\ \hline
SPE ($\phi_{\ell}=$MLPs) &  $0.0693_{\pm 0.0040}$                     \\
SPE ($\phi_{\ell}=0$)    & $0.1230_{\pm 0.0323}$ \\ \hline
\end{tabular}
\caption{Abalation study for SPE on ZINC (subset).}
\end{table}

\subsection{\textcolor{blue}{}More results on TUDatasets}
\textcolor{blue}{}
We further conduct experiments on TUDatasets. For each task, we randomly split dataset into training, validation and test by 8:1:1. We uniformly use batch size 128 and train 250 epoch. Architectures and hyperparameters follows the same ones as on ZINC. We report the test accuracy at the epoch with highest validation accuracy over 5 random seeds. See Table \ref{tud} for results.
\begin{table}[h!]
\centering
\begin{tabular}{llllll}
\hline
         & GINE               & EPG                & SignNet            & BasisNet           & SPE                \\ \hline
PROTEINS & $71.07_{\pm 4.03}$ & $70.71_{\pm 7.20}$ & $73.21_{\pm 2.67}$ & $63.57_{\pm 2.42}$ & $74.82_{\pm 4.72}$ \\
ENZYMES  &     $51.33_{\pm 8.96}$& $57.00_{\pm 2.50}$                    &  $45.00_{\pm 8.45}$                  &  $32.00_{\pm 4.08}$                  &   $52.34_{\pm 6.24}$            \\
PTC\_MR  &  $54.86_{\pm 4.04}$                  &  $58.29_{\pm 8.85}$                  &   $54.86_{\pm 12.67}$                 &   $57.14_{\pm 11.67}$                 & $58.29_{\pm 8.08}$                   \\
MUTUG    &      $85.00_{\pm 6.29}$              & $84.00_{\pm 7.50}$                   & $85.00_{\pm 14.36}$                   &    $79.00_{\pm 8.54}$                &        $89.00_{\pm 4.08}$           \\ \hline
\end{tabular}
\caption{Test accuracy over 5 random seeds on TUDatasets.}
\label{tud}
\end{table}

\section{\textcolor{blue}{}Why previous positional encodings are unstable?}
\label{why-unstable}
\subsection{All sign-invariant methods are unstable}
One line of work \citep{dwivedi2021generalization, kreuzer2021rethinking, signnet} is to consider the sign ambiguity of each individual eigenvectors and aim to make positional encodings invariant to sign flipping. The underlying assumption is that eigenvalues are all distinct so eigenvectors are equivalent up to a sign transformation. However, we are going to show that all these sign-invariant methods are \textbf{unstable}, regardless of the eigenvalues being distinct or not.

\textcolor{blue}{}Firstly, suppose eigenvalues are distinct. Lemma 3.4 in \citet{peg} states that
\begin{lemma}
For any positive semi-definite matrix $B\in\mathbb{R}^{N\times N}$ without multiple eigenvalues, set positional encoding $\text{PE}(B)$ as the eigenvectors given by the smallest $p$ eigenvalues sorted as $0=\lambda_1<\lambda_2<...<\lambda_p(<\lambda_{p+1})$ of $B$. For any suffciently small $\epsilon>0$, there exists a perturbation $\Delta B$, $\norm{B}_F\le \epsilon$ such that
\begin{equation}
    \min_{S\in \text{SN}(p)}\norm{\text{PE}(B)-\text{PE}(B+\Delta B)}\ge 0.99\max_{1\ge i\le p}|\lambda_{i+1}-\lambda_i|^{-1}\norm{\Delta B}_{F}+o(\epsilon),
\end{equation}
where $\text{SN}(p)=\{Q\in\mathbb{R}^{p\times p}:Q_{i,i}=\pm 1, Q_{i,j}=0, \text{for } i\neq j\}$ is the sign flipping operations.
\end{lemma}
This Lemma shows that when there are two closed eigenvalues, a small perturbation to graph may still yield a huge change of eigenvectors that cannot be compensated by sign flipping. Therefore, these sign-invariant methods are highly unstable on graphs with distinct but closed eigenvalues.

\textcolor{blue}{}On the other hand, if eigenvalues have repeated values, then the same graph may produce different eigenvectors that are associated by basis transformations. Simply invariant to sign flipping cannot handle this basis ambiguity. As a result, these sign-invariant methods will produce different positional encodings for the same input graph. That means there is no stability gurantee for them at all. \\

\subsection{BasisNet is unstable}

Another line of work (e.g, BasisNet~\citep{signnet}) further consider basis invariance of eigenvectors by separately dealing with each eigensubspaces instead of each individual eigenvectors. 
The idea is to first partition eigenvectors $V\in\mathbb{R}^{n\times d}$ into their corresponding eigensubspace $(V_1, V_2, ...)$ according to eigenvalues, where $V_k\in\mathbb{R}^{n\times d_k}$ is the eigenvectors in $k$-th eigensubspace of dimension $d_k$. Then neural networks $\phi_{d_k}:\mathbb{R}^{n\times n}\to\mathbb{R}^{n\times p}$ is applied to each $V_kV_k^{\top}$ and the output will be $\rho(\phi_{d_1}(V_1V_1^{\top}), \phi_{d_2}(V_2V_2^{\top}), ...)$ where $\rho:\mathbb{R}^{n\times (d\cdot p)}\to\mathbb{R}^{n\times p}$ is a MLP. Intuitively, this method is unstable because a perturbation of graph can change the dimension of eigensubspace and thus dramatically change the input $(V_1, V_2,...)$. As an example, let us say we have three eigenvectors ($d=3$), and denote the three columns of $V$ as $u_1, u_2, u_3$. We construct two graphs: the original graph $A$ has $\lambda_1=\lambda_2<\lambda_3$ while the perturbed graph $A^{\prime}$ has $\lambda^{\prime}_1<\lambda^{\prime}_2<\lambda^{\prime}_3$. Two graphs share the same eigenvectors. Note that the difference between $A$ and $A^{\prime}$ can be arbitrarily small to make $\lambda_2^{\prime}$ a little bit different from $\lambda_2$. BasisNet will produce the following embeddings:
$$\text{BasisNet}(A)=\rho(\phi_{2}(u_1u_1^{\top}+u_2u_2^{\top}), \phi_1(u_3u_3^{\top}))$$
$$\text{BasisNet}(A^{\prime})=\rho(\phi_1(u_1u_1^{\top}), \phi_1(u_2u_2^{\top}), \phi_1(u_3u_3^{\top})).$$
Clearly as the input to $\rho$ are completly different, there is no way to ensure stability even if $\rho$ and $\phi$ are continuous.

\begin{figure}[t!]
    \centering
    \includegraphics[scale=0.5]{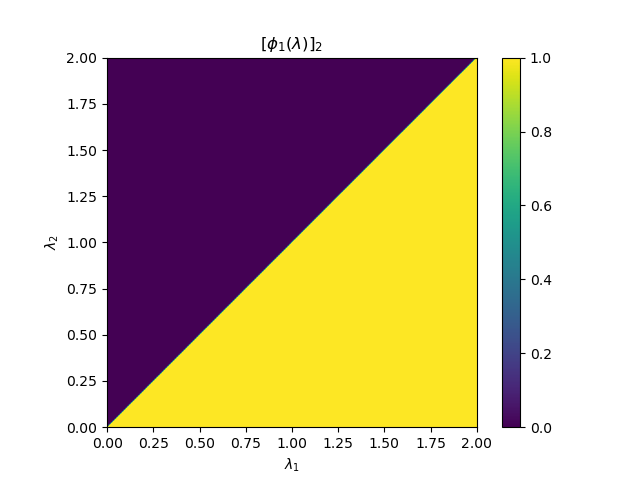}
    \caption{A illustration of $[\phi_1(\vbf{\lambda})]_2$ for $\vbf{\lambda}=(\lambda_1, \lambda_2)$. Clearly $\phi_1$ is discontinuous.}
\end{figure}

\subsection{Why Stability Theorem 3.1 cannot be applied for previous methods}
One may wonder why we cannot prove the stability of previous hard-partition methods following the same argument in Theorem 3.1. The reason is that they do not satisfy Assumption 3.1 or the requirement that $\phi$ is equivariant, both of which are the key to prove stability result in Theorem 3.1. 

\textcolor{blue}{}To see this, let us first consider sign-invariant methods (e.g., SignNet). Using the notation of SPE, it is equivalent to use a $\phi_{\ell}(\lambda)$ to separate the $\ell$-th eigenvectors, thus whose $k$-th entry is
\begin{equation}
    [\phi_{\ell}(\lambda)]_k = \delta_{k,\ell}.
\end{equation}
This $\phi$ function does not rely on $\lambda$ and thus is Lipschitz continuous. However, it is not \textbf{permutation equivariant} to $\lambda$ vector. So it violates ``$\phi$ is always permutation equivariant'' as stated in the definition of SPE, and thus we cannot apply Theorem 3.1 to sign-invariant methods (and they are indeed unstable as shown before). \\
On the other hand, let us consider basis-invariant methods using hard partition of eigensubspaces (e.g., BasisNet). In this case, the $k$-th entry of the hard partition $\phi_{\ell}(\lambda)$ is $$[\phi_{\ell}(\lambda)]_k=\left\{\begin{aligned}&1, \quad \text{if $\lambda_k$ is $\ell$-th smallest eigenvalue}\\& 0, \quad\text{otherwise}\end{aligned}\right.$$ 
This $\phi$ function is actually discontinuous. As an example, we may consider $\lambda=(\lambda_1, \lambda_2)$ and plot the figure of $[\phi_1(\lambda)]_2$ below. Clearly there is a sharp transition from $0$ to $1$ when $\lambda_2$ approches $\lambda_1$.\\
Therefore, $\phi_{\ell}(\lambda)$ is more like a \textbf{step function} instead of a constant function. They are \textbf{discontinuous}, and thus are not Lipschitz continuous. So Assumption 3.1 does not hold and Theorem 3.1 cannot be applied for such hard partition functions.

\end{document}